\documentclass[twoside]{article}

\usepackage[accepted]{aistats2026}

\usepackage{url}
\usepackage{xspace}
\usepackage{paralist}

\usepackage[utf8]{inputenc} 
\usepackage[T1]{fontenc}    
\usepackage{hyperref}       
\usepackage{url}            
\usepackage{booktabs}       
\usepackage{amsfonts}       
\usepackage{nicefrac}       
\usepackage{microtype}      
\usepackage{xcolor}         
\usepackage{amsmath}
\usepackage{mathtools}
\usepackage{natbib}
\usepackage{cite}
\usepackage{xspace}
\usepackage{empheq}
\usepackage{amsmath}
\usepackage{amssymb}
\usepackage{mathtools}
\usepackage{amsthm}
\usepackage{float}
\usepackage{url,enumerate}
\usepackage{color,xcolor}
\usepackage{makeidx}  
\usepackage{amsmath,amssymb}
\usepackage{mathtools}
\usepackage[small, compact]{titlesec}
\usepackage{xspace}
\usepackage{epstopdf}
\usepackage{mathrsfs}
\usepackage{times}
\usepackage{enumerate}
\usepackage{color}
\usepackage{graphicx,epsfig}
\usepackage{amsmath,amssymb,xspace}
\usepackage{url}
\usepackage{bm}
\usepackage{bbm}
\usepackage{upgreek}
\usepackage{multirow}
\usepackage{ulem}
\usepackage{caption}
\usepackage{subcaption}
\usepackage{adjustbox}

\theoremstyle{plain}
\newtheorem{theorem}{Theorem}[section]
\newtheorem{proposition}[theorem]{Proposition}
\newtheorem{lemma}[theorem]{Lemma}

\theoremstyle{definition}
\newtheorem{definition}[theorem]{Definition}
\newtheorem{assumption}[theorem]{Assumption}
\theoremstyle{remark}

\newcommand{\cmt}[1]{}

\newcommand{\ours}{{TGPS}\xspace}

\newcommand{\sks}{{SKS}\xspace}
\newcommand{\yifan}{{DAKS}\xspace}

%
%

%
%



\begin{document}
\newcommand{\var}{{\rm var}}
\newcommand{\vtrans}[2]{{#1}^{(#2)}}
\newcommand{\kron}{\otimes}
\newcommand{\gp}{\mathcal{GP}}
\newcommand{\schur}[2]{({#1} | {#2})}
\newcommand{\schurdet}[2]{\left| ({#1} | {#2}) \right|}
\newcommand{\had}{\circ}
\newcommand{\diag}{{\rm diag}}
\newcommand{\invdiag}{\diag^{-1}}
\newcommand{\rank}{{\rm rank}}
\newcommand{\expt}[1]{\langle #1 \rangle}
\newcommand{\whalpha}{\widehat{\alpha}}
\newcommand{\nullsp}{{\rm null}}
\newcommand{\tr}{{\rm tr}}
\renewcommand{\vec}{{\rm vec}}
\newcommand{\vech}{{\rm vech}}
\renewcommand{\det}[1]{\left| #1 \right|}
\newcommand{\pdet}[1]{\left| #1 \right|_{+}}
\newcommand{\pinv}[1]{#1^{+}}
\newcommand{\erf}{{\rm erf}}
\newcommand{\hypergeom}[2]{{}_{#1}F_{#2}}
\newcommand{\mcal}[1]{\mathcal{#1}}

\renewcommand{\a}{{\bf a}}
\renewcommand{\b}{{\bf b}}
\renewcommand{\c}{{\bf c}}
\renewcommand{\d}{{\rm d}}  
\newcommand{\e}{{\bf e}}
\newcommand{\f}{{\bf f}}
\newcommand{\g}{{\bf g}}
\newcommand{\h}{{\bf h}}
\newcommand{\bi}{{\bf i}}
\newcommand{\bj}{{\bf j}}
\newcommand{\bK}{{\bf K}}
\renewcommand{\k}{{\bf k}}
\newcommand{\m}{{\bf m}}
\newcommand{\mhat}{{\overline{m}}}
\newcommand{\tm}{{\tilde{m}}}
\newcommand{\n}{{\bf n}}
\renewcommand{\o}{{\bf o}}
\newcommand{\p}{{\bf p}}
\newcommand{\q}{{\bf q}}
\renewcommand{\r}{{\bf r}}
\newcommand{\s}{{\bf s}}
\renewcommand{\t}{{\bf t}}
\renewcommand{\u}{{\bf u}}
\renewcommand{\v}{{\bf v}}
\newcommand{\w}{{\bf w}}
\newcommand{\x}{{\bf x}}
\newcommand{\y}{{\bf y}}
\newcommand{\z}{{\bf z}}
\newcommand{\bl}{{\bf l}}
\newcommand{\A}{{\bf A}}
\newcommand{\B}{{\bf B}}
\newcommand{\C}{{\bf C}}
\newcommand{\D}{{\bf D}}
\newcommand{\Dcal}{\mathcal{D}}
\newcommand{\Ocal}{\mathcal{O}}
\newcommand{\F}{{\bf F}}
\newcommand{\G}{{\bf G}}
\newcommand{\Gcal}{{\mathcal{G}}}
\newcommand{\Rcal}{{\mathcal{R}}}

\renewcommand{\H}{{\bf H}}
\newcommand{\I}{{\bf I}}
\newcommand{\J}{{\bf J}}
\newcommand{\K}{{\bf K}}
\renewcommand{\L}{{\bf L}}
\newcommand{\Lcal}{{\mathcal{L}}}
\newcommand{\M}{{\bf M}}
\newcommand{\Mcal}{{\mathcal{M}}}
\newcommand{\Ecal}{{\mathcal{E}}}
\newcommand{\N}{\mathcal{N}}  
\newcommand{\TN}{\mathcal{TN}}  
\newcommand{\MN}{\mathcal{MN}}
\newcommand{\bupeta}{\boldsymbol{\upeta}}
\newcommand{\kl}{{\text{KL}}}
\renewcommand{\O}{{\bf O}}
\renewcommand{\P}{{\bf P}}
\newcommand{\Q}{{\bf Q}}
\renewcommand{\S}{{\bf S}}
\newcommand{\Scal}{{\mathcal{S}}}
\newcommand{\Bcal}{{\mathcal{B}}}
\newcommand{\Pcal}{{\mathcal{P}}}
\newcommand{\T}{{\bf T}}
\newcommand{\Tcal}{{\mathcal{T}}}
\newcommand{\U}{{\bf U}}
\newcommand{\Ucal}{{\mathcal{U}}}
\newcommand{\tUcal}{{\tilde{\Ucal}}}
\newcommand{\V}{{\bf V}}
\newcommand{\W}{{\bf W}}
\newcommand{\Wcal}{{\mathcal{W}}}
\newcommand{\Vcal}{{\mathcal{V}}}
\newcommand{\X}{{\bf X}}
\newcommand{\Xcal}{{\mathcal{X}}}
\newcommand{\Acal}{{\mathcal{A}}}
\newcommand{\Y}{{\bf Y}}
\newcommand{\Ycal}{{\mathcal{Y}}}
\newcommand{\Z}{{\bf Z}}
\newcommand{\Zcal}{{\mathcal{Z}}}
\newcommand{\Hcal}{{\mathcal{H}}}
\newcommand{\Fcal}{{\mathcal{F}}}
\newcommand{\whL}{{\widehat{\Lcal}}}
\newcommand{\whJ}{{\widehat{J}}}

\newcommand{\bfLambda}{\boldsymbol{\Lambda}}

\newcommand{\bsigma}{\boldsymbol{\sigma}}
\newcommand{\balpha}{\boldsymbol{\alpha}}
\newcommand{\bpsi}{\boldsymbol{\psi}}
\newcommand{\bphi}{\boldsymbol{\phi}}
\newcommand{\bPhi}{\boldsymbol{\Phi}}
\newcommand{\cov}{{\text{cov}}}
\newcommand{\uhat}{{\widehat{u}}}
\newcommand{\fhat}{{\hat{f}}}

\newcommand{\bbeta}{\boldsymbol{\beta}}
\newcommand{\bepsi}{\boldsymbol{\epsilon}}
\newcommand{\boldeta}{\boldsymbol{\eta}}
\newcommand{\btau}{\boldsymbol{\tau}}
\newcommand{\bvarphi}{\boldsymbol{\varphi}}
\newcommand{\bzeta}{\boldsymbol{\zeta}}

\newcommand{\blambda}{\boldsymbol{\lambda}}
\newcommand{\bLambda}{\mathbf{\Lambda}}

\newcommand{\btheta}{{\boldsymbol{\theta}}}
\newcommand{\bTheta}{\boldsymbol{\Theta}}
\newcommand{\bpi}{\boldsymbol{\pi}}
\newcommand{\bxi}{\boldsymbol{\xi}}
\newcommand{\bSigma}{\boldsymbol{\Sigma}}
\newcommand{\bPi}{\boldsymbol{\Pi}}
\newcommand{\bOmega}{\boldsymbol{\Omega}}
\newcommand{\brho}{\boldsymbol{\rho}}

\newcommand{\bgamma}{\boldsymbol{\gamma}}
\newcommand{\bGamma}{\boldsymbol{\Gamma}}
\newcommand{\bUpsilon}{\boldsymbol{\Upsilon}}
\newcommand{\barZ}{\bar{Z}}
\newcommand{\barz}{\bar{z}}
\newcommand{\whatR}{\widehat{R}}

\newcommand{\bmu}{\boldsymbol{\mu}}
\newcommand{\1}{{\bf 1}}
\newcommand{\0}{{\bf 0}}

\newcommand{\bs}{\backslash}
\newcommand{\ben}{\begin{enumerate}}
\newcommand{\een}{\end{enumerate}}

 \newcommand{\notS}{{\backslash S}}
 \newcommand{\nots}{{\backslash s}}
 \newcommand{\noti}{{\backslash i}}
 \newcommand{\notj}{{\backslash j}}
 \newcommand{\nott}{\backslash t}
 \newcommand{\notone}{{\backslash 1}}
 \newcommand{\nottp}{\backslash t+1}

\newcommand{\notk}{{^{\backslash k}}}
\newcommand{\notij}{{^{\backslash i,j}}}
\newcommand{\notg}{{^{\backslash g}}}
\newcommand{\wnoti}{{_{\w}^{\backslash i}}}
\newcommand{\wnotg}{{_{\w}^{\backslash g}}}
\newcommand{\vnotij}{{_{\v}^{\backslash i,j}}}
\newcommand{\vnotg}{{_{\v}^{\backslash g}}}
\newcommand{\half}{\frac{1}{2}}
\newcommand{\msgb}{m_{t \leftarrow t+1}}
\newcommand{\msgf}{m_{t \rightarrow t+1}}
\newcommand{\msgfp}{m_{t-1 \rightarrow t}}

\newcommand{\proj}[1]{{\rm proj}\negmedspace\left[#1\right]}
 \newcommand{\argmin}{\operatornamewithlimits{argmin}}
 \newcommand{\argmax}{\operatornamewithlimits{argmax}}

\newcommand{\dif}{\mathrm{d}}
\newcommand{\abs}[1]{\lvert#1\rvert}
\newcommand{\norm}[1]{\lVert#1\rVert}

\newcommand{\ie}{{\textit{i.e.,}}\xspace}
\newcommand{\etc}{{\textit{etc}.}\xspace}
\newcommand{\eg}{{{\textit{e.g.},}}\xspace}
\newcommand{\EE}{\mathbb{E}}
\newcommand{\HH}{\mathbb{H}}
\newcommand{\sbr}[1]{\left[#1\right]}
\newcommand{\rbr}[1]{\left(#1\right)}
\newcommand{\zhe}[1]{{\textcolor{blue}{#1}}}
\newcommand{\Vtr}{\mathrm{Vec}}
\newcommand{\tlam}{{\tilde{\lambda}}}
\newcommand{\tp}{{\widetilde{p}}}
\newcommand{\tmu}{{\widetilde{\mu}}}
\newcommand{\tv}{{\widetilde{v}}}
\newcommand{\talpha}{{\widetilde{\alpha}}}
\newcommand{\tomega}{{\widetilde{\omega}}}
\newcommand{\bkh}{{\backslash}}
\newcommand{\whmu}{\widehat{\bmu}}
\newcommand{\whV}{\widehat{\V}}

\newcommand{\YM}[1]{\textcolor{blue}{\small {\sf YM: #1}}}


%
\runningtitle{Tensor Gaussian Processes: Efficient Solvers for Nonlinear PDEs}

%
\runningauthor{Yuan, Xu, Chen, Xu, Owhadi, Zhe}

\twocolumn[

\aistatstitle{Tensor Gaussian Processes: Efficient Solvers for Nonlinear PDEs}

\aistatsauthor{ Qiwei Yuan$^{1}$ \And Zhitong Xu$^{1}$ \And  Yinghao Chen$^{1}$ \And Yiming Xu$^{2}$ \And Houman Owhadi$^{3}$ \And Shandian Zhe$^{1}$}

\aistatsaddress{ $^{1}$ Kahlert School of Computing, University of Utah\\
                $^{2}$ Department of Mathematics, University of Kentucky\\
                $^{3}$ Department of Computing and Mathematical Sciences, California Institute of Technology
                }]

\begin{abstract}
Machine learning solvers for partial differential equations (PDEs) have attracted growing interest. However, most existing approaches, such as neural network solvers, rely on stochastic training, which is inefficient and typically requires a great many training epochs.  Gaussian process (GP)/kernel-based solvers, while mathematical principled, suffer from scalability issues when handling large numbers of collocation points often needed for challenging or higher-dimensional  PDEs.
 To overcome these limitations, we propose \ours, a tensor-GP-based solver  that introduces factor functions along each input dimension using one-dimensional GPs and combines them via tensor decomposition to approximate the full solution. This design reduces the task to learning a collection of one-dimensional GPs, substantially lowering computational complexity, and enabling scalability to massive collocation sets.
 For efficient nonlinear PDE solving, we use a partial freezing strategy and Newton's method to linerize the nonlinear terms. We then develop an alternating least squares (ALS) approach that admits closed-form  updates, thereby substantially enhancing the training efficiency. We establish theoretical guarantees on the expressivity of our model, together with convergence proof and error analysis  under standard regularity assumptions.  Experiments on several benchmark PDEs demonstrate that our method achieves superior accuracy and efficiency compared to existing approaches. The code  is released at \url{https://github.com/BayesianAIGroup/TGPSolve-NonLinear-PDEs}

\end{abstract}

\section{Introduction}
Machine learning (ML) solvers for partial differential equations (PDEs) have been receiving increasing attention due to their ease of implementation and competitive accuracy. These approaches approximate the solution with a machine learning model, such as deep neural networks~\citep{raissi2019physics}, trained by minimizing a composite objective function that combines boundary and residual losses evaluated at a set of collocation points, thereby enforcing  the boundary conditions and  equation. Unlike traditional numerical solvers, ML approaches avoid complex, problem-specific discretization schemes and numerical routines, making them simpler and more convenient to implement and verify.

Despite these advantages, most existing ML solvers --- including physics-informed neural networks~\citep{raissi2019deep} and recent Gaussian process (GP) and kernel-based methods~\citep{fang2023solving,xu2024efficientkernelbasedsolversnonlinear} --- rely on stochastic optimization to effectively learn the model parameters, which often requires tens of thousands to even millions of iterations, making  solving procedure quite inefficient. GP and kernel-based solvers, though mathematically principled, also face scalability challenges as the number of collocation points grows. For example,~\citet{CHEN2021110668,long2022autoip} place a GP prior over the solution and its derivatives, yielding block-structured covariance matrices whose time and memory costs exceed the standard $\Ocal(M^3)$ and $\Ocal(M^2)$ scaling, where $M$ is the number of collocation points. To mitigate this,~\citet{fang2023solving,xutoward2025} proposed using product kernels on Cartesian grids, exploiting Kronecker algebra for efficient matrix operations. However, this approach requires estimating the solution values at \textit{all} grid points, leading to exponential growth in parameters with dimension, and its reliance on structured grids limits applicability to irregular domains.

To overcome these limitations, we propose \ours, a tensor Gaussian process solver for nonlinear PDEs. Our main contributions are as follows. 

    \noindent\textbf{Model:} We introduce a set of one-dimensional factor functions, each modeled as a GP, along every input dimension. These factors are combined via multilinear tensor decompositions --- such as CANDECOMP/PARAFAC (CP)~\citep{harshman1970foundations} or tensor-ring decomposition~\citep{zhao2016tensor} --- to approximate the full solution. For each dimension, we place inducing points and represent the factor function by its GP conditional mean, with the inducing values serving as trainable parameters. Collocation points are freely sampled to form the training objective. This design allows  our model to scale linearly with both the PDE dimension and the number of collocation points --- not only in covariance matrix computation and storage but also in the number of trainable parameters.
    
   \noindent \textbf{Algorithm:} To efficiently solve nonlinear PDEs, we linearize the nonlinear terms using two complementary strategies. The first is a partial freezing strategy, which fixes part of the nonlinear terms using results from the previous iteration, leaving only a linear component. The second is Newton’s method, which approximates nonlinear terms via their first-order Taylor expansion. We then exploit two key properties of our model: (i) the solution approximation is multilinear in the inducing values, and (ii) derivatives of the solution preserve the same multilinear structure. Based on these insights, we design an alternating least squares (ALS) scheme that cyclically updates the inducing values along each dimension in closed form. This eliminates the need for stochastic optimization, yielding far greater efficiency and achieving accurate approximations with only a small number of iterations.
    
    \noindent\textbf{Theorem:} We present a rigorous theoretical analysis of our framework. We show that, despite relying on a multilinear functional decomposition, our model can approximate the true solution arbitrarily well when provided with a sufficient number of factor functions (\ie rank). Under CP decomposition, we further prove that not only do such approximations exist within our modeling space, but also that, as the number of collocation points increases, the training optimum  converges to these approximations. These results theoretically guarantee the effectiveness of our method in recovering high-quality solutions.

    \noindent\textbf{Experiments:} 
    We evaluate our method on a range of benchmark PDEs.\cmt{, including Burgers’, nonlinear elliptic, Eikonal, Allen–Cahn, and nonlinear Darcy flow equations.} In less challenging settings, where only a modest number of collocation points (\eg 1,000) suffices, our method consistently achieves lower or comparable errors than existing approaches. In more challenging cases --- such as Burgers’ equation with viscosity $0.001$ or a 6D Allen-Cahn equation --- our method seamlessly scales to tens of thousands of collocation points, achieving errors on the order of $10^{-3}$ to $10^{-6}$. Across all benchmarks, it runs orders of magnitude faster than PINNs and recent GP solvers, while delivering comparable or superior accuracy with drastically reduced runtime.
\section{Background}
Consider solving a PDE of the general form:
\begin{align}
	\Pcal(u) = a(\x) \;\; (\x \in \Omega), \quad	\Bcal(u) = b(\x) \;\; (\x \in \partial \Omega), \label{eq:pde}
\end{align}
where $\x = (x_1, \ldots, x_d)^\top$, $\Omega$ and $\partial \Omega$ denote  the interior and boundary domains respectively, $\Pcal$ and $\Bcal$ are (possibly nonlinear) differential operators applied to $u$.

\noindent\textbf{Physics-Informed Neural Networks (PINNs).} To solve~\eqref{eq:pde}, PINNs approximate the PDE solution using a (deep) neural network. A set of  collocation points $\Mcal = \{\x_1, \ldots, \x_{M_\Omega} \in \Omega, \x_{M_\Omega+1}, \ldots, \x_{M} \in \partial \Omega\}$ is sampled, and the network is trained by minimizing the loss, 
\begin{align}
    \Theta^* = \argmin_{\Theta} \lambda_b \Fcal_b(\Theta) + \Fcal_r(\Theta)\label{eq:pinn-loss}
\end{align}
where $\Fcal_r = \frac{1}{M_\Omega}\sum_{m=1}^{M_\Omega}\left(\Pcal(\text{NN}_\Theta)(\x_m) - a(\x_m)\right)^2$ is the PDE  residual loss, and $\Fcal_b = \frac{1}{M - M_\Omega} \sum_{j=1}^{M-M_\Omega} \left(\Bcal\left(\text{NN}_\Theta\right)(\x_{M_\Omega + j}) - b(\x_{M_\Omega + j})\right)^2$ is the boundary loss. Here NN denotes the neural network, $\Theta$ its parameters, and $\lambda_b>0$ a weighting coefficient. Training typically relies on stochastic optimization (\eg ADAM~\citep{kingma2014adam}), as in standard neural network applications like image classification.
Achieving good accuracy generally requires tens of thousands of iterations, and a second-order optimizer (\eg L-BFGS) is often used for refinement and stabilization~\citep{shin2020convergence,li2023meta,penwarden2023unified}. As a result, while this framework is straightforward and convenient to implement, the training (solving) process is often lengthy and inefficient.

\noindent\textbf{Gaussian Process and Kernel-Based Solvers.} 
An alternative class of solvers is based on GP/kernel methods, which rest on strong mathematical foundations. In \citep{CHEN2021110668, long2022autoip} , a GP prior is placed over the solution $u$ and all its derivatives (or more generally, linear operators) $D_j(u)$ appearing in the PDE. The goal is to estimate the values of $u$ and all $D_j(u)$  evaluated at the collocation points,  which leads to a multi-variate Gaussian prior distribution  with a block covariance matrix, $\C = \{\C_{ij}\}$, where each block $\C_{ij}$ is  associated with a pair of linear operators (\eg derivatives and $u$ itself).  Since each collocation point can contribute to multiple values (\eg different $D_j(u)$), the covariance matrix $\C$ is typically larger than $M \times M$, where $M$ is the number of collocation points. As $M$ increases, the time and space complexity exceed $\mathcal{O}(M^3)$ and $\mathcal{O}(M^2)$, respectively, making computation prohibitively expensive or even infeasible for large $M$.
  
To mitigate this issue, recent work by \citet{xutoward2025} approximates the solution using standard GP/kernel interpolation: $u(\x; \u_{\Mcal}) = \kappa(\x, \Mcal)\K_{MM}^{-1} \u_{\Mcal}$, where $\K_{MM} = \kappa(\Mcal, \Mcal)$ is the $M \times M$ covariance matrix at the collocation points, and $\u_{\Mcal}$ denotes the solution values at these points. Differential operators $D_j$ are applied directly to this interpolation to approximate $D_j(u)$, enabling the use of a training loss similar to~\eqref{eq:pinn-loss}. To reduce computation cost, collocation points are placed on a Cartesian grid: $\Mcal = \s^1 \times \ldots \times \s^d$ where each $\s^j$ is a set of input locations along dimension $j$.  By adopting a product kernel of the form $\kappa(\mathbf{x}, \mathbf{x}') = \prod_j \kappa_j(x_j, x_j')$, the  covariance matrix admits a Kronecker product structure: $\kappa(\Mcal, \Mcal) = \K_1 \kron \ldots \kron \K_d$, where each $\K_j = \kappa_j(\s^j, \s^j)$ is the kernel matrix along input dimension $j$. Kronecker algebra~\citep{kolda2006multilinear} allows one to avoid computing the full $\K_{MM}$; instead, only the smaller matrices $\K_j$ are computed and inverted, greatly reducing the cost.  However, this method still requires estimating $\Ucal_{\Mcal}$ --- the solution values over the entire grid $\Mcal$. As the input dimension increases, the size of $\Ucal_{\mathcal{M}}$ grows exponentially, making storage and estimation infeasible.
Furthermore, effective training still relies on stochastic optimization, often requiring up to one million iterations~\citep{xutoward2025}, rendering the solving procedure inefficient in practice.

\section{Our Method}
\subsection{Model}\label{sect:model}
To leverage the principled mathematical framework of  GPs/kernel methods while overcoming their scalability and efficiency bottlenecks, we propose \ours, a tensor-GP based PDE solver.  Specifically, we introduce a set of one-dimensional factor (component) functions for each input dimension $i$, 
\begin{align}
    f^i_r: \Omega_0 \subset \mathbb{R} \rightarrow \mathbb{R} \in \Gcal^i \;(1 \le r \le R_i),
\end{align}
where $1 \le i \le d$, and $\Gcal^i$ is a Reproducing Kernel Hilbert Space (RKHS) induced by a Mercer kernel function $\kappa_i(\cdot, \cdot)$. In other words, we model each factor function as a GP, $f^i_r\sim \mathcal{GP}(0, \kappa_i(\cdot, \cdot))$.  For simplicity of discussion, we assume the PDE domain $\Omega \cup \partial \Omega \subseteq \bigtimes_{i=1}^d\Omega_0$. We then combine  these factor functions via mutilinear tensor decomposition to construct the solution approximation. We consider two representative tensor decomposition models.  

\noindent \textbf{CANDECOMP/PARAFAC(CP) Decomposition}~\citep{harshman1970foundations}. \cmt{The first one is the CP decomposition.} We set $R_1 = \ldots = R_d = R$, and model the solution function as 
\begin{align}
	&u(x_1, \ldots, x_d) = \sum\nolimits_{r=1}^R \prod\nolimits_{i=1}^d f^i_r(x_i) \notag \\
	&= \left(\f^1(x_1)\circ \ldots \circ \f^d(x_d)\right)^\top \1, \label{eq:cp}
\end{align}
where each $\f^i(x_i) = \left(f^i_1(x_i), \ldots, f^i_R(x_i)\right)^\top$, and $\circ$ is Hadamard (element-wise) product.  

\noindent\textbf{Tensor-Ring (TR) Decomposition}~\citep{zhao2016tensor}. 
In each dimension $i$, we view the factor functions together as a single function with a matrix output,  $\F^i = \{f^i_r(\cdot)\}: \mathbb{R} \rightarrow \mathbb{R}^{R_{i-1} \times R_{i}}$ where $R_0 = R_d$. The solution is modeled as
\begin{align}
	u(x_1, \ldots, x_d) = \text{Trace}(\F^1(x_1) \F^2(x_2) \cdots   \F^d(x_d)).\label{eq:tr}
\end{align}
When $d=2$, TR decomposition reduces to CP decomposition: $\text{Trace}\left(\F^1(x_1)\F^2(x_2)\right) = {\f^1(x_1)}^\top {\f^2(x_2)} =\left(\f^1(x_1) \circ \f^2(x_2)\right)^\top \1$ where $\f^1(x_1) = \vec\left(\left(\F^1(x_1)\right)^\top\right)$, $\f^2(x_2) = \vec(\F^2(x_2))$, and $\vec$ denotes vectorization.

Our formulation can be interpreted as a multilinear decomposition in functional space. 
To learn these factor functions, we introduce a set of inducing locations $\bgamma_i = (\gamma_1, \ldots, \gamma_{N_i})^\top$ in each dimension $i$, and represent each factor function as kernel interpolation (GP conditional mean),  
\begin{align}
f^i_r(x_i) = \kappa_i(x_i, \bgamma_i) \K_i^{-1} \boldeta^i_r, \label{eq:factor-func}
\end{align}
where $\K_i = \kappa_i(\bgamma_i, \bgamma_i)$, and $\boldeta^i_r = \left(f^i_r(\gamma_1), \ldots, f^i_r(\gamma_{N_i})\right)^\top$ denotes the values of the factor function at the inducing locations, \ie {inducing values}. The learning is conducted by solving the following constrained optimization problem, 
\begin{align}
    \begin{cases}
        \underset{\{f^i_r\in\Gcal^i\}}{\text{minimize}}\;\; \sum_{i=1}^d\sum_{r=1}^R\|f^i_r\|^2_{\Gcal^i}  \\
        \text{s.t.  }
        \frac{1}{M_\Omega} \sum_{m=1}^{M_\Omega} \left(\Pcal(u)(\x_m) - a(\x_m)\right)^2   \\
        +\frac{1}{M-M_\Omega}\sum_{m=M_\Omega+1}^M \left(\Bcal(u)(\x_m) - b(\x_m)\right)^2 \le \delta^2, \\
        f^i_r \text{ takes kernel interpolation form}~\eqref{eq:factor-func},\\
        u \text{ takes the tensor decomposition form }~\eqref{eq:cp} \text{ or }~\eqref{eq:tr},
        \raisetag{0.5in} \label{eq:optimization-formulation}
    \end{cases}
\end{align}
where $\|\cdot \|_{\Gcal^i}$ is the RKHS norm under $\Gcal^i$, and $\delta$ is a relaxation parameter. Since we approximate the full solution function with a multilinear (low-rank) decomposition, we introduce  $\delta^2>0$ to guarantee the feasibility of  optimization, and to establish convergence. See Section~\ref{sect:theorem} for our theoretical analysis. Directly solving~\eqref{eq:optimization-formulation} can be unwieldy in practice. We may choose to solve an unconstrained optimization with soft regularization instead, 
\begin{align}
    &\underset{\{\boldeta^i_r\}}{\text{minimize}} \;\; \mathcal \Fcal(u(\x; \{\boldeta^i_r\}); \alpha_1, \alpha_2):= \sum_{i=1}^d\sum_{r=1}^R\|f^i_r\|_{\Gcal^i}^2 \notag \\
    &+ \alpha_1 \left[\frac{1}{M_{\Omega}}\sum\nolimits_{m=1}^{M_\Omega}( \Pcal(u)(\x_m) - a(\x_m) )^2-\delta^2/2\right] \label{eq:our-loss}\\
    & + \alpha_2 \left[\frac{1}{M-M_{\Omega}}\sum\nolimits_{m={M_\Omega+1}}^M ( \Bcal(u)(\x_m) - b(\x_m))^2 - \delta^2/2\right], \notag 
\end{align}
where $\alpha_1, \alpha_2>0$ represent regularization strength, and $\delta$ can be simply set to zero. 

Minimizing~\eqref{eq:our-loss}  is equivalent to maximizing the log joint probability of our tensor GP model, \ie performing probabilistic training. Specifically, from the interpolation form~\eqref{eq:factor-func}, each squared RKHS norm is  $\|f^i_r\|_{\Gcal^i}^2 = \left(\boldeta^i_r\right)^\top \K_i^{-1} \boldeta^i_r$, which corresponds to the negative log prior probability of $\boldeta^i_r$ under the GP prior over $f^i_r$, namely, $p(\boldeta^i_r) = \N(\boldeta^i_r|\0, \K_i)$. Meanwhile, the residual and boundary loss terms at each collocation point correspond to negative log Gaussian likelihoods, given by $N\left(a(\x_m)| \Pcal(u)(\x_m), \alpha_1^{-1}\right)$ and $\N\left(b(\x_m)|\Bcal(u)(\x_m), \alpha_2^{-1}\right)$.

\subsection{Algorithm}
While applying stochastic optimization to~\eqref{eq:our-loss} is straightforward, it typically requires many iterations and is therefore inefficient.\cmt{ for estimating the inducing values $\{\boldeta_r^i\}_{i,r}$.} To improve learning efficiency and achieve higher accuracy, we propose an alternating least squares (ALS) approach that performs closed-form updates across input dimensions. For clarity, we focus on the CP decomposition in~\eqref{eq:cp}, noting that the TR decomposition extends naturally. To illustrate the idea, we present a nonlinear 2D Allen-Cahn equation as a concrete example, 
\begin{align}
u_{x_1x_1} + u_{x_2x_2} + u(u^2-1) = a(x_1, x_2), \label{eq:pde-example}
\end{align}
where $a(x_1, x_2)$ is the source function. 

First, we observe two key properties of our tensor-GP model: (i)  the solution approximation is mutilinear in the inducing values in each dimension $i$, denoted as $\H_i = [\boldeta^i_1, \ldots, \boldeta^i_R] \in \mathbb{R}^{N_i \times R}$. Specifically, according to~\eqref{eq:cp}, we have $\f^i(x_i) = \left(\kappa_i(x_i, \bgamma_i)\K_i^{-1} \H_i\right)^\top=\H_i^\top \w_i(x_i)$ where $\w_i(x_i) =\K_i^{-1}\kappa_i(\bgamma_i, x_i)$, and
\begin{align}
    u(x_1, \ldots, x_d) = \left\langle \circ_{i=1}^d \H_i^\top \w_i(x_i), \1 \right\rangle, \label{eq:cp-form2}
\end{align}
where $\langle \cdot, \cdot \rangle$ is the dot product. Hence, we have $u$ is linear in each $\H_i$ when fixing inducing values in other dimensions. \cmt{: 
$u(x_1, \ldots, x_d) = \w_i(x_i)^\top \H_i \bbeta_{i} = \text{trace}(\bbeta_i \w_i(x_i)^\top \H_i)$ where $\bbeta_i = \circ_{j\neq i} \H_j^\top \w_j(x_j)$.} For example, the solution of~\eqref{eq:pde-example} is modeled as $u(x_1, x_2) = \w_1(x_1)^\top \H_1 \H_2^\top \w_2(x_2)$.
(ii) Due to the mutlilinear combination of 1D functions, any partial derivative over our solution approximation maintains the same mutilinear structure in~\eqref{eq:cp-form2}: $\partial_{x_{j_1}\ldots x_{j_K} } u =\left\langle \circ_{i=1}^d \H_i^\top \widehat{\w}_i(x_i), \1 \right\rangle  $ where $\widehat{\w}_i(x_i)$ is the derivative of $\w_i(x_i)$ if  $x_i \in \{x_{j_1}, \ldots, x_{j_K}\}$ and otherwise $\widehat{\w}_i(x_i) = \w_i(x_i)$. Therefore, the partial derivatives are still multilinear in $\H_i$'s. For instance, for~\eqref{eq:pde-example}, we have
\begin{align}
       u_{x_1x_1} &= \left( \w_1''(x_1)\right)^\top \H_1 \H_2^\top \w_2(x_2), \notag \\
    u_{x_2x_2} &= \w_1(x_1)^\top \H_1 \H_2^\top  \w_2''(x_2),  \label{eq:u-der-ex}
\end{align}
where $\w_1''(x_1) = \partial^2 \w_1/\partial x_1^2$ and $\w_2''(x_2) = \partial^2 \w_2/\partial x_2^2$.

\cmt{
Furthermore, the squared RKHS norm in~\eqref{eq:optimization-formulation} and ~\eqref{eq:our-loss} is quadratic to each $\H_i$,
\begin{align}
 \sum_{r=1}^R \|f^i_r\|^2_{\Gcal^i} = \sum_{r=1}^R \left(\boldeta^i_r\right)^\top \K_i^{-1} \boldeta^i_r =  \text{Trace}(\K_i^{-1}\H_i\H_i^\top)   \label{eq:rkhs}. 
\end{align}
}

Therefore, if the operators $\Pcal$ and $\Bcal$ in~\eqref{eq:pde} are linear in $u$ and its derivatives,  the squared residual and boundary loss at each collocation point is quadratic to each $\H_i$ (see~\eqref{eq:optimization-formulation} and~\eqref{eq:our-loss}). For example, suppose $d=2$ and $\Pcal[u] = u_{x_1x_1} + u_{x_2x_2}$. At collocation point $\x_m=(x_{m1}, x_{m2})$, the residual loss w.r.t $\H_1$ takes the form,
\begin{align}
	\left(\Pcal(u)(\x_m) - a(x_m)\right)^2 = \left(\tr(\B_m^\top\H_1) - a(\x_m)\right)^2 \label{eq:sq-loss}
\end{align}
where $\B_m = \left(\w_1''(x_{m1})\w_2^\top(x_{m2}) + \w_1(x_{m1})\w_2''(x_{m2})^\top\right)\cdot\H_2$,
 according to~\eqref{eq:u-der-ex}. The residual loss w.r.t $\H_2$ takes a similar form.  
 
 Furthermore, we observe that the sum of squared RKHS norms in~\eqref{eq:optimization-formulation} and ~\eqref{eq:our-loss} is also quadratic to each $\H_i$,
 \begin{align}
 	\sum_{r=1}^R \|f^i_r\|^2_{\Gcal^i} = \sum\nolimits_{r=1}^R \left(\boldeta^i_r\right)^\top \K_i^{-1} \boldeta^i_r =  \text{Trace}(\K_i^{-1}\H_i\H_i^\top).  \notag 
 \end{align}
 Combining this with~\eqref{eq:sq-loss}, we see that \textit{optimizing any $\H_i$ while holding the other inducing values fixed reduces to a least-squares problem with a closed-form solution}. This naturally suggests an alternating least squares (ALS) scheme, where we cyclically update each $\H^i$ while keeping the others fixed. Unlike stochastic gradient descent, which is noisy, inaccurate, and requires carefully adjusted small stepsizes to prevent divergence, ALS updates are more direct and aggressive, leading to substantially higher efficiency.

However, a critical bottleneck arises when  $\Pcal$ and $\Bcal$ are nonlinear. The nonlinear terms\cmt{ in the PDE and boundary conditions} disrupt the mutilinear structure, making ALS infeasible. To address this issue, we use two complementary strategies to linearize the nonlinear terms. 

\noindent \textbf{Partial Freezing.} The first strategy is to freeze part of the nonlinear terms from the previous iteration, leaving only a linear component. For example, consider the nonlinear term $u(u^2 - 1)$ in~\eqref{eq:pde-example} as an example. We freeze $u^2 -1$ and approximate $u(u^2-1) \approx u \cdot \left((u^{\text{prev}})^2-1\right)$,  where $u^{\text{prev}}$ is computed from the factor functions estimated in the previous iteration. At the collocation points, the values of $\left(u^{\text{prev}}\right)^2-1$ are treated as constants, making the entire term mutilinear in the $\H_i$'s. As the updates proceed, the discrepency between $u^\text{prev}$ and $u$ gradually diminishes, and vanishes upon convergence. 
    
\noindent \textbf{Newton's method (Tangent Linearization).} Consider solving the PDE as a root finding problem: $\mathcal{R}(u) = 0$, where $\Rcal$ is the PDE operator. We apply Newton's method by linearizing $\Rcal(u)$ around the previous iteration $u^{\text{prev}}$ via a first-order Taylor expansion: $\Rcal(u) \approx \Rcal(u^{\text{prev}}) + J(u^{\text{prev}})(u - u^{\text{prev}}) = 0$,  
    where $J(u^{\text{prev}})$ is the Fr\'echet derivative, $J(u^{\text{prev}}) = \frac{\d \mathcal{R}(u^{\text{prev}}+ \epsilon v)}{\d \epsilon}\bigg|_{\epsilon = 0}$, and $v$ is an arbitrarily small perturbation. By construction, this first-order approximation is always linear in $u$. In practice, the procedure amounts to replacing the nonlinear terms in the PDE and boundary conditions with their first-order Taylor expansions. For example, in~\eqref{eq:pde-example}, the cubic term $u^3$ is replaced by $(u^{\text{prev}})^3 + 3 (u^{\text{prev}})^2(u - u^{\text{prev}})$. 

\textbf{Computational Complexity.} At each iteration, we linearize the nonlinear terms and perform ALS updates on each $\H_i$ while holding the others fixed. The overall time complexity is thereby $\Ocal(M\sum_{i=1}^d (N_iR)^2 + N_i^3R^3)$, where $M$ is the number of collocation points, $N_i$ the number of inducing points in dimension $i$, and $R$ the number of factor functions per dimension. Thus, the time complexity scales linearly with both the number of collocation points and the input dimension. The space complexity is $\Ocal(\sum_{i=1}^d N_iR + N_i^2)$, which accounts for storing the inducing values $\H_i$ and the kernel matrices $\K_i$ in each dimension. 

\section{Theoretical Analysis}\label{sect:theorem}
We first show that, while our model adopts a multilinear functional decomposition as in~\eqref{eq:cp} and~\eqref{eq:tr}, this decomposition remains sufficiently expressive to accurately approximate the true solution $u^*$ under Sobolev regularity.

\begin{lemma}\label{thm:1}
	Suppose that $u^* \in H^k(\Omega)$ for some $k\in\mathbb N$, where  $H^k(\Omega)$ denotes the Sobolev space consisting of functions whose weak derivatives up to the order of $k$ have finite $L^2$ norms. Given an arbitrary error $0<\varepsilon<1$, if we model $u$ as the CP format in \eqref{eq:cp} and let $\overline{R}=(\frac{\sqrt{d}}{\varepsilon})^{\frac{d-1}{k}}$, then  
	\begin{align}
		\min_{\substack{R\leq \overline{R}, \;
				u^i_{r}\in L^2(\Omega_0)}}\left\|u^* - u\right\|_{L^2(\Omega)}\lesssim\varepsilon.\label{lem:key1}
	\end{align}
Moreover, if we further assume $u^*\in H_{\v}^k(\Omega)$ for some $\v \in \mathbb{R}_+^D$ satisfying $v_j\lesssim j^{-(1+\delta')/k}$ for some $\delta'>\delta + k$ where $\delta>0$, then \eqref{lem:key1} holds with $\overline{R} \lesssim(\frac{1}{\varepsilon})^{\frac{1}{k}}$, where the implicit constants depend on $\delta$.  Here  $H^k_\v(\Omega)$ is a weighted Sobolev spaces defined in Appendix Definition~\ref{def:sobolev}.
\end{lemma}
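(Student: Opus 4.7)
The idea is to construct an explicit rank-$R$ CP approximation by expanding $u^*$ in a tensor-product orthonormal basis of $L^2(\Omega)$ and truncating. Let $\{\phi_j\}_{j\ge 0}$ be an orthonormal basis of $L^2(\Omega_0)$ chosen so that $\Phi_\alpha(\x):=\prod_{i=1}^d\phi_{\alpha_i}(x_i)$ forms an orthonormal basis of $L^2(\Omega)$ and so that the Sobolev norm admits the equivalent coefficient characterization $\|u\|_{H^k(\Omega)}^2\asymp \sum_{\alpha\in\mathbb{N}^d}(1+|\alpha|^2)^k|c_\alpha|^2$ when $u=\sum_\alpha c_\alpha\Phi_\alpha$. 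Natural candidates are a trigonometric system after a Sobolev-preserving extension of $u^*$ to a torus, or the eigenbasis of a separable elliptic operator on the product domain $\Omega_0^d$. The CP form then emerges by grouping the expansion along one distinguished coordinate.

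For the isotropic bound, I would group over the first $d-1$ indices: $u^*(\x)=\sum_{\beta\in\mathbb{N}^{d-1}}\bigl(\prod_{i=1}^{d-1}\phi_{\beta_i}(x_i)\bigr)g_\beta(x_d)$, where $g_\beta(x_d):=\sum_j c_{(\beta,j)}\phi_j(x_d)\in L^2(\Omega_0)$. Each summand is already a rank-one tensor of 1D factors, so truncating to $B:=\{\beta\in\mathbb{N}^{d-1}:|\beta|_\infty\le N\}$ produces a CP representation of rank $|B|=(N+1)^{d-1}$. By Parseval,
\[
\|u^*-u^{(B)}\|_{L^2}^2=\sum_{\beta\notin B,\,j}|c_{(\beta,j)}|^2\le N^{-2k}\|u^*\|_{H^k}^2,
\]
since $\beta\notin B$ forces $(1+|\beta|^2)^k\gtrsim N^{2k}$. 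Choosing $N+1=\lceil(\sqrt{d}/\varepsilon)^{1/k}\rceil$ gives error $\lesssim\varepsilon$ with rank $\le\overline{R}$; the $\sqrt{d}$ factor enters naturally when comparing the $\ell^\infty$-cube truncation with the $\ell^2$-type weight $(1+|\alpha|^2)^k$ of the Sobolev norm.

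For the weighted case, the hypothesis $v_j\lesssim j^{-(1+\delta')/k}$ with $\delta'>\delta+k$ encodes strong anisotropic regularity, so the coefficients $c_\alpha$ decay rapidly off a one-dimensional ``spine'' of dominant indices. The plan is to order the multi-indices by the weighted penalty built into the definition of $H^k_\v$ (Appendix Definition~\ref{def:sobolev}), retain the top $R$ indices, and then regroup them along one coordinate as in the isotropic step so that the approximation stays in CP form. The tail of squared discarded coefficients can be bounded by $R^{-2k}$ times a convergent series in $j^{-(1+\delta')/k}$, where convergence uses exactly $\delta'>\delta+k$ to absorb the dimension-dependent constants generated by the product structure of $\v$, yielding rank $R\lesssim\varepsilon^{-1/k}$ with implicit constants depending on $\delta$. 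The main obstacle is this weighted step: one must turn the abstract weighted-space assumption into an explicit ordering of multi-indices that simultaneously (i) permits a CP regrouping without rank blow-up and (ii) produces the claimed dimension-independent tail bound. The isotropic part is essentially bookkeeping once the basis is fixed, whereas the weighted part hinges on the precise definition of $H^k_\v$ and on carefully exploiting the slack $\delta'-\delta-k>0$.
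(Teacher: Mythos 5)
Your argument for the unweighted bound \eqref{lem:key1} is essentially the paper's: the paper invokes a Tucker-format approximation theorem of Griebel et al.\ with $R_1=\cdots=R_d=(\sqrt d/\varepsilon)^{1/k}$ and then regroups the multi-index sum along the last coordinate to obtain a CP representation of rank $\prod_{i=1}^{d-1}R_i=\overline R$ --- which is exactly your step of absorbing the sum over $j$ into $g_\beta(x_d)$ so that only the first $d-1$ indices contribute to the rank. The only real difference is that you re-derive the Tucker-type estimate from first principles via Parseval instead of citing it. That is legitimate, but it puts the burden on you to exhibit a basis for which $\|u\|_{H^k(\Omega)}^2\asymp\sum_\alpha(1+|\alpha|^2)^k|c_\alpha|^2$ actually holds: this requires a Sobolev-preserving (Stein-type) extension of $u^*$ to a periodic box and the trigonometric system there; the eigenbasis of a separable elliptic operator on $\Omega_0^d$ does \emph{not} give this equivalence without boundary-compatibility conditions on $u^*$. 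You flag this choice but do not settle it. Granting it, your rank count $(N+1)^{d-1}\le\overline R$ and tail bound $N^{-2k}\|u^*\|_{H^k}^2$ are correct, and the $\sqrt d$ is a cosmetic constant either way.

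For the weighted case there is a genuine gap: you have a plan, not a proof. The two claims you would need --- that the defining inequality $\|\partial^{\balpha}g\|_{L^2}\lesssim v_j^{k}\|g\|_{H^k}$ of $H^k_{\v}$ translates into a quantitative decay of the coefficients $c_\alpha$ away from a one-dimensional spine of indices, and that retaining $R\lesssim\varepsilon^{-1/k}$ multi-indices (regrouped along one coordinate) then achieves error $\varepsilon$ --- are precisely the content of the statement, and nothing in your sketch establishes either one or shows quantitatively where the slack $\delta'-\delta-k>0$ is spent. To be fair, the paper's own appendix proof stops after establishing the first part of Lemma~\ref{thm:1} and carries the weighted statement entirely by citation to the weighted-space approximation theorems of Griebel et al.; so a fully self-contained argument would require reproducing that analysis, which your proposal does not do. As written, the isotropic half is correct and matches the paper's route, while the weighted half is an unproven outline.
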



\begin{lemma}\label{thm:2}
	Suppose that $u^*\in H^{k+1}(\Omega)$ for some $k\in\mathbb N$. Given an arbitrary error $0<\varepsilon<1$,  if we model $u$ as the TR format in \eqref{eq:tr} and let $\overline{R}=(\frac{\sqrt{d}}{\varepsilon})^{\frac{d}{k}}$, then  
	{\small\begin{align}
			\min_{\substack{\sum_{i=1}^dR_{i-1}R_{i}\leq \overline{R}, \;
					u^i_{r}\in L^2(\Omega_0)}}\left\|u^* - u\right\|_{L^2(\Omega)}\lesssim\varepsilon.\label{lem:key2}
	\end{align}}
If we further assume $u^*\in H_{\v}^{k+1}(\Omega)$ for some $\v \in \mathbb{R}_+^d$ satisfying $v_j\lesssim j^{-(1+\delta')/k}$ for some $\delta'>\delta + k$ where $\delta>0$, then \eqref{lem:key2} holds with $\overline{R} \lesssim \exp\{2(\frac{1}{\varepsilon})^{\frac{1}{k}}+ \log(\frac{1}{\varepsilon})\}$, where the implicit constants depend on $\delta$.  
\end{lemma}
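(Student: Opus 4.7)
The plan is to deduce Lemma~\ref{thm:2} from Lemma~\ref{thm:1} by embedding a CP decomposition inside a TR decomposition with diagonal cores. Given a rank-$R$ CP approximation $u_{\text{CP}}=\sum_{r=1}^R\prod_{i=1}^d f^i_r(x_i)$ of $u^*$, I would define the TR cores $\F^i(x_i)\in\mathbb{R}^{R\times R}$ to be $\diag(f^i_1(x_i),\ldots,f^i_R(x_i))$. A direct computation gives $\text{Trace}(\F^1(x_1)\cdots\F^d(x_d))=\sum_{r=1}^R\prod_i f^i_r(x_i)=u_{\text{CP}}$, so the CP decomposition sits inside the TR model class at uniform bond dimensions $R_0=\cdots=R_d=R$ and contributes $\sum_{i=1}^d R_{i-1}R_i=dR^2$ to the rank budget.

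The first step is to invoke Lemma~\ref{thm:1} on $u^*\in H^{k+1}\subset H^k$, obtaining for any $\varepsilon'>0$ a CP approximation of rank $R\lesssim(\sqrt d/\varepsilon')^{(d-1)/k}$ with $L^2$-error $\lesssim\varepsilon'$. Setting $\varepsilon'=\varepsilon$ and substituting into the TR embedding yields $\sum_iR_{i-1}R_i\lesssim d\,(\sqrt d/\varepsilon)^{2(d-1)/k}$ with total error $\lesssim\varepsilon$. The second step is to verify this quantity is dominated by $(\sqrt d/\varepsilon)^{d/k}$; here the extra regularity $H^{k+1}$ (over $H^k$ in Lemma~\ref{thm:1}) provides the slack needed to absorb both the prefactor $d$ and the doubled exponent coming from squaring the CP rank.

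For the weighted Sobolev strengthening $u^*\in H_{\v}^{k+1}$, I would invoke the corresponding refinement of Lemma~\ref{thm:1}, which removes the dimensional factor and gives $R\lesssim(1/\varepsilon)^{1/k}$. The embedding then produces $\sum_iR_{i-1}R_i\lesssim d\,(1/\varepsilon)^{2/k}$, which is much smaller than the claimed bound $\exp\{2(1/\varepsilon)^{1/k}+\log(1/\varepsilon)\}$. The exponential stated bound is therefore a convenient overestimate and holds comfortably via the embedding.

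The main obstacle is the exponent comparison $d\,(\sqrt d/\varepsilon)^{2(d-1)/k}\lesssim(\sqrt d/\varepsilon)^{d/k}$, which after taking logs essentially reduces to $2(d-1)/(k+1)\le d/k$ modulo absorbing $\log d$ into constants; this inequality is tight only in restricted regimes of $(k,d)$. Should the naive CP-to-TR embedding turn out to be insufficient at the stated exponent, a cleaner alternative is to construct the TR directly by iterated Schmidt decompositions: unfold $u^*$ along each cut $\{1,\ldots,i\}\,|\,\{i+1,\ldots,d\}$, apply the $L^2$-SVD, and use the Sobolev smoothness of $u^*$ to control the singular-value decay $\sigma_j\lesssim j^{-(k+1)/d}\|u^*\|_{H^{k+1}}$ of each unfolding, yielding per-bond ranks $R_i\lesssim(\sqrt d/\varepsilon)^{d/(2k)}$ and aggregate budget $\sum_i R_{i-1}R_i\lesssim(\sqrt d/\varepsilon)^{d/k}$. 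Establishing this singular-value decay rigorously --- likely via tensor-product Fourier or wavelet expansions on $\prod_i\Omega_0$ together with an Eckart--Young-type truncation estimate --- is the technically demanding core of the argument.
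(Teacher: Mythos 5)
Your diagonal-core embedding of CP into TR is algebraically correct, but it does not prove the lemma at the stated rank budget, and you essentially concede this yourself. With $\F^i=\diag(f^i_1,\dots,f^i_R)$ all bond dimensions equal $R$, so the budget is $\sum_{i=1}^d R_{i-1}R_i=dR^2$. Feeding in the CP rank from Lemma~\ref{thm:1} applied at smoothness $k+1$, namely $R\lesssim(\sqrt d/\varepsilon)^{(d-1)/(k+1)}$, you would need $d\,(\sqrt d/\varepsilon)^{2(d-1)/(k+1)}\lesssim(\sqrt d/\varepsilon)^{d/k}$, which up to constants reduces to $2k(d-1)\le d(k+1)$, i.e.\ $(k-1)(d-2)\le 2$. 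This fails for all but a handful of pairs $(k,d)$ (it already fails at $k=3$, $d=4$), so the claim in your second paragraph that the extra regularity "provides the slack needed" is wrong in general, and your own "main obstacle" paragraph contradicts it. The weighted-Sobolev part would indeed survive the embedding (there $dR^2\lesssim d\,\varepsilon^{-2/k}$ sits far below the stated exponential bound), but the unweighted part --- the main claim --- is not established by this route.

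Your fallback --- iterated Schmidt/SVD of the unfoldings with singular-value decay controlled by Sobolev smoothness --- is the right idea and is in fact what the paper does: its proof simply invokes the tensor-train approximation results of Griebel and Harbrecht (their Theorems~4--5 and Remark~2, specialized to one-dimensional factors) and observes that TT is the special case of TR with $R_0=R_d=1$. But in your write-up this construction is only sketched: the decay rate $\sigma_j\lesssim j^{-(k+1)/d}$ for each unfolding is asserted rather than proved, you do not verify that it yields per-bond ranks $R_i\lesssim(\sqrt d/\varepsilon)^{d/(2k)}$ consistent with the budget $\overline R=(\sqrt d/\varepsilon)^{d/k}$, and you do not control how the truncation errors across the $d$ cuts accumulate to $O(\varepsilon)$. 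Since that singular-value estimate is the entire content of the lemma, labeling it "the technically demanding core" without supplying it (or a citation that supplies it) leaves the proof incomplete.
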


Next, we establish the convergence result under the CP format. Our analysis follows the roadmap of \citet{batlle2023error,xutoward2025}, adopting standard assumptions on PDE stability and on the regularity of the domain and boundary conditions~\citep[Assumption 4.1]{xutoward2025}. 

\begin{assumption}\label{assump1}
The following conditions hold: 
\begin{compactitem}
    \item \textit{(C1) (Regularity of the domain and its boundary)} $\Omega \subset \mathbb{R}^d$ with $d>1$ is a compact set and $\partial \Omega$ is a smooth connected Riemannian manifold of dimension $d-1$ endowed with a geodesic distance $\rho_{\partial \omega}$. 
    \item \textit{(C2) (Stability of the PDE)} $\exists k, t \in \mathbb{N}$ with $k>d/2$ and $t>(d-1)/2$, 
    and $\exists s, l \in \mathbb{R}$ such that for any $r>0$, it holds that $\forall u_1, u_2 \in B_r(H^l(\Omega))$,
    \begin{align}
        &\|u_1 - u_2\|_{H^l(\Omega)} \le C \big(\|\Pcal(u_1) - \Pcal(u_2)\|_{H^{{0}}(\Omega)} \notag \\
        &+ \|\Bcal(u_1) - \Bcal(u_2)\|_{H^{{0}}(\partial \Omega)}\big), \label{eq:stability-left}
    \end{align}
    and $\forall u_1, u_2 \in B_r(H^s(\Omega))$, 
    \begin{align}
        &\|\Pcal(u_1) - \Pcal(u_2)\|_{H^{k}(\Omega)} + \|\Bcal(u_1)- \Bcal(u_2)\|_{H^{t}(\partial \Omega)} \notag \\
        &\le C \|u_1 - u_2\|_{H^s(\Omega)},\label{eq:stability-right}
    \end{align}
    where $C=C(r)>0$ is a constant independent of $u_1$ and $u_2$, $B(r)$ is an open ball with radius $r$, $H^j = W^{j,2}$ is a Sobolev space.
    \item \textit{(C3)} The RKHS $\Ucal$ is continuously embedded in $H^{s+\tau}(\Omega)$ where $\tau>0$.
\end{compactitem}\label{a1}
\end{assumption}

\begin{lemma}\label{th:convergence}
Let $u^* \in \Ucal$ denote the unique strong solution of~\eqref{eq:pde}, and suppose Assumption~\ref{a1} holds. Let $\Mcal = \Mcal_\Omega \cup \Mcal_{\partial \Omega}$ be a set of collocation points, with $\Mcal_\Omega \subset \Omega$ and $\Mcal_{\partial \Omega} \subset \partial \Omega$. Assume the Voronoi diagram induced by $\Mcal$ has a uniformly bounded aspect ratio across all cells.\cmt{\footnote{For each cell $\Tcal_m$ containing $\x_m$, the aspect ratio is defined as the ratio between the smallest radius of a ball centered at $\x_m$ containing ${\Tcal_m}$ and the largest radius of a ball centered at $\x_m$ contained in $\Tcal_m$. This ratio is assumed uniformly bounded across $m$ and independent of $|\Mcal|$.}}
    Define the fill-distances $h_{\Omega}:= \underset{\x\in \Omega}\sup\;\underset{\x' \in \Mcal_\Omega}\inf |\x -\x'|$ and $h_{\partial \Omega} :=\underset{\x \in \partial \Omega}\sup\;\;\underset{\x' \in \Mcal_{\partial \Omega}}\inf\rho_{\partial \Omega}(\x, \x')$, 
    where $|\cdot|$ is the Euclidean distance, and $\rho_{\partial \Omega}$ is a geodesic distance defined on $\partial \Omega$. Set $h = \text{max}(h_\Omega, h_{\partial \Omega})$.
Suppose each RKHS $\Gcal^i$, to which the factor functions in dimension $i$ belong, is associated with a universal kernel, and that $\Ucal = \Gcal^1 \kron \cdots \kron \Gcal^d$\footnote{Here $\Ucal$ is a completed tensor product space, and $\kron$ is  the topological tensor product, \ie the completion of the algebraic tensor product with respect to the product RKHS norm rather than the algebraic tensor product.}.  Then, for any arbitrarily small $\varepsilon>0$, with a  sufficiently large $R$ and an appropriate $\delta$, the optimization (learning) problem~\eqref{eq:optimization-formulation} under the CP format~\eqref{eq:cp}, always has a minimizer $u^\dagger$, and this minimizer satisfies $\|u^\dagger - u^*\|_{H^l(\Omega)} \lesssim \varepsilon$ as $h \rightarrow 0$.
\end{lemma}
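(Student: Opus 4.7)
The plan is to adapt the kernel-method error analysis of \citet{batlle2023error,xutoward2025} to the tensor-GP setting, leveraging the expressivity result from Lemma~\ref{thm:1}. The argument has three stages: (i) use Lemma~\ref{thm:1} together with the universal kernel assumption to exhibit a feasible point of \eqref{eq:optimization-formulation}, thereby establishing existence of a minimizer $u^\dagger$; (ii) control the continuous PDE and boundary residuals of $u^\dagger$ by combining an a priori RKHS bound on $u^\dagger$ with a Sobolev sampling inequality governed by the fill-distance $h$; (iii) invoke the backward stability estimate \eqref{eq:stability-left} in C2 to upgrade this residual bound into the desired $H^l$-error bound.

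First I would construct a CP-format surrogate. By Lemma~\ref{thm:1}, for any $\varepsilon_0>0$ and sufficiently large rank $R$, there exist factor functions whose CP combination $\widetilde u$ satisfies $\|\widetilde u - u^*\|_{L^2(\Omega)}\le \varepsilon_0$. Because a Sobolev-strength version of this bound will ultimately be needed (so that \eqref{eq:stability-right} can be applied), I would first mollify the $L^2$ factors to place them in a sufficiently smooth Sobolev space, and then exploit the universal approximation property of each kernel $\kappa_i$ to transfer the smoothed factors into $\Gcal^i$ without inflating the Sobolev norm of the multilinear product. This yields an element $\widehat u\in \Ucal=\Gcal^1\otimes \cdots \otimes \Gcal^d$ with $\|\widehat u - u^*\|_{H^s(\Omega)}\lesssim \varepsilon_0$. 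Combining with \eqref{eq:stability-right}, both $\|\Pcal(\widehat u)-a\|_{H^k(\Omega)}$ and $\|\Bcal(\widehat u)-b\|_{H^t(\partial\Omega)}$ are $\mathcal{O}(\varepsilon_0)$, so taking $\delta$ slightly larger than these quantities renders $\widehat u$ feasible; the weak closedness of the feasible set together with lower semi-continuity of the RKHS penalty then yields a minimizer $u^\dagger$.

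Next I would bound the error of $u^\dagger$. By optimality, $\|u^\dagger\|_{\Ucal}\leq \|\widehat u\|_{\Ucal}$, so by C3 the norm $\|u^\dagger\|_{H^{s+\tau}(\Omega)}$ is uniformly bounded, independent of $h$ and $\delta$. The stability estimate \eqref{eq:stability-right} then controls $\|\Pcal(u^\dagger)-a\|_{H^k(\Omega)}$ and $\|\Bcal(u^\dagger)-b\|_{H^t(\partial\Omega)}$. Since $k>d/2$ and $t>(d-1)/2$, a standard Sobolev sampling inequality (in the spirit of Wendland's scattered-data results) applied on the Voronoi-regular point sets converts the empirical quadratic residuals (bounded by $\delta^2$ at the optimum) into continuous bounds of the form
\begin{align}
\|\Pcal(u^\dagger)-a\|_{L^2(\Omega)} &\lesssim h^{k}\|\Pcal(u^\dagger)-a\|_{H^k(\Omega)} + \delta, \notag \\
\|\Bcal(u^\dagger)-b\|_{L^2(\partial\Omega)} &\lesssim h^{t}\|\Bcal(u^\dagger)-b\|_{H^t(\partial\Omega)} + \delta, \notag
\end{align}
where the uniform aspect-ratio assumption on the Voronoi cells is used to equate the empirical $\frac{1}{M}\sum_m$ with a continuous integral up to multiplicative constants (and in particular to control $M_\Omega \asymp h^{-d}$). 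Applying the backward stability estimate \eqref{eq:stability-left} and then choosing $\delta \asymp \varepsilon$ while letting $h\to 0$ delivers $\|u^\dagger-u^*\|_{H^l(\Omega)}\lesssim \varepsilon$, as claimed.

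The main obstacle I anticipate lies in stage (i): the native form of Lemma~\ref{thm:1} only guarantees $L^2(\Omega_0)$ factor functions, whereas feasibility of \eqref{eq:optimization-formulation} demands factors in each RKHS $\Gcal^i$ with ambient Sobolev regularity strong enough for \eqref{eq:stability-right} to apply. Handling this rigorously requires a joint mollification/universal-approximation argument that simultaneously (a) lifts each factor into $H^{s+\tau}(\Omega_0)$ with controlled norm, (b) preserves smallness of the multilinear combination in the stronger norm, and (c) replaces each smoothed factor by a $\Gcal^i$ element without destroying the Sobolev estimate. Once such a Sobolev-compatible expressivity statement is in place, the remainder of the proof follows a fairly standard kernel-method error-analysis template.
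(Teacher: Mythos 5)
Your overall template --- construct a feasible point, convert the empirical residual bound at the optimum into a continuous one via a sampling/Poincar\'e inequality on the Voronoi cells, then apply the backward stability estimate \eqref{eq:stability-left} --- matches the paper's roadmap, and your stages (ii)--(iii) correspond closely to the paper's Step 3. The genuine gap is in your stage (i), which you flag as the ``main obstacle'' but do not close. You start from Lemma~\ref{thm:1}, which only produces $L^2(\Omega_0)$ factors, and propose to mollify them and then invoke universality of each $\kappa_i$ to land in $\Gcal^i$ ``without inflating the Sobolev norm.'' Neither step works as stated: smallness of $\|\widetilde u - u^*\|_{L^2(\Omega)}$ combined with smoothing of the individual factors does not yield smallness of $\|\widehat u - u^*\|_{H^s(\Omega)}$ (one would need a uniform bound in a strictly stronger norm plus an interpolation inequality, and even then the rate degrades), and a universal kernel only guarantees uniform approximation on compacta --- it gives no control of the RKHS or Sobolev norms of the approximants. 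The paper sidesteps all of this by never using Lemma~\ref{thm:1} here: since $u^*\in\Ucal=\Gcal^1\kron\cdots\kron\Gcal^d$, the Mercer expansion of the product kernel lets one truncate $u^*$ in the eigenbasis $\{\phi^1_{j_1}\cdots\phi^d_{j_d}\}$ at level $I(\varepsilon)$ and re-read the truncation as a rank-$I(\varepsilon)^{d-1}$ CP combination whose factors lie in the $\Gcal^i$ by construction, with $\|\uhat-u^*\|_\Ucal\le\varepsilon$; condition C3 then yields the $H^s$ bound immediately.

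Two further omissions. First, a feasible point of \eqref{eq:optimization-formulation} must have factors in the kernel-interpolation form \eqref{eq:factor-func}; your $\widehat u$ need not. The paper projects each $\fhat^i_r$ onto that form by solving a per-dimension optimal-recovery (interpolation) problem at the coordinate projections of $\Mcal$, which cannot increase the factor norms, and then controls $\|\uhat_\Mcal-\uhat\|_{H^{s}(\Omega)}\lesssim h^\tau\|\uhat_\Mcal-\uhat\|_{H^{s+\tau}(\Omega)}$ by the sampling inequality before verifying feasibility pointwise via $H^k\hookrightarrow C^0$. Second, optimality of $u^\dagger$ bounds $\sum_{i,r}\|f^{i\dagger}_r\|^2_{\Gcal^i}$, not $\|u^\dagger\|_\Ucal$ directly as you assert; the paper needs the auxiliary bound $\|u\|_\Ucal\le\bigl[\tfrac1d\sum_{i,r}\|f^i_r\|^2_{\Gcal^i}\bigr]^{d/2}$ of Lemma~\ref{th:rkhs-norm-bound} to convert the one into the other. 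Without a Sobolev-strength, interpolation-form feasible point and this norm-transfer lemma, the chain of estimates in your stages (ii)--(iii) cannot be completed.
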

\begin{proposition}\label{th:prop}
Given the same set of collocation points $\Mcal$ and $\delta$, there exist constants $\alpha_{1\Mcal}, \alpha_{2\Mcal} > 0$ such that the minimizer of~\eqref{eq:our-loss} with $\alpha_1 = \alpha_{1\Mcal}$ and $\alpha_{2} = \alpha_{2\Mcal}$ coincides with the minimizer of~\eqref{eq:optimization-formulation}. In other words, with appropriately chosen regularization strengths, the minimizer of~\eqref{eq:our-loss} inherits the same convergence guarantee.
\end{proposition}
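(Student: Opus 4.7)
The plan is to interpret $(\alpha_{1\Mcal}, \alpha_{2\Mcal})$ as Lagrange multipliers associated with the inequality constraint in~\eqref{eq:optimization-formulation} and invoke Lagrangian duality. First, I would observe that the constants $-(\alpha_1+\alpha_2)\delta^2/2$ appearing in~\eqref{eq:our-loss} are independent of the inducing values $\{\boldeta^i_r\}$ and hence do not affect the argmin; the penalized objective can therefore be replaced by $F(u)+\alpha_1 G_1(u)+\alpha_2 G_2(u)$, where $F(u)=\sum_{i,r}\|f^i_r\|^2_{\Gcal^i}$, $G_1(u)$ is the empirical squared residual loss, and $G_2(u)$ is the empirical squared boundary loss. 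This reduces the task to finding strictly positive multipliers that make the unconstrained and constrained problems share a minimizer.

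Next, let $u^\dagger$ denote the minimizer of~\eqref{eq:optimization-formulation} guaranteed by Lemma~\ref{th:convergence}. Because setting all $\boldeta^i_r\equiv\0$ yields $u\equiv 0$ with $F=0$, in the nondegenerate case where $a,b$ do not vanish at the collocation points the constraint is active at $u^\dagger$, i.e., $G_1(u^\dagger)+G_2(u^\dagger)=\delta^2$. Applying the KKT necessary conditions under a standard constraint qualification, I obtain a Lagrange multiplier $\lambda^*>0$ such that $u^\dagger$ is stationary for $F(u)+\lambda^*(G_1(u)+G_2(u)-\delta^2)$; setting $\alpha_{1\Mcal}=\alpha_{2\Mcal}=\lambda^*$ then makes $u^\dagger$ a stationary point of the penalized objective. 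To upgrade stationarity to global optimality, I would examine the dual value function $\Phi(\lambda):=\inf_u\{F(u)+\lambda(G_1(u)+G_2(u))\}-\lambda\delta^2$, which is concave in $\lambda$; under strong duality $\max_\lambda\Phi(\lambda)=F(u^\dagger)$, and any corresponding primal minimizer $\tilde u$ satisfying $G_1(\tilde u)+G_2(\tilde u)=\delta^2$ must satisfy $F(\tilde u)\le F(u^\dagger)$ (from penalized-problem optimality with the constraint active) and $F(\tilde u)\ge F(u^\dagger)$ (from constrained-problem optimality), hence $F(\tilde u)=F(u^\dagger)$ and $\tilde u = u^\dagger$ under uniqueness.

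The \emph{main obstacle} is that $G_1$ and $G_2$ are generically non-convex in $\{\boldeta^i_r\}$ because $u$ depends multilinearly on them and $\Pcal,\Bcal$ are nonlinear, so strong duality need not hold in full generality. To sidestep this, I would adopt one of two strategies. The first is to work with the linearized surrogates produced by the Newton step or the partial-freezing strategy of our ALS procedure; these render the residual and boundary constraints effectively convex, so that classical Lagrangian duality applies cleanly and the recovered multiplier can be identified with $(\alpha_{1\Mcal},\alpha_{2\Mcal})$ at convergence. The second is to argue by continuity: the map $\lambda\mapsto G_1(\tilde u(\lambda))+G_2(\tilde u(\lambda))$ moves from the data energy (at $\lambda=0$, where the unique minimizer is $u\equiv 0$) down to zero as $\lambda\to\infty$, so by an intermediate-value argument there exists $\lambda^*>0$ at which the constraint is met with equality, and the inequality comparison above then yields $\tilde u(\lambda^*)=u^\dagger$.
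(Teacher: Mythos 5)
Your route is essentially the paper's own: the paper likewise forms the Lagrangian of \eqref{eq:optimization-formulation} with a single multiplier $\beta$ attached to the combined residual-plus-boundary constraint, writes the constrained problem as the min--max problem \eqref{eq:mini-max}, denotes its optimum by $(\{{f^i_r}^\dagger\},\beta^\dagger)$, and sets $\alpha_1=\alpha_2=\beta^\dagger$ so that minimizing \eqref{eq:our-loss} recovers the constrained minimizer. Your preliminary observation that the $-(\alpha_1+\alpha_2)\delta^2/2$ terms are constants in the decision variables, and your choice $\alpha_{1\Mcal}=\alpha_{2\Mcal}=\lambda^*$, match the paper exactly. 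The substantive difference is that you explicitly flag the step the paper glosses over: upgrading ``$u^\dagger$ is a stationary/KKT point of the penalized objective'' to ``$u^\dagger$ is a global minimizer of the penalized objective'' requires a saddle-point (strong duality) property, which is not automatic because $u$ depends multilinearly on the $\boldeta^i_r$ and $\Pcal,\Bcal$ may be nonlinear, so the constraint functions are non-convex. The paper's proof simply asserts that optimizing \eqref{eq:our-loss} at $\beta^\dagger$ recovers $\{{f^i_r}^\dagger\}$, which is precisely the assertion you question; so the obstacle you identify is a genuine gap in the published argument, not only in yours. Neither of your proposed repairs fully closes it as stated: the linearized-surrogate route proves a statement about the ALS subproblems rather than about \eqref{eq:our-loss} itself, and the intermediate-value route presumes continuity and single-valuedness of $\lambda\mapsto\tilde u(\lambda)$ and still needs a uniqueness assumption to pass from $F(\tilde u)=F(u^\dagger)$ to $\tilde u=u^\dagger$. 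In short, your proposal is at least as rigorous as the paper's one-paragraph proof and more honest about where the argument is incomplete.
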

This result highlights that, as long as the model space is sufficiently expressive to contain a good approximation of the true solution (up to an arbitrarily small error level $\varepsilon$), our training formulation is able to recover such an approximation. In particular, as the number of collocation points increases, the optimization (learning) is guided toward identifying this accurate solution candidate. 
The proofs of these theorems are provided in Appendix Section~\ref{sect:proof}, ~\ref{sect:proof-convergence} and~\ref{sect:prop}.

\section{Related Work}
\label{gen_inst}
While PINNs have achieved many success stories, \eg~\citep{raissi2020hidden, jin2021nsfnets, sahli2020physics,li2023meta}, the training of PINN is often  lengthy and challenging, partly because applying differential operators over NNs can complicate the loss landscape~\citep{krishnapriyan2021characterizing}. 
Alternatively, early works such as \citep{graepel2003solving,raissi2017physics} developed GP models to solve linear PDEs from noisy observations of the source terms. \cmt{More recently, a theoretical justification and analysis for using GPs as priors over PDE solutions is provided by \citet{wang2021bayesian}.} \citet{chen2021solving,long2022autoip} further extended this direction by developing a kernel method capable of addressing both linear and nonlinear PDEs.\cmt{Their solution approximation combines kernel functions and their derivatives --- more generally, the application of linear differential operators directly to the kernels. This approach requires embedding the operators into the kernel to construct the Gram/covariance matrix, whose size is typically greater than the number of collocation points due to the inclusion of various operator combinations.
The work of~\citet{long2022autoip} builds on the same core idea but casts it within a probabilistic framework, introducing a variational inference method to estimate the posterior distribution.}
 \citet{batlle2023error} developed a rigorous convergence framework, establishing both convergence guarantees and rates of~\citep{chen2021solving}. To mitigate the scalability issue for massive collocation sets, \citet{chen2023sparse} proposed a sparse approximation technique based on the sparse inverse Cholesky factorization \citep{schafer2021sparse}. 

\citet{xutoward2025}  proposed using a standard kernel interpolation framework to approximate the PDE solution, and induced a Kronecker product structure to simply the computation. 
The efficiency of Kronecker-structured GP models has been studied in prior works \citep{saatcci2012scalable, wilson2015kernel, wilson2015thoughts,izmailov2018scalable}. 
 \citet{fang2023solving} leveraged a similar idea for solving high-frequency and multiscale PDEs, using a spectral mixture kernel along each input dimension to  capture dominant frequencies in the kernel function. Broader discussions on Bayesian approaches to PDEs are given in~\citep{owhadi2015bayesian}.

The idea of using separable function approximations for solving PDEs was introduced by \citet{beylkin2002numerical} and has seen rapid development in recent years. Notable works include \citet{richter2021solving}, \citet{oster2022approximating}, and \citet{fackeldey2022approximative}, which apply tensor decomposition techniques --- often with random sampling --- to solve PDEs and/or stochastic differential equations efficiently. A comprehensive review of deterministic methods in this context can be found in \citet{bachmayr2023low}.

\section{Numerical Experiments}
We evaluated our method on five commonly-used benchmark PDE families from the literature on machine learning PDE solvers~\citep{raissi2019physics,chen2021physics,xu2024efficientkernelbasedsolversnonlinear}: {viscous Burger's equations, nonlinear elliptic PDEs, Eikonal PDEs, Allen-Cahn equations, and nonlinear Darcy Flow equations}. The details are provided in Appendix Section~\ref{sect:pde-benchmark}. 
We denote our method using partial freezing as \ours-PF and Newton's method as \ours-NT. We compared against two state-of-the-art GP/kernel-based solvers: (1) DAKS (Derivative-Augmented Kernel Solver)~\citet{chen2021physics}, which augments the GP covariance matrix with derivative information to estimate solution values and their derivatives at the collocation points. (2) SKS (Simple Kernel-based Solver)~\citep{xutoward2025}, which employs standard GP interpolation. We also compared with (3) PINN~\citep{raissi2019physics}. SKS, DAKS and \ours were implemented with JAX~\citep{frostig2018compiling}, while PINN with PyTorch~\citep{paszke2019pytorch}. Hyperparameters and settings are detailed in Appendix Section~\ref{sect:method-details}. 
\subsection{Solution Accuracy}
\noindent\textbf{Simpler Cases.} We first evaluated all methods on relatively simple benchmarks, where only a modest number of collocation points is required. Specifically, we considered Burgers' equation~\eqref{eq:burgers} with viscosity $\nu = 0.02$, the nonlinear elliptic PDE~\eqref{eq:Nonlinear Elliptic}, and the Eikonal PDE~\eqref{eq:ekonal} --- the same test cases as in~\citep{chen2021solving}. Following the setups in~\citep{chen2021solving,xutoward2025}, the number of collocation points was varied as
600, 1200, 2400, 4800 for Burgers' equation and
300, 600, 1200, 2400 for the nonlinear elliptic and Eikonal PDEs. \yifan employed randomly sampled collocation points, whereas SKS used a regularly spaced square grid with a comparable number of points. Notably, \yifan performed worse under gridded collocation, as also observed in~\citep{xutoward2025}. We ran PINN and our method on both randomly sampled points and regularly spaced grids, and report the best outcomes. For methods using random sampling, each experiment was repeated five times and the average  relative $L^2$ error was recorded. 
As shown in Table~\ref{tb:easy-small}, our method --- both \ours-PF and \ours-NT --- consistently achieves the \textit{highest solution accuracy}. The only exception is when solving Burgers' equation ($\nu=0.02$) with 600 collocation points, where \ours-NT performs slightly worse than PINN but still ranks third.\cmt{, with an error magnitude comparable to that of PINN.}  Note that for all these PDEs the input dimension is $d=2$, under which the TR and CP decomposition forms are equivalent (see Section~\ref{sect:model}). Accordingly, we do not introduce additional notation to distinguish between them.
\begin{table}
	\small
	\centering
    \caption {\small Relative $L^2$ error of solving \textit{simpler} PDEs, with a small number of collocation points. Inside the parenthesis of each top row indicates the grid used by \sks, which takes approximately the same number of collocation points used by other methods. The top two results are shown in bold face. }\label{tb:easy-small}
 \begin{subtable}{0.5\textwidth}
 	\centering
 	\caption{\small Burgers' equation \eqref{eq:burgers} with viscosity $\nu=0.02$ } \label{tb:burgers-simple}
 	\resizebox{\linewidth}{!}{
 		\begin{tabular}[c]{ccccc}
 			\toprule
 			\multirow{2}{*}{\textit{Method}} & 600 \cmt{($25 \times 25$)} & 1200 \cmt{($35 \times 35$)} & 2400 \cmt{($49 \times 49$)} & 4800 \cmt{($70 \times 70$)}\\
 			& $(25 \times 25)$ & $(35 \times 35)$ & $(49 \times 49)$ & $(70 \times 70)$\\
 			\hline
 			\yifan & 3.05E-02 & 1.38E-02 & 1.51E-03 & 1.70E-04 \\
 			PINN & \textbf{4.67E-03} & 1.17E-03 & {6.27E-04} &  6.50E-04 \\
 			SKS & 2.51E-02 & 9.41E-03 & 1.36E-03 & 5.59E-04 \\
 			\ours-PF & \textbf{3.56E-03} & \textbf{5.77E-04} & \textbf{1.35E-04} & \textbf{8.83E-05}\\
 			\ours-NT & {8.42E-03} & \textbf{8.50E-04} & \textbf{1.46E-04} & \textbf{5.71E-05}\\
 			\bottomrule
 		\end{tabular}
 	}
 \end{subtable}
 \begin{subtable}{0.5\textwidth}
 	\centering
 	\caption{\small Nonlinear elliptic PDE \eqref{eq:Nonlinear Elliptic}  } \label{tb:elliptic-simple}
 	\resizebox{\linewidth}{!}{
 		\begin{tabular}[c]{ccccc}
 			\toprule
 			\multirow{2}{*}{\textit{Method}} & 300 \cmt{($18 \times 18$)} & 600 \cmt{($25 \times 25$)} & 1200 \cmt{($35 \times 35$)} & 2400 \cmt{($49 \times 49$)}\\
 			& ($18 \times 18$) & ($25 \times 25$) & ($35 \times 35$) & ($49 \times 49$)\\
 			\hline
 			\yifan & 1.03E-01 & 1.03E-04 & 7.78E-04 & 1.51E-07\\
 			PINN & 3.05E-01 & 1.73E-02 & 1.15E-03 & 2.88E-04\\
 			SKS & 1.13E-02 & 6.23E-05 & 6.11E-06 & 1.65E-06\\
 			\ours-PF & \textbf{1.97E-06} & \textbf{2.82E-07} & \textbf{1.28E-07} & \textbf{4.04E-08}\\
 			\ours-NT & \textbf{1.78E-06} & \textbf{3.52E-07} & \textbf{1.74E-07} & \textbf{4.06E-08}\\
 			\bottomrule
 		\end{tabular}
 	}
 \end{subtable}
 \begin{subtable}{0.5\textwidth}
 	\caption{\small Eikonal PDE~\eqref{eq:ekonal}  } \label{tb:eikonal-simple}
 	\centering
 	\resizebox{\linewidth}{!}{
 		\begin{tabular}[c]{ccccc}
 			\toprule
 			\textit{Method} & 300 \cmt{($18 \times 18$)} & 600 \cmt{($25 \times 25$)} & 1200 \cmt{($35 \times 35$)} & 2400 \cmt{($49 \times 49$)}\\
 			\hline
 			\yifan & 5.65E-01 & 9.18E-02 & 1.27E-03 & 4.35E-04 \\
 			PINN & 1.65E-01 & 7.05E-02 & 2.54E-02 & 1.96E-02 \\
 			SKS & 3.49E-03 & 1.50E-03 & 1.07E-03 & 1.40E-04\\
 			\ours-PF & \textbf{2.20E-04} & \textbf{1.56E-04} & \textbf{6.99E-05} & \textbf{9.98E-05}\\
 			\ours-NT & \textbf{3.60E-04} & \textbf{9.06E-05} & \textbf{5.95E-05} & \textbf{7.23E-05}\\
 			\bottomrule
 		\end{tabular}
 	}
 \end{subtable}
\end{table}
\begin{figure*}[h]
	\centering
	\setlength{\tabcolsep}{0pt}
	\begin{tabular}{ccc}
		\begin{subfigure}[b]{0.3\linewidth}
			\includegraphics[width=\linewidth]{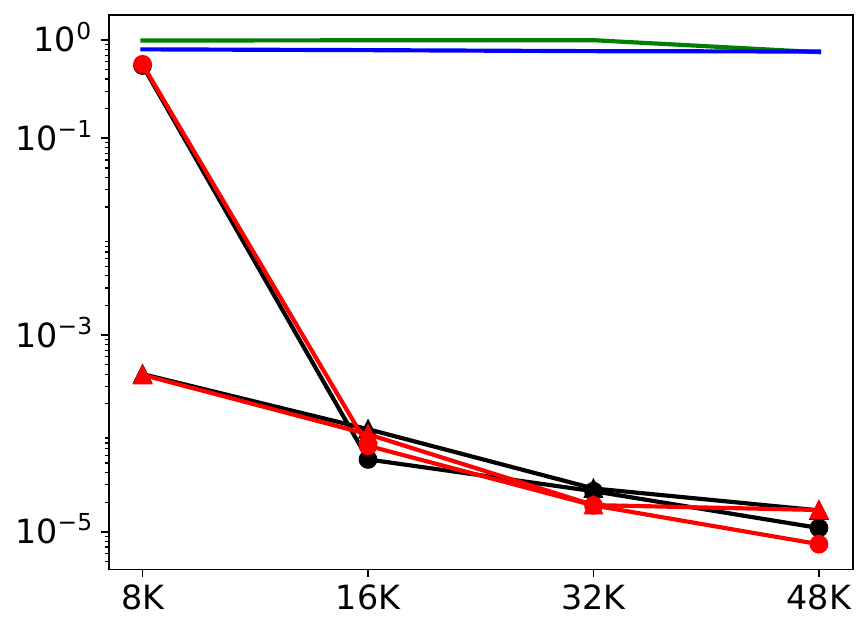}
			\caption{\small 4D Allen-Cahn ($a=15$)}
			\label{fig:high-dim-4d}
		\end{subfigure} &
		\begin{subfigure}[b]{0.3\linewidth}
			\includegraphics[width=\linewidth]{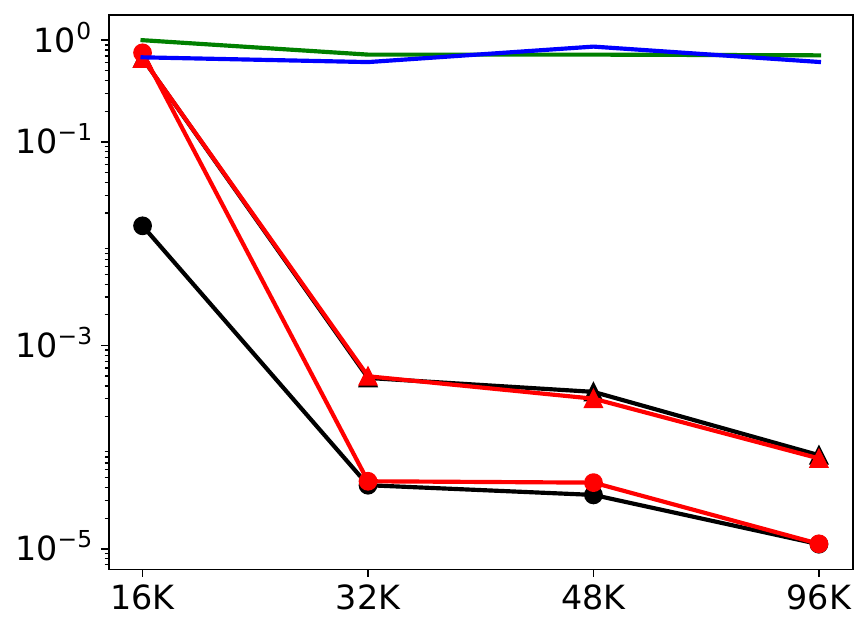}
			\caption{\small 6D Allen-Cahn ($a=15$)}
			\label{fig:high-dim-6dAllen-Cahn}
		\end{subfigure} &
		\begin{subfigure}[b]{0.3\linewidth}
			\includegraphics[width=\linewidth]{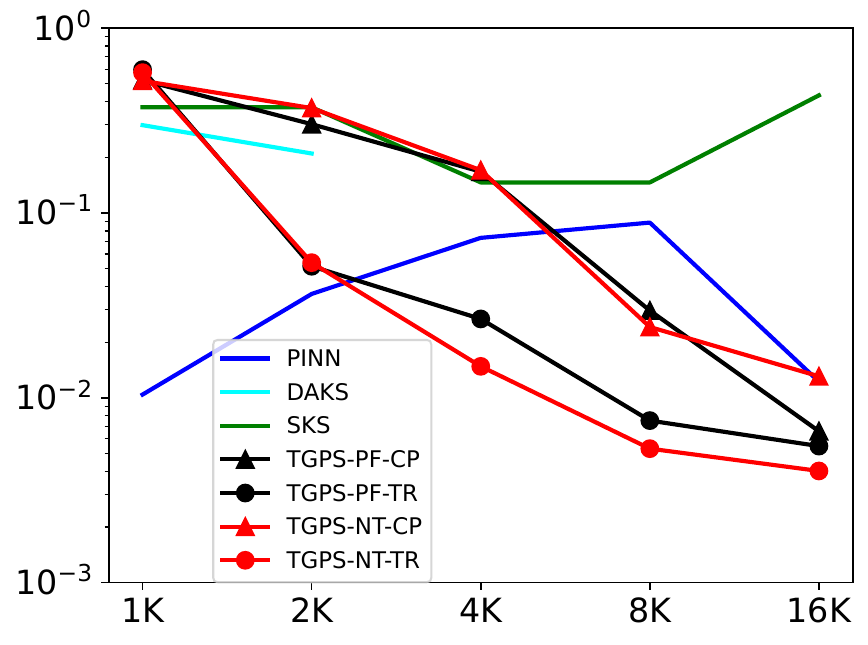}
			\caption{\small 6D Nonlinear Darcy Flow}
			\label{fig:high-dim-6ddarcy}
		\end{subfigure} 
	\end{tabular}
	\caption{\small Relative $L^2$ error \textit{vs.} the number of collocation points in solving higher-dimensional PDEs. }\label{fig:high-dim-res}
\end{figure*}

\noindent\textbf{Difficult Cases.} Next, we evaluated the methods on more challenging problems: Burgers' equation with $\nu = 0.001$, the 2D Allen-Cahn equation with $a = 15$ and $a = 20$, higher-dimensional Allen-Cahn equations with $a = 15$, $d = 4$ and $d = 6$, and the 6D nonlinear Darcy flow. 

To assess the necessity of massive collocation, we first used the same scale as in the simpler PDEs and tested on these PDEs with input dimension $d=2$. As shown in Appendix Table~\ref{tb:hard-small}, the performance of all the methods deteriorates noticeably while \ours  still consistently outperforms the competing methods. Notably, with $2400$ and $4800$ collocation points, \ours achieves relative $L^2$ errors of $10^{-5}$ and $10^{-6}$, respectively, on the 2D Allen-Cahn equation with $a=15$, and $10^{-3}$ and $10^{-5}$ with $a=20$. 



We then increased the number of collocation points to the range of 6.4K -- 40K. In this regime, \yifan became prohibitively expensive, so no results are reported. As shown in Table~\ref{tb:hard-large}, \ours substantially reduce errors, achieving $10^{-4}$ for Burgers’ equation with $28$K–$32$K collocation points, and $10^{-6}$ for the 2D Allen-Cahn equation (both $a=15$ and $a=20$) across all the cases. While \sks also improves significantly, its accuracy is consistently worse than \ours (except for Burgers’ equation with $8$K collocation points, where \sks slightly outperforms \ours-PF), often by one order of magnitude. PINN, by contrast, only reaches a relative $L^2$ error of $10^{-2}$ on Burgers’ equation and remains highly inaccurate on Allen-Cahn (relative $L^2$ error exceeding one), with little benefit from additional collocation points. This poor performance is likely due to the relatively high-frequency components in the Allen-Cahn solution (see \eqref{eq:allen-cahn}), which neural networks struggle to capture because of their known spectral bias~\citep{rahaman2019spectral}.

We next evaluated our method on higher-dimensional PDEs. As shown in Figure~\ref{fig:high-dim-res}, with only a few thousand collocation points, all methods exhibit large relative $L^2$ errors, indicating that these collocation points are insufficient. As the collocation number increases, the performance of \ours  improves substantially (\eg achieving $7.5\times10^{-6}$ and $1.1\times 10^{-5}$ errors in the 4D and 6D Allen-Cahn equations). \cmt{For example, in the 4D and 6D Allen-Cahn equations, \ours-NT with TR decomposition achieves relative $L^2$ errors of $7.46\times10^{-6}$ and $1.12\times10^{-5}$,  when using 48K and 96K collocation points, respectively.} By contrast, \sks and PINN show little improvement.\cmt{, while \sks yields errors so large that we excluded it from the figure.} This is likely because they require far more collocation points to realize significant gains. For instance, when applying \sks to the 6D Allen-Cahn equation, even 96K collocation points correspond to a $7\times7\times7\times7\times7\times7$ grid --- far too coarse to achieve meaningful accuracy. However, increasing the grid to a reasonably dense level --- say, 100 points per dimension --- causes the number of model parameters in \sks to explode. PINNs may still suffer from spectral bias in solving higher-dimensional Allen-Cahn equations.  Moreover, \yifan can only accommodate a few thousand collocation points (see Figure~\ref{fig:high-dim-6ddarcy}), which is inadequate for higher-dimensional problems. These results not only show the superiority of \ours in solution accuracy,  but also highlight its ability to leverage collocation points more efficiently than competing methods.


\begin{table}[t]
	\caption {\small Relative $L^2$ error of solving \textit{more difficult} PDEs.}\label{tb:hard-large}
	\small
	\centering
	\begin{subtable}{0.5\textwidth}
		\caption{\small The Burgers' equation \eqref{eq:burgers} with viscosity $\nu=0.001$. } \label{tb:burgers-nu1}
		\centering
		\resizebox{\linewidth}{!}{
			\begin{tabular}[c]{ccccc}
				\toprule
				\multirow{2}{*}{\textit{Method}} & 8000 \cmt{($200 \times 40$)} & 16000 \cmt{($400 \times 40$)} & 28000 \cmt{($700 \times 40$)} & 32000 \cmt{($800 \times 40$)}\\
				& (200$\times$40) & (400$\times$40) & (700$\times$40) & (800$\times$40)\\
				\hline
				PINN & 5.58E-01 & 2.65E-01 & 2.30E-02 & 1.07E-02\\
				SKS & \textbf{2.65E-02} & {9.41E-03} & 4.20E-03 & 3.88E-03\\
				\ours-PF & {3.41E-02} & \textbf{5.32E-03} & \textbf{4.75E-04} & \textbf{5.05E-04}\\
				\ours-NT & \textbf{3.30E-02} & \textbf{4.85E-03} & \textbf{5.44E-04} & \textbf{6.52E-04}\\
				\bottomrule
			\end{tabular}
		}
	\end{subtable}
	
	\begin{subtable}{0.5\textwidth}
		\caption{\small The 2D Allen-Cahn equation \eqref{eq:allen-cahn} with $a=15$ } \label{tb:allen-cahn-15}
		\centering
		\resizebox{\linewidth}{!}{
			\begin{tabular}[c]{ccccc}
				\toprule
				\multirow{2}{*}{\textit{Method}} & 6400 \cmt{($80 \times 80$)} & 8100 \cmt{($90 \times 90$)} & 22500 \cmt{($150 \times 150$)} & 40000 \cmt{($200 \times 200$)}\\
				& (80$\times$80) & (90$\times$90) & (150$\times$150) & (200$\times$200)\\
				\hline
				PINN & 7.11E0 & 7.50E0 & 5.95E0 & 8.29E0\\
				SKS & 1.17E-04 & 4.82E-05 & {6.14E-06} & {6.28E-06} \\
				\ours-PF & \textbf{6.00E-06} & \textbf{1.21E-06} & \textbf{4.87E-06} & \textbf{1.43E-06}\\
				\ours-NT & \textbf{3.99E-06} & \textbf{1.28E-06} & \textbf{1.70E-06} & \textbf{1.76E-06}\\
				\bottomrule
			\end{tabular}
		}
	\end{subtable}
	\begin{subtable}{0.5\textwidth}
		\caption{\small The 2D Allen-Cahn equation \eqref{eq:allen-cahn} with $a=20$ } \label{tb:allen-cahn-20}
		\centering
		\resizebox{\linewidth}{!}{
			\begin{tabular}[c]{ccccc}
				\toprule
				\textit{Method} & 6400 \cmt{($80 \times 80$)} & 8100 \cmt{($90 \times 90$)} & 22500 \cmt{($150 \times 150$)} & 40000 \cmt{($200 \times 200$)}\\
				\hline
				PINN & 5.91E0 & 6.29E0 & 8.29E0 & 8.39E0\\
				SKS & 5.63E-04 & 2.57E-04 & 5.66E-05 & 4.21E-05 \\
				\ours-PF & \textbf{8.50E-06} & \textbf{8.47E-06} & \textbf{5.90E-06} & \textbf{5.14E-06}\\
				\ours-NT & \textbf{9.03E-06} & \textbf{7.42E-06} & \textbf{5.79E-06} & \textbf{4.80E-06}\\
				\bottomrule
			\end{tabular}
		}
	\end{subtable}
\end{table}

\begin{figure*}
	\centering
	\setlength{\tabcolsep}{0pt}
	\begin{tabular}{cccc}
		\begin{subfigure}[b]{0.25\linewidth}
			\includegraphics[width=\linewidth]{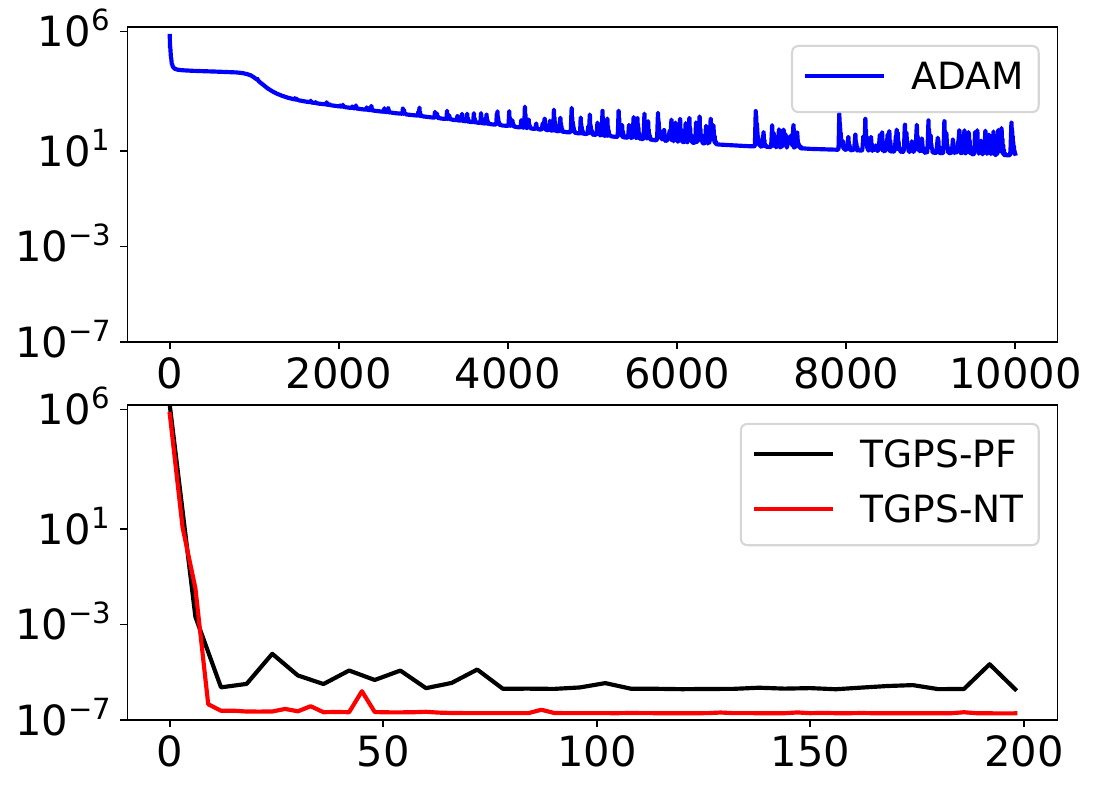}
			\caption{\small Nonlinear Elliptic}
			\label{fig:tr-curve-nonlinear-elliptic}
		\end{subfigure} &
		\begin{subfigure}[b]{0.25\linewidth}
			\includegraphics[width=\linewidth]{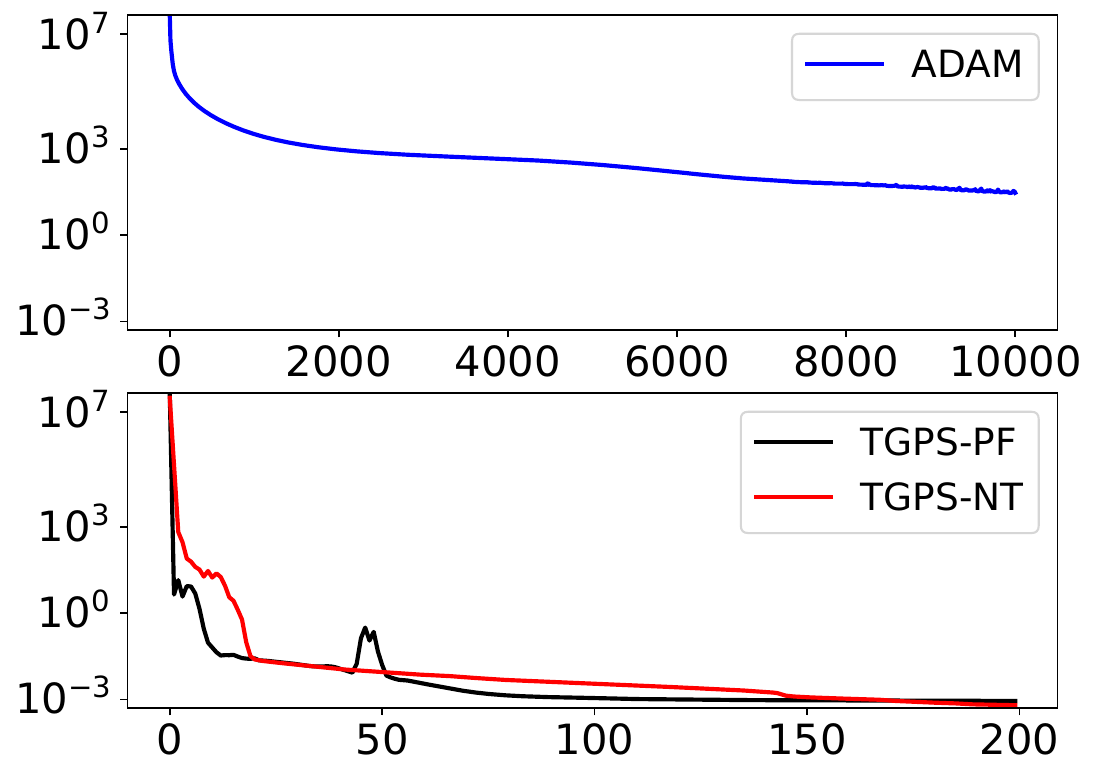}
			\caption{\small Burgers ($\nu=0.001$)}
			\label{fig:tr-curve-burgers}
		\end{subfigure} &
		\begin{subfigure}[b]{0.25\linewidth}
			\includegraphics[width=\linewidth]{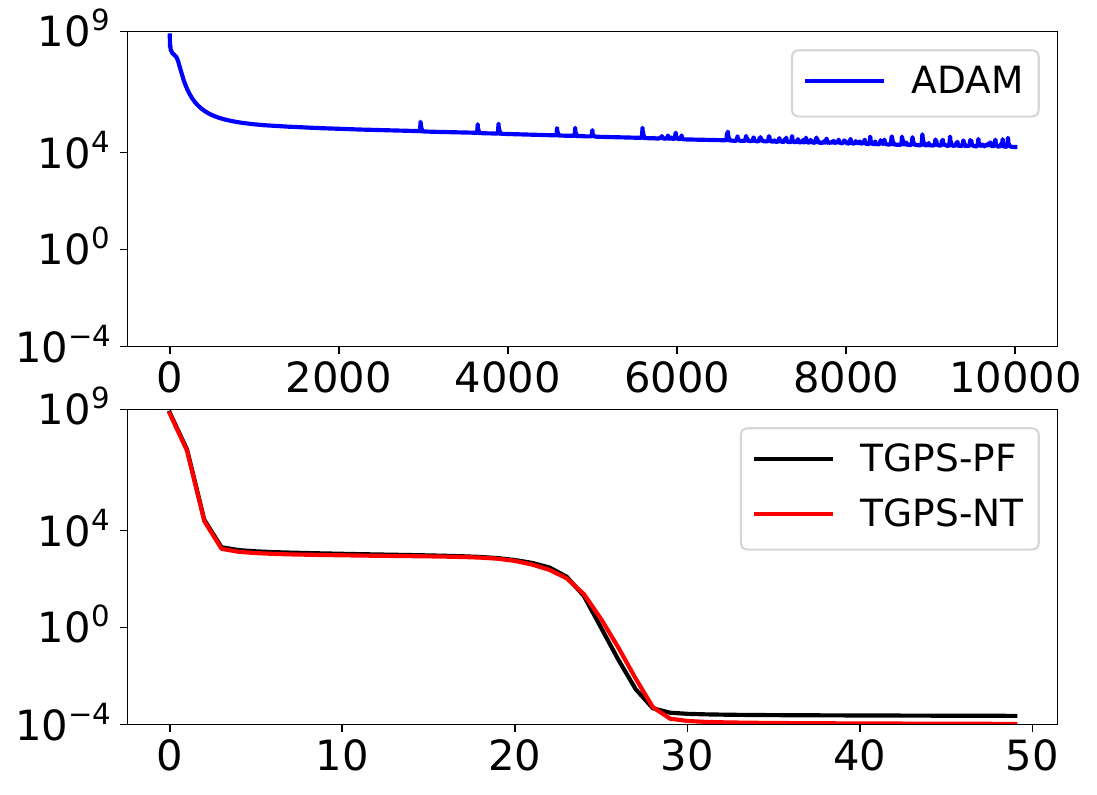}
			\caption{\small 2D Allen-Cahn ($a=15$)}
			\label{fig:tr-curve-allen-cahn-2d}
		\end{subfigure} &
		\begin{subfigure}[b]{0.25\linewidth}
			\includegraphics[width=\linewidth]{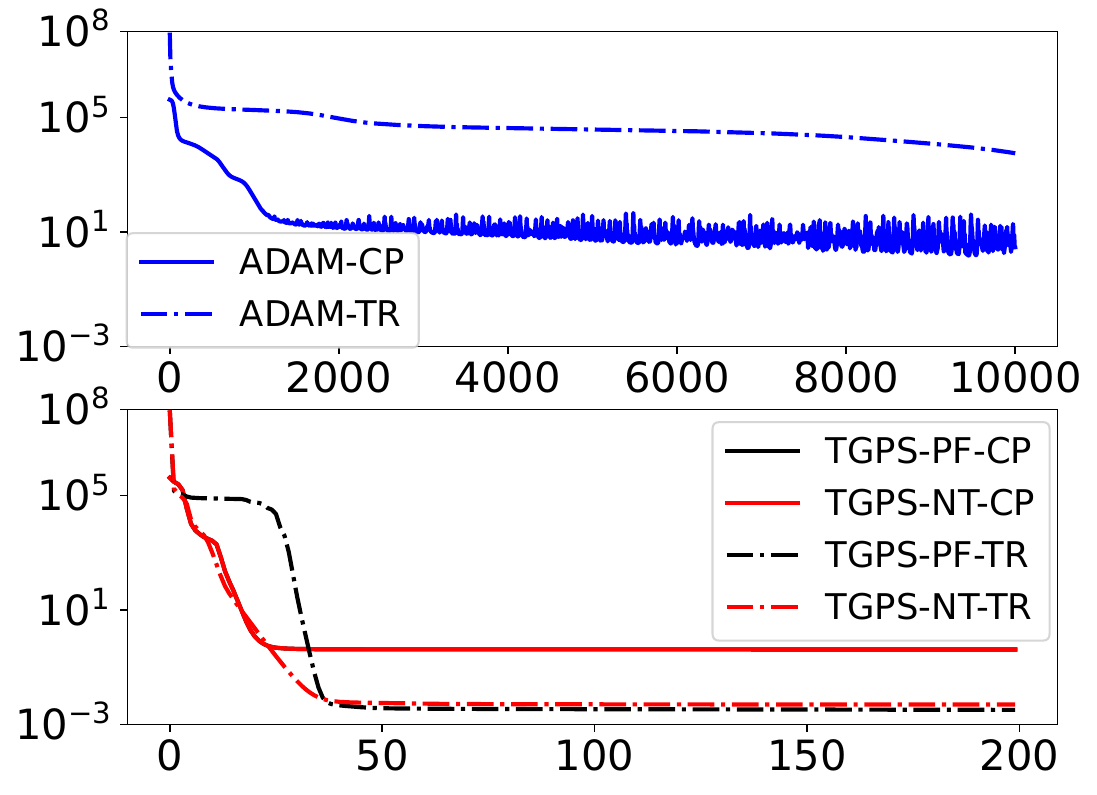}
			\caption{\small 6D Allen-Cahn ($a=15$)}
			\label{fig:tr-curve-allen-cahn-6d}
		\end{subfigure} 	
	\end{tabular}
	\caption{\small Training curves: Training loss \textit{vs.} Number of iterations. }\label{fig:train-curve}
\end{figure*}
\begin{figure*}
	\centering
	\setlength{\tabcolsep}{0pt}
	\begin{tabular}{cccc}
		\begin{subfigure}[b]{0.25\linewidth}
			\includegraphics[width=\linewidth]{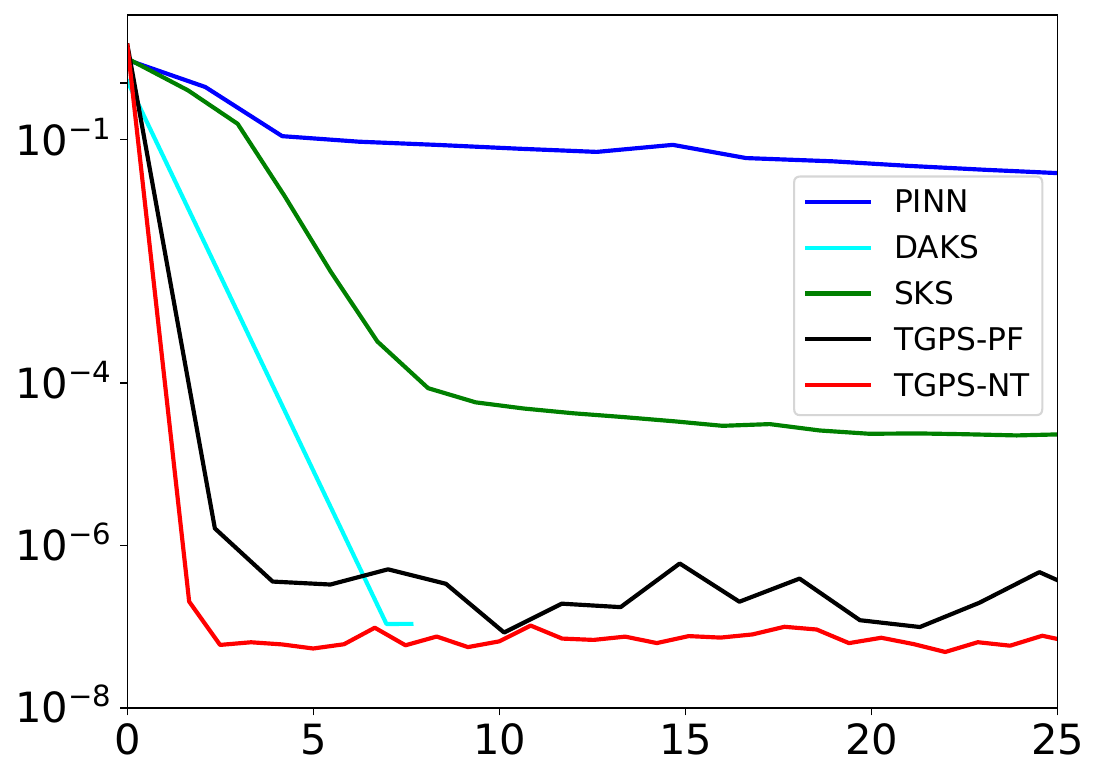}
			\caption{\small Nonlinear Elliptic}
			\label{fig:tr-curve-nonlinear-elliptic}
		\end{subfigure} &
		\begin{subfigure}[b]{0.25\linewidth}
			\includegraphics[width=\linewidth]{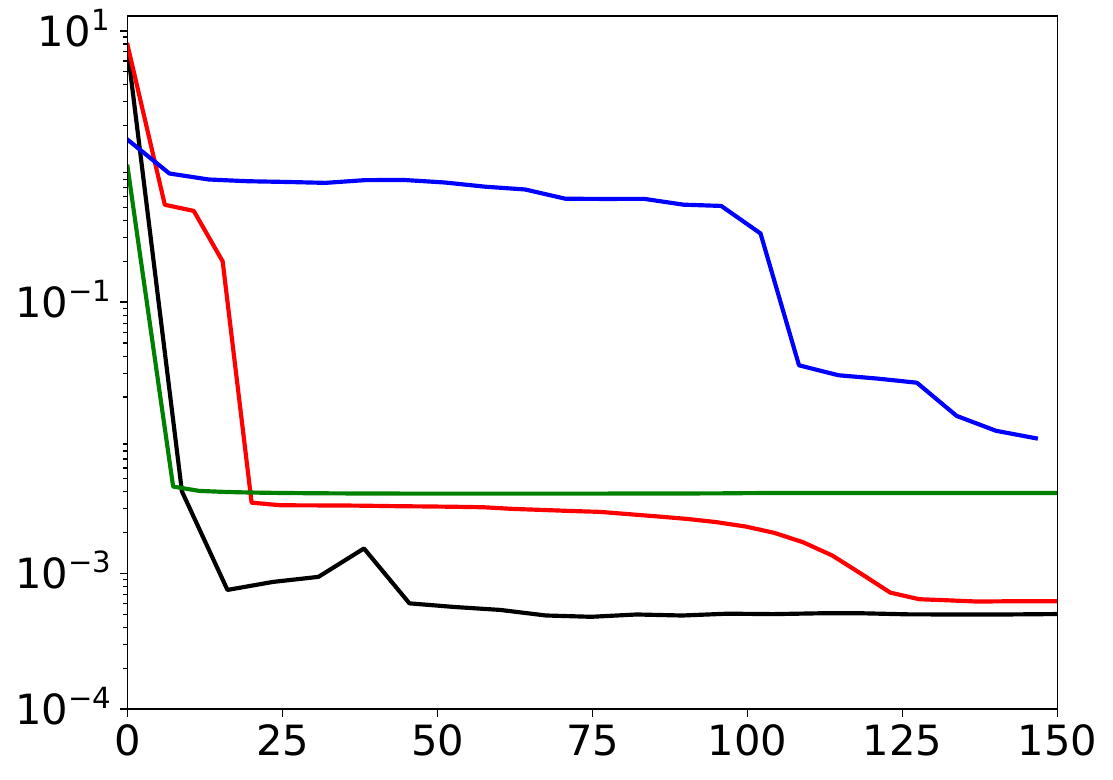}
			\caption{\small Burgers ($\nu=0.001$)}
			\label{fig:tr-curve-burgers}
		\end{subfigure} &
		\begin{subfigure}[b]{0.25\linewidth}
			\includegraphics[width=\linewidth]{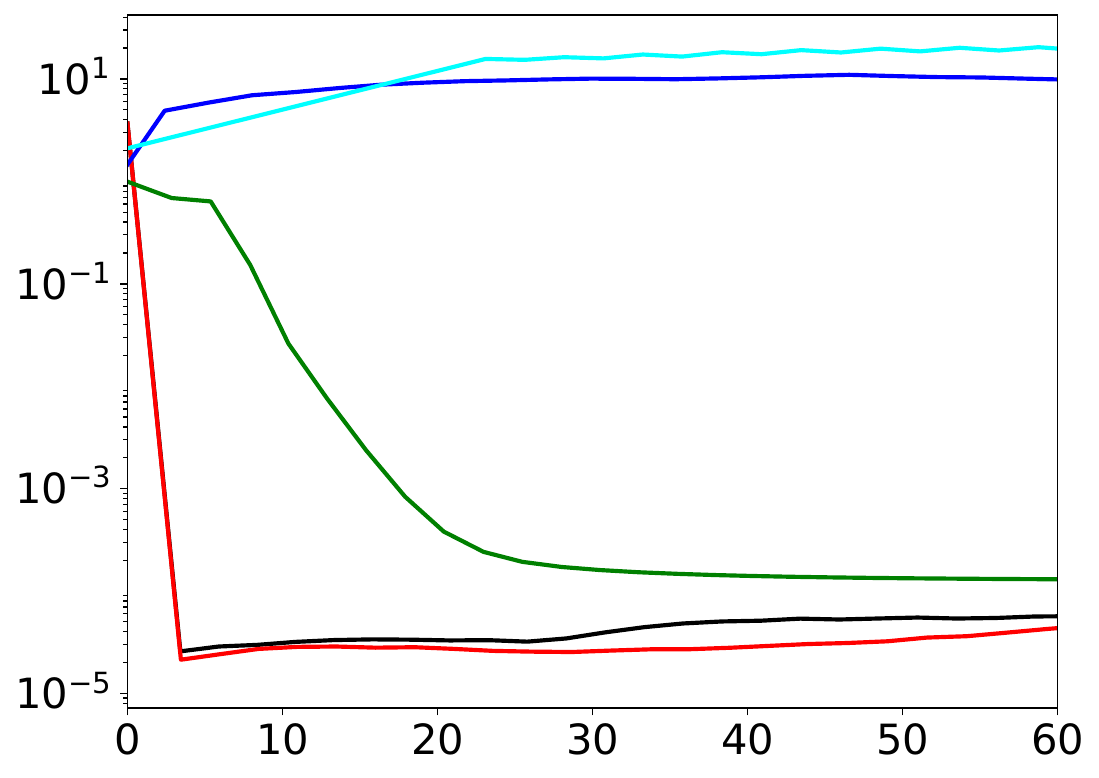}
			\caption{\small 2D Allen-Cahn ($a=15$)}
			\label{fig:tr-curve-allen-cahn-2d}
		\end{subfigure} &
		\begin{subfigure}[b]{0.25\linewidth}
			\includegraphics[width=\linewidth]{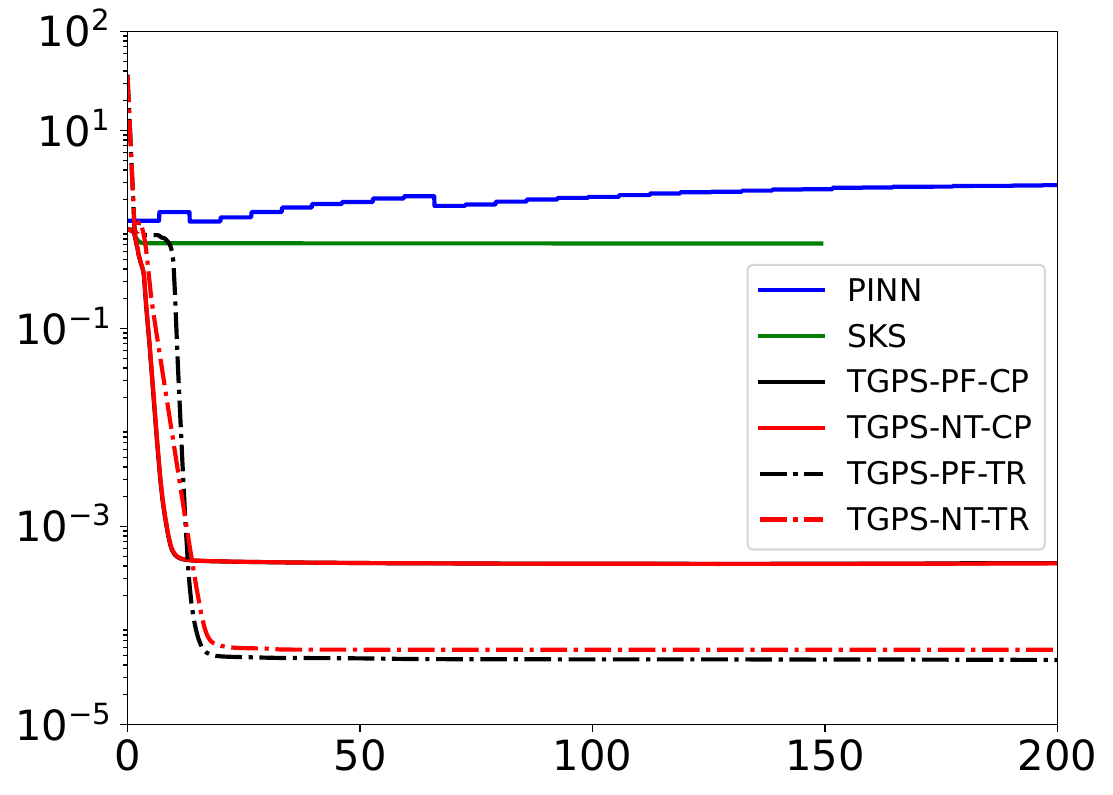}
			\caption{\small 6D Allen-Cahn ($a=15$)}
			\label{fig:tr-curve-allen-cahn-6d}
		\end{subfigure} 	
	\end{tabular}
	\caption{\small Running time (in seconds) \textit{vs.} Relative $L^2$ error. }\label{fig:test-curve}
\end{figure*}

\noindent\textbf{Comparison with Conventional Numerical Methods.} We compared against two established numerical methods. The first is the P2 Galerkin Finite Element Method (FEM)~\citep{barrett1993finite,brenner2008mathematical}, and the second is a robust finite difference (FD) scheme. Details are provided in Appendix Section~\ref{sect:numerical-sover}.
We conducted experiments on the nonlinear elliptic PDE~\eqref{eq:Nonlinear Elliptic} and the 2D Allen–Cahn equation~\eqref{eq:allen-cahn} with $a=15$ and $a=20$, where ground-truth solutions are available for fair comparison. For FEM, we  set the mesh spacing to match the collocation grids used in \sks. The FD scheme used the same grids as \sks. As reported in Appendix Table~\ref{tb:comparison-finite-diff},\cmt{Note that the relative $L^2$ errors of \yifan, PINN, and SKS have already been presented in Tables~\ref{tb:elliptic-simple}, \ref{tb:allen-cahn-15}, and \ref{tb:allen-cahn-20}.} our method (\ours) consistently outperforms the conventional numerical solvers, often by several orders of magnitude in error. 

\noindent\textbf{Point-Wise Error.} For a more fine-grained comparison, we present the pointwise error of each method when solving Burgers' equation ($\nu=0.001$) and 2D Allen-Cahn equations. Details are provided in Appendix Section~\ref{sect:point-wise}.

\noindent\textbf{Irregular Domains.} We further tested our approach by solving the nonlinear elliptic PDE and 2D Allen-Cahn equation on two irregular domains: one is circular and the other triangular.  In all the cases, \ours attained errors at the same level as those on regular domains, thereby confirming the advantage of \ours as a mesh-free method. Detailed results and discussion are provided in Appendix Section~\ref{sect:irregular}.


\subsection{Running Efficiency}
We next evaluated the computational efficiency of our method. Specifically, we tested four PDEs of increasing difficulty --- nonlinear elliptic, Burgers' equation with $\nu=0.001$, 2D Allen-Cahn ($a=15$), and 6D Allen-Cahn ($a=15$) --- using 2.5K, 28K, 4.8K, and 48K collocation points, respectively. All experiments were conducted on a Linux workstation equipped with an NVIDIA A100 GPU.

Our first objective was to assess whether our ALS training is more efficient than the widely used stochastic gradient descent (SGD) methods. To this end, we trained our model with ADAM using exactly the same training loss and  initial learning rate $10^{-2}$. For a fair comparison, both the ADAM and ALS  started from identical model initialization and hyperparameters. Moreover, no mini-batch sampling was applied: all collocation points were used to compute the full gradient of the training loss at each step, with ADAM adjusting the gradient through momentum and per-element step sizes. We then compared the learning curves of ADAM and ALS. As shown in Figure~\ref{fig:train-curve}, ALS training converges hundreds to thousands of times faster than ADAM in all the cases. With our ALS, the training loss saturates after 25, 150, 30, and 50 iterations for nonlinear elliptic, Burgers', 2D Allen-Cahn, and 6D Allen-Cahn, respectively. In contrast, after 10K iterations, ADAM still yields training losses several orders of magnitude larger. These results confirm that our ALS training, with its closed-form updates at each iteration, dramatically improves efficiency compared to standard SGD-based training.

Our second objective was to evaluate efficiency relative to competing approaches. For this purpose, we examined how the relative $L^2$ error of each method evolves with training time. The results, shown in Figure~\ref{fig:test-curve}, indicate that within the same runtime, \ours almost always achieves the smallest solution error, underscoring its efficiency advantage. Note that \yifan applies only to nonlinear elliptic and 2D Allen-Cahn due to  the use of smaller numbers of collocation points. When solving the nonlinear elliptic equation, although \yifan employs a Gauss–Newton approach that converges quickly, each iteration is significantly more expensive (due to computing inverse Hessian approximations). Consequently, \ours still requires roughly half the runtime to achieve comparable or better accuracy. For Burgers’ equation, \sks converges faster --- likely due to its efficient Kronecker product algebra --- but ultimately saturates at a relative $L^2$ error much larger than both \ours-PF and \ours-NT. Across all cases, PINN exhibits both slower convergence and substantially larger errors.

Overall, the results demonstrate that our method not only achieves superior solution accuracy compared to competing ML solvers, but also requires much less runtime to do so.
\section{Conclusion}
\vspace{-0.05in}
We have introduced \ours, a new machine learning solver for nonlinear PDEs. By combining one-dimensional Gaussian processes with tensor decomposition, our method alleviates the computational bottleneck of covariance matrices, controls the number of model parameters, and scales efficiently to massive collocation sets and higher-dimensional PDEs. The alternating least-squares (ALS) updates, coupled with partial freezing and Newton's method, yield substantial efficiency gains over standard stochastic gradient descent training. We also established theoretical guarantees on the expressivity of the model, as well as its accuracy and convergence as the number of collocation points increases. Experimental results on a variety of benchmark PDEs demonstrate not only high solution accuracy but also significant improvements in runtime efficiency.


\section*{Acknowledgments}
HO and SZ acknowledge support from the Air Force Office of Scientific Research under MURI award number FA9550-20-1-0358 (Machine Learning and Physics-Based Modeling and Simulation). HO acknowledges support from the Air Force Office of Scientific Research under MURI award number  FOA-AFRL-AFOSR-2023-0004 (Mathematics of Digital Twins), the Department of Energy under award number DE-SC0023163 (SEA-CROGS: Scalable, Efficient, and Accelerated Causal Reasoning Operators, Graphs and Spikes for Earth and Embedded Systems), the National Science Foundations under award number 2425909 (Discovering the Law of Stress Transfer and Earthquake Dynamics in a Fault Network using a Computational Graph Discovery Approach), and
from the DoD Vannevar Bush Faculty Fellowship Program under ONR award number N00014-18-1-2363. 
SZ acknolwedges support from  NSF CAREER Award IIS-2046295,  NSF OAC-2311685 (Elements: A Convergent Physics-based and Data-driven Computing Platform for Building Modeling), NSF DMS-2529112 (Collaborative Research: MATH-DT: Computationally efficient hypercomplex variable-based sensitivity methods for rapid Digital Twin model updating), and DARPA SURGE HR0011-25-C-0036 (Rapid certification of additive manufactured
components using ML/AI and physics-based
modeling).

\bibliography{reference.bib}
\bibliographystyle{apalike}

\section*{Checklist}


\begin{enumerate}
	
	\item For all models and algorithms presented, check if you include:
	\begin{enumerate}
		\item A clear description of the mathematical setting, assumptions, algorithm, and/or model. [Yes]
		\item An analysis of the properties and complexity (time, space, sample size) of any algorithm. [Yes]
		\item (Optional) Anonymized source code, with specification of all dependencies, including external libraries. [Not Applicable]
	\end{enumerate}
	
	\item For any theoretical claim, check if you include:
	\begin{enumerate}
		\item Statements of the full set of assumptions of all theoretical results. [Yes]
		\item Complete proofs of all theoretical results. [Yes]
		\item Clear explanations of any assumptions. [Yes]     
	\end{enumerate}
	
	\item For all figures and tables that present empirical results, check if you include:
	\begin{enumerate}
		\item The code, data, and instructions needed to reproduce the main experimental results (either in the supplemental material or as a URL). [Yes]
		\item All the training details (e.g., data splits, hyperparameters, how they were chosen). [Yes]
		\item A clear definition of the specific measure or statistics and error bars (e.g., with respect to the random seed after running experiments multiple times). [Yes]
		\item A description of the computing infrastructure used. (e.g., type of GPUs, internal cluster, or cloud provider). [Yes]
	\end{enumerate}
	
	\item If you are using existing assets (e.g., code, data, models) or curating/releasing new assets, check if you include:
	\begin{enumerate}
		\item Citations of the creator If your work uses existing assets. [Yes]
		\item The license information of the assets, if applicable. [Not Applicable]
		\item New assets either in the supplemental material or as a URL, if applicable. [Yes]
		\item Information about consent from data providers/curators. [Not Applicable]
		\item Discussion of sensible content if applicable, e.g., personally identifiable information or offensive content. [Not Applicable]
	\end{enumerate}
	
	\item If you used crowdsourcing or conducted research with human subjects, check if you include:
	\begin{enumerate}
		\item The full text of instructions given to participants and screenshots. [Not Applicable]
		\item Descriptions of potential participant risks, with links to Institutional Review Board (IRB) approvals if applicable. [Not Applicable]
		\item The estimated hourly wage paid to participants and the total amount spent on participant compensation. [Not Applicable]
	\end{enumerate}
	
\end{enumerate}


\clearpage
\appendix

\thispagestyle{empty}
\onecolumn
\aistatstitle{Appendix}
\section{PDE Benchmarks}\label{sect:pde-benchmark}
We employed the following PDE benchmarks. 

\noindent \textbf{Burger's Equation.} The first benchmark is a viscous Burger's equation:
\begin{align}
	&u_t + uu_x - \nu \cdot u_{xx} = 0, \quad\quad \forall(t, x)\in (0, 1] \times (-1, 1) , \notag \\
	&u(t, -1) = u(t, 1) =  0, \quad u(0, x) = -\sin(\pi x), \label{eq:burgers}
\end{align}
where $\nu$ is the viscosity.  The solution is computed via the Cole-Hopf transformation with quadrature~\citep{CHEN2021110668}. We considered two cases: $\nu= 0.02$ and $\nu=0.001$.

\noindent \textbf{Nonlinear Elliptic PDE.} We then tested on a nonlinear elliptic equation~\citep{CHEN2021110668},
\begin{align}
	&-\Delta u(x_1, x_2) + u^3 = a(x_1, x_2), \quad \forall (x_1, x_2)\in \Omega, \notag \\
	& u(x_1, x_2) = 0, \quad \forall (x_1, x_2)\in \partial\Omega,  \label{eq:Nonlinear Elliptic}
\end{align}
where $\Omega \in[0, 1]^2$, the solution is defined as  
$u(x_1, x_2) = \sin{(\pi x_1)}\sin{(\pi x_2)} + 4\sin{(4\pi x_1)}\sin{(4\pi x_2)}$, and the source term $a$ is computed accordingly based on the PDE. 

\noindent\textbf{Eikonal PDE.} The third is a regularized Eikonal equation~\citep{chen2021solving,xutoward2025}, 
\begin{align}
	&|\nabla u(\x)|^2 = g(\x)^2 + \epsilon \Delta u(\x), \;\; \forall \x \in \Omega, \notag \\
	& u(\x) = 0, \;\; \forall \x \in \partial \Omega, \label{eq:ekonal}
\end{align}
where $\Omega = [0, 1]^2$, $g(\x) = 1$, and $\epsilon=0.1$. The solution was calculated from a highly-resolved finite difference solver provided by~\citep{chen2021solving}.

\noindent\textbf{Allen-Cahn Equation.} Fourth, we considered a stationary Allen-Cahn equation with Dirichlet boundary conditions, generalized from the benchmark used in~\citep{xutoward2025}, 
\begin{align}
	\sum_{i=1}^d \frac{\partial^2 u}{\partial x_i^2} + \gamma(u^m - u) = a(x_1, \ldots, x_d), \label{eq:allen-cahn}
\end{align}
where $(x_1, \ldots, x_d) \in [0, 1]^d$, $\gamma = 1$, $m=3$,  the solution is defined as 
\begin{align}
	u &= \sum\nolimits_{i=1}^d \big(\sin(2\pi \beta x_i)\cos(2\pi \beta x_{(i+1) \text{ mod } d})  + \sin(2\pi x_i)\cos(2\pi x_{(i+1) \text{ mod } d})\big)
\end{align}
and the corresponding source term $a$ is obtained through the equation. Here $\beta$ controls the frequency of the solution, and we varied $\beta=15, 20$, and the PDE dimension $d = 2, 4, 6$. 

\noindent\textbf{Nonlinear Darcy Flow}. Fifth, we employed a 6D nonlinear Darcy flow equation with Dirichlet boundary conditions~\citep{BATLLE2025113488}:
\begin{align}
	-\nabla \cdot \left(c\cdot \nabla u \right) + u^3 = a(x_1, \cdots, x_6),  \label{eq:darcy-flow}
\end{align}
where each $x_i \in [0, 1]$, $c(x_1, \cdots, x_6) = \exp{(\sin(\sum\nolimits_{i=1}^6 \cos\left(x_i\right)) )}$, the solution is crafted as  $u = \exp{\left(\sin\left(\beta \sum\nolimits_{i=1}^6 \cos\left(x_j\right)\right) \right)} $, and $a$ is computed based on the PDE. We set $\beta = 6$, which is more challenging than the case used in~\citep{BATLLE2025113488}. 

\section{Method Details}\label{sect:method-details}
\begin{itemize}
	\item \textbf{\sks}. We used the original JAX implementation\footnote{\url{https://github.com/BayesianAIGroup/Efficient-Kernel-PDE-Solver}}. The training is conduced via ADAM optimization with initial learning rate $10^{-3}$. The maximum number of iterations was set to 1M. In the training process, \sks does not sample mini-batches of collocation points to compute stochastic gradients. Instead, the full gradient is computed from the training objective at each step, and then fed into ADAM optimizer to update the momentum online and to adjust element-wise step-size. The optimization is stopped if the training objective does not improve for 1K updates. \sks employed Square Exponential (SE) kernel with different length-scales across the input dimensions. Alternative kernels, such as the Mat\'ern kernel, led to inferior performance. The length-scale hyperparameters were selected from a grid search, as detailed in the original paper~\citep{xutoward2025}. The nugget term was selected from  $\{5\times 10^{-5}, 10^{-5}, 5\times 10^{-6}, 10^{-6}, \ldots, 10^{-13}\}$.
	
	
	\item \textbf{\yifan}. We used the original JAX implementation provided by the authors\footnote{\url{https://github.com/yifanc96/NonLinPDEs-GPsolver}}. The training is performed using relaxed Gauss-Newton optimization. Note that applying the same method to train \sks  almost always led to divergence.  The kernel was selected from among the squared exponential (SE) kernel and the Mat\'ern kernel with degrees of freedom 3/2 or 5/2.  The nugget term was selected from the set $\{5\times 10^{-5}, 10^{-5}, 5\times 10^{-6}, 10^{-6}, \ldots, 10^{-13}\}$. Hyperparameters were chosen following the same procedure as for \sks. For solving the nonlinear elliptic PDE with DAKS, however, we employed its default strategy~\citep{chen2021solving}, which adaptively assigns nugget values to the two sub-blocks of the Gram matrix; this yielded the best performance for \yifan.


	\item \textbf{PINN}. The network architecture was selected by varying the width and depth over $\{10,20, \ldots, 100\}$ and $\{2,3,5,8,10\}$, respectively. The \texttt{tanh} activation function was used. The weight of the boundary loss, $\lambda_b$, was selected from $\{1, 100, 500, 1000\}$. Training PINNs involved two stages: the first consisted of $10$K ADAM epochs with an initial learning rate of $10^{-3}$, followed by L-BFGS optimization until convergence, with the tolerance $10^{-9}$ and  the maximum number of iterations as 50K. 
	\item \textbf{\ours}. For our method with CP decomposition, we varied the rank $R$ in each dimension over $\{5, 10, 12, 15, 18, 20, 25\}$. For the TR decomposition, we set $R_0 = \cdots = R_d = R$ and selected $R$ from $\{3, 4, 5, 6, 7\}$. The number of inducing points for the GP components in each dimension was tuned within the range 20--720. Specifically, we first performed a random search to identify a promising configuration, followed by a grid search for refinement. The inducing points were equally spaced and kept fixed during training. For the factor function in each dimension, we chose kernel functions from the Squared Exponential (SE) and Matérn families with degrees of freedom 3/2 or 5/2. The length-scale parameters were selected from \{0.005:0.001:0.009, 0.01:0.01:0.1, 0.1:0.1:1.0, 1:1:8\}, while nugget values were drawn from $\{10^{-11}, 10^{-10}, 10^{-9}, 10^{-6}\}$ to ensure numerical stability. The regularization parameters $\alpha_1$ and $\alpha_2$ in~\eqref{eq:our-loss} were chosen from $\{10^{0}, 10^{1}, 10^{2}, \ldots, 10^{9}, 10^{10}\}$. 
\end{itemize}

\begin{table}[t]
	\caption {\small Relative $L^2$ error of solving \textit{more difficult} PDEs with \textit{a small number} of collocation points. The grids used in \sks are the same as in Table~\ref{tb:easy-small} of the main paper.} \label{tb:hard-small}
	\small
	\centering
	\begin{subtable}{\textwidth}
		\caption{\small  Burgers' equation \eqref{eq:burgers} with viscosity $\nu=0.001$.} \label{tb:burgers-nu1}
		\centering
		\begin{tabular}[c]{ccccc}
			\toprule
			\textit{Method} & 600 \cmt{($42 \times 14$)} & 1200 \cmt{($60 \times 20$)} & 2400 \cmt{($84 \times 28$)} & 4800 \cmt{($120 \times 40$)}\\
			\hline
			\yifan & 6.30E-01 & 5.08E-01 & 5.86E-01 & 3.86E-01\\
			PINN & \textbf{2.07E-01} & 4.22E-01 & 5.18E-01 & 4.31E-01 \\
			SKS & 2.18E-01 & 1.81E-01 & \textbf{1.31E-01} & \textbf{3.08E-02}\\
			\ours-PF & \textbf{1.03E-01} & \textbf{7.30E-02} & \textbf{8.52E-02} & \textbf{5.18E-02}\\
			\ours-NT & 3.53E-01 & \textbf{1.78E-01} & 1.70E-01 & 2.26E-01\\
			\bottomrule
		\end{tabular}
		\vspace{0.1in}
	\end{subtable}
	\begin{subtable}{\textwidth}
		\centering
		\caption{\small 2D Allen-Cahn equation \eqref{eq:allen-cahn}: $a=15, d=2$. } \label{tb:allen-cahn-15-few}
		\begin{tabular}[c]{ccccc}
			\toprule
			\textit{Method} & 600 \cmt{($25 \times 25$)} & 1200 \cmt{($35 \times 35$)} & 2400 \cmt{($49 \times 49$)} & 4800 \cmt{($70 \times 70$)}\\
			\hline
			\yifan & 9.67E-01 & 9.36E-01 & 8.88E-01 & 8.12E-01\\
			PINN & 5.69E0 & 8.77E0 & 6.03E0 & 7.62E0\\
			SKS & 9.62E-01 & 2.97E-01 & 7.28E-03 & 1.30E-04 \\
			\ours-PF & \textbf{6.43E-01} & \textbf{2.66E-04} & \textbf{8.39E-05} & \textbf{4.92E-06}\\
			\ours-NT & \textbf{6.42E-01} & \textbf{3.79E-04} & \textbf{4.36E-05} & \textbf{8.64E-06}\\
			\bottomrule
		\end{tabular}
		\vspace{0.1in}
	\end{subtable}
	\begin{subtable}{\textwidth}
		\centering
		\caption{\small 2D Allen-Cahn equation \eqref{eq:allen-cahn}: $a=20, d=2$. } \label{tb:allen-cahn-20-few}
		\begin{tabular}[c]{ccccc}
			\toprule
			\textit{Method} & 600 \cmt{($25 \times 25$)} & 1200 \cmt{($35 \times 35$)} & 2400 \cmt{($49 \times 49$)} & 4800 \cmt{($70 \times 70$)}\\
			\hline
			\yifan & {9.63E-01} & 9.29E-01 & 8.76E-01 & 7.98E-01 \\
			PINN & 7.04E0 & 8.18E0 & 8.30E0 & 4.30E0 \\
			SKS & 1.00E0 & 9.77E-01 & 2.56E-01 & 1.39E-03\\
			\ours-PF & \textbf{6.74E-01} & \textbf{1.53E-01} & \textbf{2.73E-03} & \textbf{5.39E-05}\\
			\ours-NT & \textbf{6.51E-01} & \textbf{9.55E-02} & \textbf{1.32E-03} & \textbf{6.75E-05}\\
			\bottomrule
		\end{tabular}
	\end{subtable}
\end{table}
\section{Proof of Lemma~\ref{thm:1} and~\ref{thm:2}}\label{sect:proof}
\begin{definition}[Sobolev Space and Weighed Sobolev Space]\label{def:sobolev} Let $\Omega \subset \mathbb{R}^d$ be an open subset, and $k \in \mathbb{N}$. The Sobolev space $H^k(\Omega)$ is defined as: 
	\[
	H^k(\Omega) := \left\{ g \in L^2(\Omega) \,\middle|\, \partial^{\balpha} g \in L^2(\Omega),\ \forall\, |\balpha| \leq k \right\},
	\]
	where $L^2(\Omega)$ is the space of square-integrable functions and $\partial^{\balpha} g$ denotes the weak derivative of $g$ of multi-index $\balpha=(\alpha_1, \ldots, \alpha_d)$, with total order $|\balpha| = \sum_i \alpha_i \leq k$. 
	For $\v = (v_1, \ldots, v_d)\in \mathbb{R}_+^d$, the weighted Sobolev space $H^k_{\v}(\Omega)\subseteq H^k(\Omega)$ is defined as
	\begin{align}
		H^k_{\v}(\Omega) = \left\{g\in H^k(\Omega): \left\|\partial^{\balpha} g\right\|_{L^2(\Omega)}\lesssim \left(v_j\right)^k\|g\|_{H^k(\Omega)}\ \text{for $|\balpha|=k$ and $j=1, \ldots, d$}\right\}.
	\end{align}
	
\end{definition}
\begin{proof}
	The proof of Lemma~\ref{thm:1} is based on the existing results for the Tucker decomposition in \citep{griebel2023analysis}. 
	In particular, \citep[Theorem 2]{griebel2023analysis} states that for  the Tucker format,  
	\begin{align}
		u(x_1, \ldots, x_d) = \sum_{r_1=1}^{R_1} \ldots \sum_{r_d=1}^{R_d} w_{r_1\ldots r_d} \cdot f_{r_1}^1(x_1) \ldots f_{r_d}^d(x_d), 
	\end{align}	
	if we choose $R_1 = \cdots = R_d = (\frac{\sqrt{d}}{\varepsilon})^{1/k}$, then the error is $\lesssim \varepsilon$. Here $\{f_{r_i}^i(\cdot)\}_{1 \le r_i \le R_i, 1\le i \le d}$ are a collection of one-dimensional functions. We can rewrite the Tucker format as follows,    
	\begin{align*}
		&\sum_{r_1=1}^{R_1}\ldots \sum_{r_d=1}^{R_d} w_{r_1\ldots r_d} \cdot f_{r_1}^1(x_1) \ldots f_{r_d}^d(x_d) \notag \\
		&= \sum_{r_1=1}^{R_1} f^1_{r_1}(x_1) \ldots \sum_{r_{d-1}=1}^{R_{d-1}}u^{d-1}_{r_{d-1}}(x_{d-1})\left(\sum_{r_d=1}^{R_d}w_{r_1\ldots r_d}\cdot f_{r_d}^{R_d}(x_d)\right),
	\end{align*}
	which can be viewed as  CP format that includes a summation of $\overline{R} = \prod_{i=1}^{d-1}R_i = (\frac{\sqrt{d}}{\varepsilon})^{\frac{(d-1)}{k}}$ products of rank-one functions. This establishes the first part of Lemma~\ref{thm:1}. 
	
	The proof of Lemma~\ref{thm:2} builds on \citep[Theorem~3]{griebel2023analysis}. In particular, the approximation results apply to the TT format~\citep{oseledets2011tensor} by invoking \citep[Theorems~4-5 and Remark~2]{griebel2023analysis}, where we specialize to the case in which each factor function has input dimension one. Since the TT format is a special case of the TR format with $R_0 = R_d = 1$, these approximation results carry over to the TR format as well.
\end{proof}

\section{Bound of RKHS Norm}
To prove Lemma~\ref{th:convergence}, we first prove the following RKHS norm bound. 
\begin{lemma}\label{th:rkhs-norm-bound}
		Let $\Gcal^1(\Omega_0), \ldots, \Gcal^d(\Omega_0)$ be a collection of RKHS's defined on $\Omega_0 \subset \mathbb{R}$, and let $\Ucal = \Gcal^1 \kron \ldots \kron \Gcal^d$ denote their tensor-product RKHS. For any function of the form 
		\[
		u(x_1, \ldots, x_d) = \sum_{r=1}^R \prod_{i=1}^d f^i_r(x_i), \quad f^i_r \in \Gcal^i,  
		\]
		we have
		\begin{align}
			\|u\|_\Ucal \le \left[\frac{1}{d}\sum_{i=1}^d\sum_{r=1}^R  \|f^i_r\|^2_{\Gcal^i}\right]^{d/2}.  \label{eq:rkhs-norm-bound}
		\end{align}
	\end{lemma}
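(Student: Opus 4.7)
The plan is to reduce the bound to scalar inequalities and then invoke standard tools. First I would invoke the defining property of the tensor-product RKHS $\Ucal = \Gcal^1 \kron \cdots \kron \Gcal^d$: for any pure tensor $g^1 \kron \cdots \kron g^d$ with $g^i \in \Gcal^i$ we have $\|g^1 \kron \cdots \kron g^d\|_\Ucal = \prod_{i=1}^d \|g^i\|_{\Gcal^i}$. Combining this with the triangle inequality applied to the sum of $R$ rank-one terms in $u$ yields the scalar bound
\begin{align*}
\|u\|_\Ucal \;\le\; \sum_{r=1}^R \prod_{i=1}^d a_{r,i}, \qquad a_{r,i} := \|f^i_r\|_{\Gcal^i}.
\end{align*}

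The problem therefore reduces to showing
\begin{align*}
\sum_{r=1}^R \prod_{i=1}^d a_{r,i} \;\le\; \left[\frac{1}{d} \sum_{i=1}^d \sum_{r=1}^R a_{r,i}^2\right]^{d/2}
\end{align*}
for nonnegative $a_{r,i}$. I would handle this in two steps. First, for each fixed $r$, apply the AM--GM inequality to the $d$ nonnegative numbers $a_{r,i}^2$:
\begin{align*}
\prod_{i=1}^d a_{r,i} \;=\; \Bigl(\prod_{i=1}^d a_{r,i}^2\Bigr)^{1/2} \;\le\; \Bigl(\tfrac{1}{d}\sum_{i=1}^d a_{r,i}^2\Bigr)^{d/2}.
\end{align*}
Setting $b_r := \frac{1}{d}\sum_{i=1}^d a_{r,i}^2$, this gives $\sum_r \prod_i a_{r,i} \le \sum_r b_r^{d/2}$.

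Second, I need the super-additivity $\sum_r b_r^{d/2} \le \bigl(\sum_r b_r\bigr)^{d/2}$ for nonnegative $b_r$, which holds whenever $d \ge 2$ (so $p := d/2 \ge 1$). This follows from the elementary fact that for $p \ge 1$ and $b_r \ge 0$, $b_r^{p-1} \le S^{p-1}$ where $S = \sum_r b_r$, hence $b_r^p = b_r \cdot b_r^{p-1} \le b_r S^{p-1}$; summing in $r$ gives the claim. Chaining the two steps and noting $\sum_r b_r = \frac{1}{d}\sum_{i,r} a_{r,i}^2$ closes the bound.

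The only mild subtlety is the boundary case $d = 1$, where $d/2 = 1/2 < 1$ and the super-additivity direction reverses; but in that case the claim reduces to $\sum_r \|f^1_r\|_{\Gcal^1} \le (\sum_r \|f^1_r\|_{\Gcal^1}^2)^{1/2}$, which is false in general, so the lemma is implicitly used for $d \ge 2$ (consistent with the rest of the paper where $d \ge 2$ is assumed). No serious obstacle is anticipated; the main care is in ensuring the RKHS tensor-product identity and the $d \ge 2$ hypothesis are invoked cleanly.
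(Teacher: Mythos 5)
Your proof is correct, but it takes a genuinely different route from the paper's. The paper works with the squared norm: it expands $\|u\|_{\Ucal}^2=\sum_{r,l}\prod_i\langle f^i_r,f^i_l\rangle_{\Gcal^i}$ via the Gram matrices $\A^i$ with $A^i_{rl}=\langle f^i_r,f^i_l\rangle_{\Gcal^i}$, writes this as a Frobenius inner product of a Hadamard product of the $\A^i$ with the last one, and applies Cauchy--Schwarz together with $\|\A\circ\B\|_F\le\|\A\|_F\,\|\B\|_F$ to obtain $\|u\|_\Ucal^2\le\prod_{i=1}^d\big(\sum_r\|f^i_r\|^2_{\Gcal^i}\big)$, finishing with AM--GM. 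You instead bypass the cross terms entirely with the triangle inequality, reduce to the scalar inequality $\sum_r\prod_i a_{r,i}\le\big(\tfrac1d\sum_{i,r}a_{r,i}^2\big)^{d/2}$, and prove that by per-$r$ AM--GM plus super-additivity of $x\mapsto x^{d/2}$ for $d\ge2$. Your argument is more elementary (no Gram matrices or matrix-norm inequalities), and your intermediate bound $\sum_r\prod_i\|f^i_r\|_{\Gcal^i}$ is in fact at least as sharp as the paper's $\prod_i(\sum_r\|f^i_r\|^2)^{1/2}$ (generalized H\"older followed by $\ell^d\hookrightarrow\ell^2$ for $d\ge2$), though both are then relaxed to the same final expression. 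Your explicit observation that the statement requires $d\ge2$ is a genuine point the paper leaves implicit: the paper's Hadamard-product step likewise needs at least two factor matrices, and $d>1$ is indeed assumed in (C1) of Assumption 4.1.
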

	\begin{proof}
		We first obtain the inner product in $\Ucal$. Since $\Ucal$ is a tensor-product RKHS, the kernel associated with $\Ucal$ is the product of the kernels associated with each $\Gcal^i$. Therefore, given arbitrary two functions $q^1 \kron \ldots \kron q^d$ and $g^1 \kron \ldots \kron g^d$ where each $q^i, g^i \in \Gcal^i$, their inner product under $\Ucal$ is defined as 
		\begin{align}
			\langle q^1 \kron \ldots \kron q^d, g^1 \kron \ldots \kron  g^d\rangle_{\Gcal} = \langle q^1, g^1 \rangle_{\Gcal^1} \cdots \langle q^d, g^d \rangle_{\Gcal^d}.
		\end{align}
		This inner product further extends to the sum of tensor products (\ie the CP format): 
		\begin{align}
			&u = \sum_{r} \bigotimes_{i=1}^d f^i_r, \;\; q = \sum_{l} \bigotimes_{i=1}^d g^i_l \notag \\
			&\left\langle u, q \right\rangle_\Ucal = \sum_{r} \sum_{l} \prod_{i=1}^d \left\langle f^i_r, g^i_l \right\rangle_{\Gcal^i} \label{eq:inner-product}.
		\end{align}
		
		According to~\eqref{eq:inner-product}, we have
		\begin{align}
			\|u\|_\Ucal^2 = \sum_{r=1}^R \sum_{l=1}^R \prod_{i=1}^d \left \langle f^i_r, f^i_l \right \rangle_{\Gcal^i}. 
		\end{align}
		Let us define $R \times R$ matrices $\A^i$ where each element $A^i_{rl} = \langle f^i_r, f^i_l \rangle_{\Gcal^i}$. Then
		\begin{align}
			\|u\|_\Ucal^2 = \sum_{r}\sum_l \prod_{i=1}^d A^i_{rl} = \left\langle \circ_{i=1}^{d-1} \A^i, \A^d \right\rangle_F
		\end{align}
		where $\langle \cdot, \cdot \rangle_F$ is the Frobenius inner product, $\circ$ is the Hadamard (element-wise) product. Leveraging Cauchy-Schwarz inequality under Frobenius inner product, we have 
		\begin{align}
			\|u\|_\Ucal^2  =  \left\langle \circ_{i=1}^{d-1} \A^i, \A^d \right\rangle_F \le \|\circ_{i=1}^{d-1} \A^i\|_F \cdot \|\A^d\|_F.
		\end{align}
		Since in general $\|\A \circ \B\|_F \le \|\A\|_F \cdot \|\B\|_F$, we have
		\begin{align}
			\|\circ_{i=1}^{d-1} \A^i\|_F  \le \prod_{i=1}^{d-1} \|\A^i\|_F,
		\end{align}
		and therefore
		\begin{align}
			\|u\|_\Ucal^2 \le \prod_{i=1}^d \|\A^i\|_F. \label{eq:rkhs-bound-1}
		\end{align}
		For each $\A^i$, we have
		\begin{align}
			\|\A^i\|_F^2 &= \sum_{r=1}^R \sum_{l=1}^R \left\langle f^i_r, f^i_l \right\rangle_{G^{i}}^2 \notag \\
			&\le \sum_{r=1}^R \sum_{l=1}^R \|f^i_r\|_{\Gcal^i}^2 \cdot \|f^i_l\|_{\Gcal^i}^2 \quad\quad (\text{Cauchy-Schwarz Inequality}) \notag \\
			&= \left[\sum_{r=1}^R \|f^i_r\|_{\Gcal^i}^2\right]^2.
		\end{align}
		Combining with~\eqref{eq:rkhs-bound-1}, we obtain 
		\begin{align}
			\|u\|_\Ucal^2 \le  \prod_{i=1}^d \left(\sum_{r=1}^R \|f^i_r\|_{\Gcal^i}^2\right)
		\end{align}
		
		We then leverage the AM–GM inequality (Arithmetic Mean-Geometric Mean inequality): for any $a_1, \ldots, a_n\ge 0$, $\left(a_1\cdots a_n\right)^{1/n}\le \frac{1}{n}(a_1 + \ldots + a_n)$, and so $a_1 \cdots a_n \le \left[\frac{1}{n}(a_1 + \ldots + a_n)\right]^n$. Therefore, we obtain
		\begin{align}
			\|u\|_\Ucal \le \left[\frac{1}{d} \sum_{i=1}^d\sum_{r=1}^{R} \|f^i_r\|^2_{\Gcal^i} \right]^{d/2}.
		\end{align}
	\end{proof}

\section{Proof of Lemma~\ref{th:convergence}}\label{sect:proof-convergence}
\begin{proof}
	The proof consists of the following steps. 
	
	\noindent\textbf{Step 1}. First, we show that given an arbitrarily small $\varepsilon>0$, there exists a rank $R$ and a set of one-dimensional factor functions $\{f^i_r \in \Gcal^i\}_{1\le r \le R} $ in each dimension $i$ ($1 \le i \le d$), such that their combination via CP decomposition~\eqref{eq:cp},  denoted as
	\begin{align}
		\uhat = \sum_{r=1}^R \prod_{i=1}^d \fhat^i_r(x_i), \label{eq:uhat}
	\end{align}
	satisfies
	\begin{align}
		\|\uhat - u^*\|_\Ucal \le \varepsilon. \label{eq:gap} 
	\end{align}
	
	To show this, denote the associated kernel with each $\Gcal^i$ as $\kappa_i$. Since $\Ucal = \Gcal^1 \kron \ldots \kron \Gcal^d$, the kernel inducing $\Ucal$ is therefore $\kappa(\x, \x')=\prod_{i=1}^d \kappa_i(x_i, x'_i)$. Since each $\kappa_i$ is universal, the product kernel $\kappa$ is also universal on the product domain. 
	As a result, if we denote the eigenfunctions of $k_i$ as $\{\phi^i_{j}(\cdot)\}_{j=1}^\infty$ and the eigenvalues as $\{\lambda^i_{j}\}_{j=1}^\infty$ (note that all $\lambda^i_j>0$), then we have $\Phi_{j_1\ldots j_d}(x_1, \ldots, x_d)\coloneqq\phi^1_{j_1}(x_1)\cdots \phi^d_{j_d}(x_d)$ constitute orthonormal bases in $L^2(\Omega)$. We can represent the true PDE solution as  
	\begin{align*}
		u^*(x_1, \ldots, x_d) = \sum_{j_1, \ldots, j_d = 1}^\infty \langle u^*, \Phi_{j_1\ldots j_d}\rangle \cdot\Phi_{j_1\ldots j_d}(x_1, \ldots, x_d),
	\end{align*}
	where the dot product $\langle\cdot, \cdot\rangle$ is defined in $L^2(\Omega)$. Since $\kappa = \kron_{i=1}^d \kappa_i$, $\{\Phi_{j_1\ldots j_d}\}$ and $\{\prod_{i=1}^d\lambda^i_{j_i}\}$ form the eigenfunctions and eigenvalues of $\kappa$, respectively. Because $u^* \in \Ucal$, we have
	\begin{align*}
		\|u^*\|^2_\Ucal\coloneqq\sum_{j_1, \ldots, j_d = 1}^\infty\frac{\langle u^*, \Phi_{j_1\ldots j_d}\rangle^2}{\lambda^1_{j_1}\cdots\lambda^d_{j_d}}<\infty. 
	\end{align*}
	Consequently, for any $\epsilon>0$, there exists a sufficiently large $I(\epsilon)$, such that 
	\begin{align*}
		\|u^*-\sum_{j_1, \ldots, j_d\leq I(\varepsilon)}\langle u, \Phi_{j_1\ldots j_d}\rangle\Phi_{j_1\ldots j_d}\|_\Ucal<\epsilon. 
	\end{align*}
	This truncation can be expressed as 
	\begin{align}
		\sum_{j_1=1}^{I(\varepsilon)} \phi^1_{j_1}(x_1) \sum_{j_2=1}^{I(\varepsilon)} \phi^2_{j_2}(x_2) \ldots \sum_{j_{d-1}=1}^{I(\varepsilon)} \phi^{d-1}_{j_{d-1}}(x_d-1) \left(\sum_{j_d=1}^{I(\varepsilon)}\phi^d_{j_d}(x_d)\langle u^*, \Phi_{j_1\ldots j_d}\rangle\right), \label{eq:truncation}
	\end{align}
	which can be viewed as a CP decomposition form in~\eqref{eq:uhat}, with rank $R = I(\varepsilon)^{d-1}$. 
	We  map each multi-index $(j_1, \cdots, j_{d-1})$ with  $j_i \le I(\varepsilon)$, to an index $r \in  \{1, \cdots, R\}$. For each such $r$, denote the corresponding tuple by $(j_{r_1}, \ldots, j_{r_{d-1}})$. Then we set $\fhat^i_r = \phi^i_{j_{r_i}}$ for $i < d$, and $\fhat^d_r = \sum_{j_d=1}^{I(\varepsilon)}\phi^d_{j_d}(x_d)\langle u^*, \Phi_{j_{r_1}\ldots j_{r_{d-1}} j_d}\rangle$. Clearly, each $\fhat^i_r \in \Gcal^i$, and the approximation $\uhat=\sum_{r=1}^R \prod_{i=1}^d \fhat^i_r(x_i)$ satisfies that $\|\uhat - u^*\|_\Ucal \le \varepsilon$. 
	
	\noindent\textbf{Step 2.} Next, we show that with rank $R$ and an appropriate choice of $\delta$, the optimization problem~\eqref{eq:optimization-formulation} using the CP form is feasible; that is, a solution exists.
	
	Denote by $\Mcal^i$ the projection of the collocation set $\Mcal$ onto the $i$-th coordinate axis: $\Mcal^i = \{ [\x_m]_i :\x_m \in \Mcal \}$, \ie the set of all distinct coordinates of the collocation points along dimension $i$. 
	we first construct a set of intermediate optimization problems. Each problem $\Zcal^i_r$ $(1 \le i \le d, 1 \le r \le R)$ is defined as:
	\begin{align}
		\begin{cases}
			\underset{f^i_r\in\Gcal^i}{\text{minimize}}\;\; \|f^i_r\|_{\Gcal^i}\\
			\text{s.t.  }  f^i_r(x_m) = \fhat^i_r(x_m), \;\; x_m \in \Mcal^i, \label{eq:int-u-star}
		\end{cases}
	\end{align}
	where $\fhat^i_r$ is from the approximation $\uhat$ in~\eqref{eq:uhat}. This is a standard kernel regression problem. Let us denote the minimizer of~\eqref{eq:int-u-star} by $\fhat^{i}_{r\Mcal}$. According to the optimal recovery theorem~\citep{owhadi2019operator}, we have $\fhat^{i}_{r\Mcal}$ takes the kernel interpolation form~\eqref{eq:factor-func}, and 
	\begin{align}
		\|\fhat^{i}_{r\Mcal}\|_{\Gcal^i} \le \|\fhat^{i}_r\|_{\Gcal^i}. \label{eq:component-rkhs-bound}
	\end{align}
	Let us define
	\begin{align}
		\uhat_{\Mcal} =\sum_{r=1}^R \prod_{i=1}^d \fhat^{i}_{r\Mcal}(x_i).
	\end{align}

	Obviously,  $\uhat_\Mcal(\x_m) - \uhat(\x_m) = 0$ for every $\x_m \in \Mcal_\Omega$.  According to the sampling inequality ( Proposition A.1 of~\citep{batlle2023error}), when $h$ is sufficiently small (note that the fill-in distance $h_\Omega \le h$), 
	\begin{align}
		\|\uhat_\Mcal - \uhat\|_{H^s(\Omega)} \lesssim h^\tau \|\uhat_\Mcal - \uhat\|_{H^{s+\tau}(\Omega)}. \label{eq:sampling_ineq1}
	\end{align}
	
	We now consider bounding 
	\begin{align}
		\Lcal = \|\Pcal(\uhat_\Mcal) - \Pcal(u^*)\|_{H^{k}(\Omega)} + \|\Bcal(\uhat_\Mcal)- \Bcal(u^*)\|_{H^{t}(\partial \Omega)}. 
	\end{align}
	Using the norm triangle inequality, we have 
	\begin{align}
		\Lcal \le \|\Pcal(\uhat_\Mcal) - \Pcal(\uhat)\|_{H^{k}(\Omega)} +  \|\Pcal(\uhat) - \Pcal(u^*)\|_{H^{k}(\Omega)} \notag \\
		+ \|\Bcal(\uhat_\Mcal) - \Bcal(\uhat)\|_{H^t(\partial \Omega)} + \|\Bcal(\uhat) - \Bcal(u^*)\|_{H^t(\partial \Omega)} \label{eq:norm-bound}
	\end{align}
	
	Combining~\eqref{eq:norm-bound} with the PDE stability~\eqref{eq:stability-right} in Assumption~\eqref{assump1} and the result~\eqref{eq:sampling_ineq1}, we obtain
	\begin{align}
		\Lcal \lesssim h^\tau \cdot \|\uhat_\Mcal - \uhat\|_{H^{s+\tau}(\Omega)} + \|\uhat - u^*\|_{H^{s}(\Omega)}. \label{eq:bound-int}
	\end{align}
	Since $\Ucal \hookrightarrow H^{s+\tau}(\Omega)$ (C3 of Assumption~\ref{assump1}) and $H^{s+\tau}(\Omega) \hookrightarrow H^s(\Omega)$, we have
	\begin{align}
		\|\uhat_\Mcal - \uhat\|_{H^{s+\tau}(\Omega)} \lesssim \|\uhat_\Mcal - \uhat\|_\Ucal, \quad \|\uhat - u^*\|_{H^{s}(\Omega)} \lesssim \|\uhat - u^*\|_\Ucal. 
	\end{align}
	Combining with~\eqref{eq:bound-int}, we further derive that
	\begin{align}
		&\Lcal \lesssim h^\tau \cdot \|\uhat_\Mcal - \uhat\|_\Ucal + \|\uhat - u^*\|_\Ucal \notag \\
		&\lesssim h^\tau\cdot\|\uhat_\Mcal\|_\Ucal +  h^\tau\cdot\|\uhat\|_\Ucal +  \varepsilon. \quad\quad\left(\text{see }~\eqref{eq:gap}\right)
	\end{align}
	Leveraging the RKHS norm bound~\eqref{eq:rkhs-norm-bound} in Lemma~\ref{th:rkhs-norm-bound}, we obtain that
	\begin{align}
		&\Lcal \le C \left( h^\tau \left(\frac{1}{d}\sum_{i=1}^d\sum_{r=1}^R \|\fhat^i_{r\Mcal}\|^2_{\Gcal^i}\right)^{d/2} +  h^\tau\left(\frac{1}{d}\sum_{i=1}^d\sum_{r=1}^R\|\fhat^i_{r}\|^2_{\Gcal^i}\right)^{d/2} + \varepsilon \right)\notag \\
		&\le  C\left(h^\tau\cdot 2\left(\frac{1}{d}\sum_{i=1}^d\sum_{r=1}^R\|\fhat^i_{r}\|^2_{\Gcal^i}\right)^{d/2} + \varepsilon\right), \quad\quad (\text{according to }~\eqref{eq:component-rkhs-bound}) \label{eq:L-bound}
	\end{align}
	where $C$ is a constant independent of terms on both sides of the inequality. Note that each $\fhat^i_r \in \Gcal^i$ and can be constructed from the eigenfunctions of $\kappa_i$ that is universal ---see~\eqref{eq:truncation} , therefore $\|\fhat^i_r\|_{\Gcal^i}$ is a bounded constant  partly determined by $\varepsilon$.

	Meanwhile, because $H^k(\Omega) \hookrightarrow C^0(\Omega)$ and $H^t(\partial\Omega) \hookrightarrow C^0(\partial \Omega)$, we have 
	\begin{align}
		&\|\Pcal(\uhat_\Mcal) - \Pcal(u^*)\|_{C^0(\Omega)} \lesssim \|\Pcal(\uhat_\Mcal) - \Pcal(u^*)\|_{H^{k}(\Omega)}, \notag \\
		&\|\Bcal(\uhat_\Mcal)- \Bcal(u^*)\|_{C^0(\partial \Omega)} \lesssim
		\|\Bcal(\uhat_\Mcal)- \Bcal(u^*)\|_{H^{t}(\partial \Omega)}. \label{eq:temp4}
	\end{align}
	In addition, at any collocation point $\x_m$, 
	\begin{align}
		\left(\Pcal(\uhat_\Mcal)(\x_m) - \Pcal(u^*)(\x_m)\right)^2 \le \|\Pcal(\uhat_\Mcal) - \Pcal(u^*)\|^2_{C^0(\Omega)}, \notag \\
		\left(\Bcal(\uhat_\Mcal)(\x_m) - \Bcal(u^*)(\x_m)\right)^2 \le \|\Bcal(\uhat_\Mcal) - \Bcal(u^*)\|^2_{C^0(\partial \Omega)}. \label{eq:temp5}
	\end{align}
	Combining~\eqref{eq:temp4}, ~\eqref{eq:temp5} and~\eqref{eq:L-bound}, we therefore obtain that
	\begin{align}
		&\frac{1}{M_\Omega} \sum_{m=1}^{M_\Omega} \left(\Pcal(\uhat_\Mcal)(\x_m) - \Pcal(u^*)(\x_m)\right)^2   
		+\frac{1}{M-M_\Omega}\sum_{m=M_\Omega+1}^M \left(\Bcal(\uhat_\Mcal)(\x_m) - \Bcal(u^*)(\x_m)\right)^2 \notag \\
		&\le \Lcal^2 \le C^2\left(h^\tau\cdot 2\left(\frac{1}{d}\sum_{i=1}^d\sum_{r=1}^R\|\fhat^i_{r}\|^2_{\Gcal^i}\right)^{d/2} + \varepsilon\right)^2.\label{eq:milestone-1}
	\end{align}
	Therefore,  if we set 
	\begin{align}
		\delta = C\left(h^\tau\cdot 2\left(\frac{1}{d}\sum_{i=1}^d\sum_{r=1}^R\|\fhat^i_{r}\|^2_{\Gcal^i}\right)^{d/2} + \varepsilon\right) \label{eq:delta-setting}
	\end{align}
	in~\eqref{eq:optimization-formulation}, $\uhat_\Mcal$ is at least a feasible solution, and the optimization problem~\eqref{eq:optimization-formulation} is feasible.
	
	\noindent \textbf{Step 3.} Let us denote by $u^\dagger$ the solution of problem~\eqref{eq:optimization-formulation}.  We then analyze the error of $u^\dagger$. Using an idea similar to~\citep{xutoward2025}, we define two error functions, 
	\begin{align}
		&\xi_P(\x) = \Pcal(u^\dagger)(\x) - \Pcal(u^*)(\x),\quad \x \in \Omega\notag \\
		& \xi_B(\x) = \Bcal(u^\dagger)(\x) - \Bcal(u^*)(\x), \quad \x \in \partial \Omega.
	\end{align}
	Our goal is to bound the $L^2$ norm of the error functions: $\|\xi_P\|_{H^0(\Omega)}$ and $\|\xi_B\|_{H^0(\partial \Omega)}$. Let us first consider $\xi_P$. To bound $\|\xi_P\|_{H^0(\Omega)}$, we decompose $\Omega$ into a Voronoi diagram according to the collocation points, which results in $M_\Omega$ regular non-overlapping regions, $\Tcal_1 \cup \ldots \cup \Tcal_{M_\Omega} = \Omega$, where each region $\Tcal_i$ only contains one collocation point $\x_i$, and its filled-distance $h_i \lesssim h$ ($1 \le i \le M_\Omega$). Accordingly, we can decompose the squared $L^2$ norm as 
	\begin{align}
		\|\xi_P\|^2_{H^0(\Omega)} = \sum_{i=1}^{M_\Omega} \int_{\Tcal_i} \xi_P(\x)^2\d \x = \sum_{i=1}^{M_\Omega} \|\xi_P\|_{H^0(\Tcal_i)}^2. \label{eq:step2-temp0}
	\end{align}
	Leveraging the fact that  
	\[
	\xi_P(\x)^2 = \left(\xi_P(\x) - \xi_P(\x_i) + \xi_P(\x_i)\right)^2  \le 2  \left(\xi_P(\x) - \xi_P(\x_i)\right)^2 + 2\xi_P(\x_i)^2,
	\]
	we obtain
	\begin{align}
		\|\xi_P\|_{H^0(\Tcal_i)}^2 \lesssim \|\xi_P - \xi_P(\x_i)\|_{H^0(\Tcal_i)}^2 + \lambda(\Tcal_i) \xi_P(\x_i)^2, \label{eq:step2-temp1}
	\end{align}
	where $\lambda(\Tcal_i)$ is the volume of $\Tcal_i$.
	
	The function $\xi_P - \xi_P(\x_i)$ is zero at $\x_i$. Since 
	the aspect ratio of $\Tcal_i$ is bounded, we can apply the sampling inequality --- a.k.a Poincar\'e inequality,
	\begin{align}
		\|\xi_P - \xi_P(\x_i) \|_{H^0(\Tcal_i)} \lesssim h_i^k \|\xi_P - \xi_P(\x_i) \|_{H^k(\Tcal_i)} \lesssim h^k \|\xi_P - \xi_P(\x_i) \|_{H^k(\Tcal_i)}.
	\end{align}
	Applying the mean inequality, 
	\begin{align}
		\|\xi_P - \xi_P(\x_i) \|^2_{H^0(\Tcal_i)}
		\lesssim h^{2k} \left(\|\xi_P\|^2_{H^k(\Tcal_i)} + \|\xi_P(\x_i)\|^2_{H^k(\Tcal_i)}\right) =h^{2k} \left(\|\xi_P\|^2_{H^k(\Tcal_i)} +  \lambda(\Tcal_i)\xi_P(\x_i)^2\right).\label{eq:step2-temp2}
	\end{align}
	Since $\lambda(\Tcal_i) \lesssim h^d$, combining \eqref{eq:step2-temp0}, \eqref{eq:step2-temp1} and \eqref{eq:step2-temp2}, we can obtain
	\begin{align}
		\|\xi_P\|^2_{H^0(\Omega)} &\lesssim h^{2k} \sum_i \|\xi_P  \|^2_{H^k(\Tcal_i)} + (h^d+h^{2k+d}) \sum_i \xi_P(\x_i)^2 \notag \\
		& \lesssim h^{2k}\|\xi_P \|^2_{H^k(\Omega)} +  (h^d+h^{2k+d}) \cdot M_\Omega\cdot \delta^2,  \label{eq:step2-milestone-1}
	\end{align}
	where $\delta^2$ comes from the constraint of \eqref{eq:optimization-formulation}. Using  a similar approach, we can show that 
	\begin{align}
		\|\xi_B\|^2_{H^0(\partial \Omega)} \lesssim h^{2t}\|\xi_B\|^2_{H^t(\partial \Omega)} + (h^d + h^{2t+d}) (M-M_\Omega)\delta^2.
		\label{eq:step2-milestone-2} 
	\end{align}
	
	Combining \eqref{eq:step2-milestone-1} and \eqref{eq:step2-milestone-2},
	\begin{align}
		\left(\|\xi_P\|_{H^0(\Omega)} +\|\xi_B\|_{H^0(\partial \Omega)}\right)^2 \lesssim h^{2 \cdot \rho} \left(\|\xi_P\|_{H^k(\Omega)} + \|\xi_B\|_{H^t(\partial \Omega)}\right)^2 + (h^d + h^{2\rho+d})M\delta^2, \label{eq:step2-temp3}
	\end{align}
	where $\rho = \min(k,t)$. When $h \lesssim M^{-\frac{1}{d}}$ and is sufficiently small, we have $(h^d + h^{2\rho+d})M  \le 1 + h^{2\rho}  \le 2$, and 
	\begin{align}
		\left(\|\xi_P\|_{H^0(\Omega)} +\|\xi_B\|_{H^0(\partial \Omega)}\right)^2 \lesssim h^{2 \cdot \rho} \left(\|\xi_P\|_{H^k(\Omega)} + \|\xi_B\|_{H^t(\partial \Omega)}\right)^2 + 2\delta^2.\label{eq:step2-temp3}
	\end{align}
	Using the PDE stability~\eqref{eq:stability-left} and~\eqref{eq:stability-right}, we obtain
	\begin{align}
		\|u^\dagger - u^*\|_{H^l(\Omega)} \lesssim  h^\rho\|u^\dagger - u^*\|_{H^s(\Omega)} + \delta.
	\end{align}
	Since $\Ucal \hookrightarrow H^{s+\tau}(\Omega) \hookrightarrow H^s(\Omega)$, we further have
	\begin{align}
		&\|u^\dagger - u^*\|_{H^l(\Omega)} \lesssim  h^\rho\|u^\dagger - u^*\|_{\Ucal} + \delta  \notag \\
		& \lesssim h^\rho\|u^\dagger - \uhat\|_{\Ucal} + h^\rho\|\uhat - u^*\|_{\Ucal} + \delta \notag \\
		& \lesssim h^\rho \|u^\dagger - \uhat\|_{\Ucal} + h^\rho \varepsilon+ \delta \notag \\
		& \lesssim h^\rho \|u^\dagger\|_\Ucal + h^\rho \|\uhat\|_\Ucal + h^\rho \varepsilon + \delta \label{eq:temp-end}
	\end{align}
	Denote each factor function in $u^\dagger$ as $f^{i\dagger}_r$. According to the RKHS norm bound~\eqref{eq:rkhs-norm-bound} in Lemma~\ref{th:rkhs-norm-bound}, we have $\|u^\dagger\|_\Ucal \le \left(\frac{1}{d}\sum_{i=1}^d\sum_{r=1}^R \|f^{i\dagger}_r\|_{\Gcal^i}^2\right)^{d/2}$. Since $\uhat_\Mcal$ is a feasible solution to~\eqref{eq:optimization-formulation}, we must have
	\begin{align}
		&\sum_{i=1}^d\sum_{r=1}^R \|f^{i\dagger}_r\|_{\Gcal^i}^2 \le \sum_{i=1}^d\sum_{r=1}^R \|\fhat^{i}_{r\Mcal}\|_{\Gcal^i}^2 \notag \\
		&\le \sum_{i=1}^d\sum_{r=1}^R \|\fhat^i_r\|_{\Gcal^i}^2  \quad\quad (\text{according to }~\eqref{eq:component-rkhs-bound}).  \label{eq:temp-end2}
	\end{align}
	Combining~\eqref{eq:temp-end},~\eqref{eq:delta-setting} and~\eqref{eq:temp-end2}, we obtain
	\begin{align}
		\|u^\dagger - u^*\|_{H^l(\Omega)} &\lesssim (h^\rho + h^\tau)C_0  + ( h^\rho+1) \varepsilon \notag \\
		&\lesssim h^\nu C_0 + (h^\rho+1) \varepsilon, 
	\end{align}
	where $\nu = \min(\rho, \tau) = \min(k, t, \tau)$, and $C_0 = \left(\frac{1}{d}\sum_{i=1}^d\sum_{r=1}^R \|\fhat^i_r\|^2_{\Gcal^i}\right)^{d/2}$.
	Therefore, when $h\rightarrow 0$, 
	\[
	\|u^\dagger - u^*\|_{H^l(\Omega)} \lesssim \varepsilon.
	\]
\end{proof}
\section{Proof of Proposition \ref{th:prop}}\label{sect:prop}
We first construct the Lagrange function. The constraint optimization problem~\eqref{eq:optimization-formulation} is equivalent to the mini-max optimization problem over the Lagrange function, 
\begin{align}
	\min_{\{f^i_r\in\Gcal^i\}} \max_{\beta\ge 0}   \sum_{i=1}^d\sum_{r=1}^R \|f^i_r\|^2 + \beta &\Bigg[ \frac{1}{M_\Omega} \sum_{m=1}^{M_\Omega} \left(\Pcal(u)(\x_m) - a(\x_m)\right)^2 - \frac{\delta^2}{2}\notag \\   
	&+\frac{1}{M-M_\Omega}\sum_{m=M_\Omega+1}^M \left(\Bcal(u)(\x_m) - b(\x_m)\right)^2 - \frac{\delta^2}{2}
	\Bigg]. \label{eq:mini-max}
\end{align}
Suppose the feasible region is non-empty. Let us denote the optimum of \eqref{eq:mini-max} as $(\{{f^i_r}^\dagger\}, \beta^\dagger)$. Then $\{{f^i_r}^\dagger\}$ is a minimizer of \eqref{eq:optimization-formulation}. If we now set $\alpha_1 = \alpha_2 = \beta^\dagger$ in \eqref{eq:our-loss}, then optimizing $\eqref{eq:our-loss}$ will recover the minimizer $\{{f^i_r}^\dagger\}$.
\begin{table}[t]
	\caption {\small Relative $L^2$ error of conventional numerical solvers and \ours according to the ground-truth solution. }\label{tb:comparison-finite-diff}
	\small
	\centering
	\begin{subtable}{\textwidth}
		\caption{\small Nonlinear elliptic PDE \eqref{eq:Nonlinear Elliptic}  } \label{tb:elliptic-simple-conventional}
		\centering
			\begin{tabular}[c]{ccccc}
				\toprule
				\textit{Method} & $18 \times 18$ & $25 \times 25$ & $35 \times 35$ & $49 \times 49$\\
				\hline
				FEM & 1.68E-02 & 8.61E-03 & 4.35E-03 & 2.20E-03 \\
				FD & 3.02E-02 & 1.60E-02 & 8.32E-03 & 4.30E-03 \\
				\ours-PF & \textbf{1.97E-06} & \textbf{2.82E-07} & \textbf{1.28E-07} & \textbf{4.04E-08}\\
				\ours-NT & \textbf{1.78E-06} & \textbf{3.52E-07} & \textbf{1.74E-07} & \textbf{4.06E-08}\\
				\bottomrule
			\end{tabular}
	\end{subtable}
	\begin{subtable}{\textwidth}
		\caption{\small The 2D Allen-Cahn equation~\eqref{eq:allen-cahn} with $a=15$.}
		\centering
			\begin{tabular}[c]{cccccc}
				\toprule
				\textit{Method} & $80 \times 80$ & $90 \times 90$ & $150 \times 150$ & $200 \times 200$\\
				\hline
				FEM & 3.32E-02 & 2.62E-02 & 9.65E-03 & 5.42E-03 \\
				FD & 1.21E-01 & 9.45E-02 & 3.30E-02 & 1.84E-02 \\
				\ours-PF & \textbf{6.00E-06} & \textbf{1.21E-06} & \textbf{4.87E-06} & \textbf{1.43E-06}\\
				\ours-NT & \textbf{3.99E-06} & \textbf{1.28E-06} & \textbf{1.70E-06} & \textbf{1.76E-06}\\
				\bottomrule
			\end{tabular}
	\end{subtable}
	\begin{subtable}{\textwidth}
		\caption{\small The 2D Allen-Cahn equation~\eqref{eq:allen-cahn} with $a=20$.}
		\centering
			\begin{tabular}[c]{cccccc}
				\toprule
				\textit{Method} & $80 \times 80$ & $90 \times 90$ & $150 \times 150$ & $200 \times 200$\\
				\hline
				FEM & 5.94E-02 & 4.66E-02 & 1.71E-02 & 9.62E-03 \\
				FD & 2.29E-01 & 1.75E-01 & 5.97E-02 & 3.31E-02 \\
				\ours-PF & \textbf{8.50E-06} & \textbf{8.47E-06} & \textbf{5.90E-06} & \textbf{5.14E-06}\\
				\ours-NT & \textbf{9.03E-06} & \textbf{7.42E-06} & \textbf{5.79E-06} & \textbf{4.80E-06}\\
				\bottomrule
			\end{tabular}
	\end{subtable}
\end{table}
\section{Numerical Solvers}\label{sect:numerical-sover}
The P2 Galerkin finite element method (FEM) is implemented using the MATLAB PDE Toolbox\footnote{\url{https://www.mathworks.com/products/pde.html}} with high-order quadratic Lagrange elements and a multi-level mesh refinement strategy. Specifically, we utilized quadratic (P2) Lagrange elements on triangular meshes, which provide third-order convergence rate for smooth solutions. Each element contains nodes at vertices and edge midpoints. The multi-level mesh hierarchy is generated via progressive refinement.  For the nonlinear elliptic PDE, we have  $h_{\max} \in \{0.055, 0.04, 0.0286, 0.0204, 0.01\}$, and for the Allen-Cahn equations ($a=15$ and $a=20$), we have $h_{\max} \in \{0.04, 0.0286, 0.0204, 0.0143\}$. We used the weak formulation to construct and assemble the stiffness matrix, mass matrix, and loading vectors. The resulting nonlinear discretized system was solved using Newton iterations combined with an Armijo line search to guarantee convergence.

The finite difference (FD) scheme discretized each PDE using centered second-order finite differences. The resulting nonlinear system was solved with a Newton-Krylov method, where the inverse of the Jacobian was computed using iterative Krylov subspace techniques.
\section{Point-Wise Error}\label{sect:point-wise}
For a fine-grained evaluation, we examined the point-wise errors of \ours, \sks, and PINN when solving Burgers' equation~\eqref{eq:burgers} with $\nu=0.001$ and the 2D Allen-Cahn equation~\eqref{eq:allen-cahn} with $a=20$ and $d=2$. The number of collocation points was varied from ${2400, 28\text{K}}$ for Burgers' equation and from ${600, 2400, 4800, 8100}$ for the Allen-Cahn equation. The point-wise absolute errors are shown in Figure~\ref{fig:point-wise-error}.

As expected, when the number of collocation points is small, all methods incur larger errors across the domain --- for instance, using 2400 points for Burgers' equation or 600 points for the Allen-Cahn equation. Notably, the error of PINN with 2400 collocation points on Burgers' equation was so large that we excluded its error plot in this case. Increasing the number of collocation points substantially improves performance for all methods. Nevertheless, our approach consistently produces smaller errors across the domain. For example, in Figure~\ref{fig:point-wise-2dallen-cahn}, when solving the 2D Allen-Cahn equation with 2400 collocation points, \sks exhibits large errors throughout the domain, while both \ours-PF and \ours-NT confine relatively larger errors only near the boundary.

These results demonstrate that our method not only improves global solution accuracy but also achieves lower local error.

\begin{table}[t]
	\caption {\small Relative $L^2$ error of solving PDEs on \textit{irregularly shaped}  domains.} \label{tb:irr-sample}
	\small
	\centering
	\begin{subtable}{\textwidth}
		\centering
		\caption{\small 2D Allen-Cahn equation \eqref{eq:allen-cahn}: $a=15, d=2$. } \label{tb:allen-cahn-15-few}
		\begin{tabular}[c]{ccccc}
			\toprule
			\textit{Shape} & PINN & SKS  & \ours-PF & \ours-NT \\
			\hline
			Triangle & 4.97E0 & 4.08E-03 & 2.85E-06 & 4.80E-06 \\
			Circle & 7.48E0 & 4.50E-03 & 1.72E-06 & 1.57E-06 \\
			\bottomrule
		\end{tabular}
		\vspace{0.1in}
	\end{subtable}
	\begin{subtable}{\textwidth}
		\centering
		\caption{\small Nonlinear elliptic PDE \eqref{eq:Nonlinear Elliptic}. } \label{tb:allen-cahn-20-few}
		\begin{tabular}[c]{ccccc}
			\toprule
			\textit{Shape} & PINN & SKS  & \ours-PF & \ours-NT \\
			\hline
			Triangle & 4.10E-4 & 3.27E-04 & 3.98E-08 & 4.29E-08 \\
			Circle & 2.66E-04 & 4.47E-04 & 3.68E-08 & 3.31E-08 \\
			\bottomrule
		\end{tabular}
	\end{subtable}
\end{table}
\begin{figure*}
	\centering
	\setlength{\tabcolsep}{0pt}
	\begin{subfigure}[b]{0.9\linewidth}
		\includegraphics[width=\linewidth]{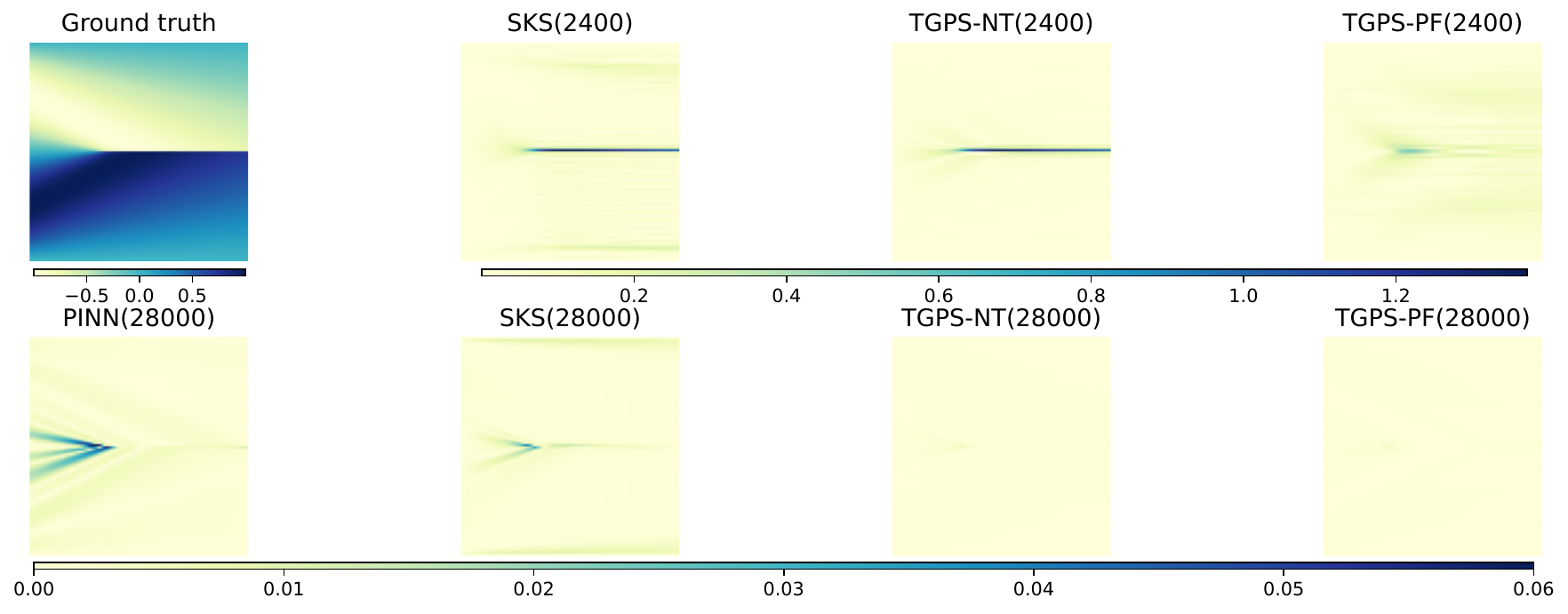}
		\caption{\small Burger's equation~\eqref{eq:burgers} with $\nu = 0.001$.}
		\label{fig:point-wise-burgers0001}
	\end{subfigure} 	
	\begin{subfigure}[b]{0.9\linewidth}
		\includegraphics[width=\linewidth]{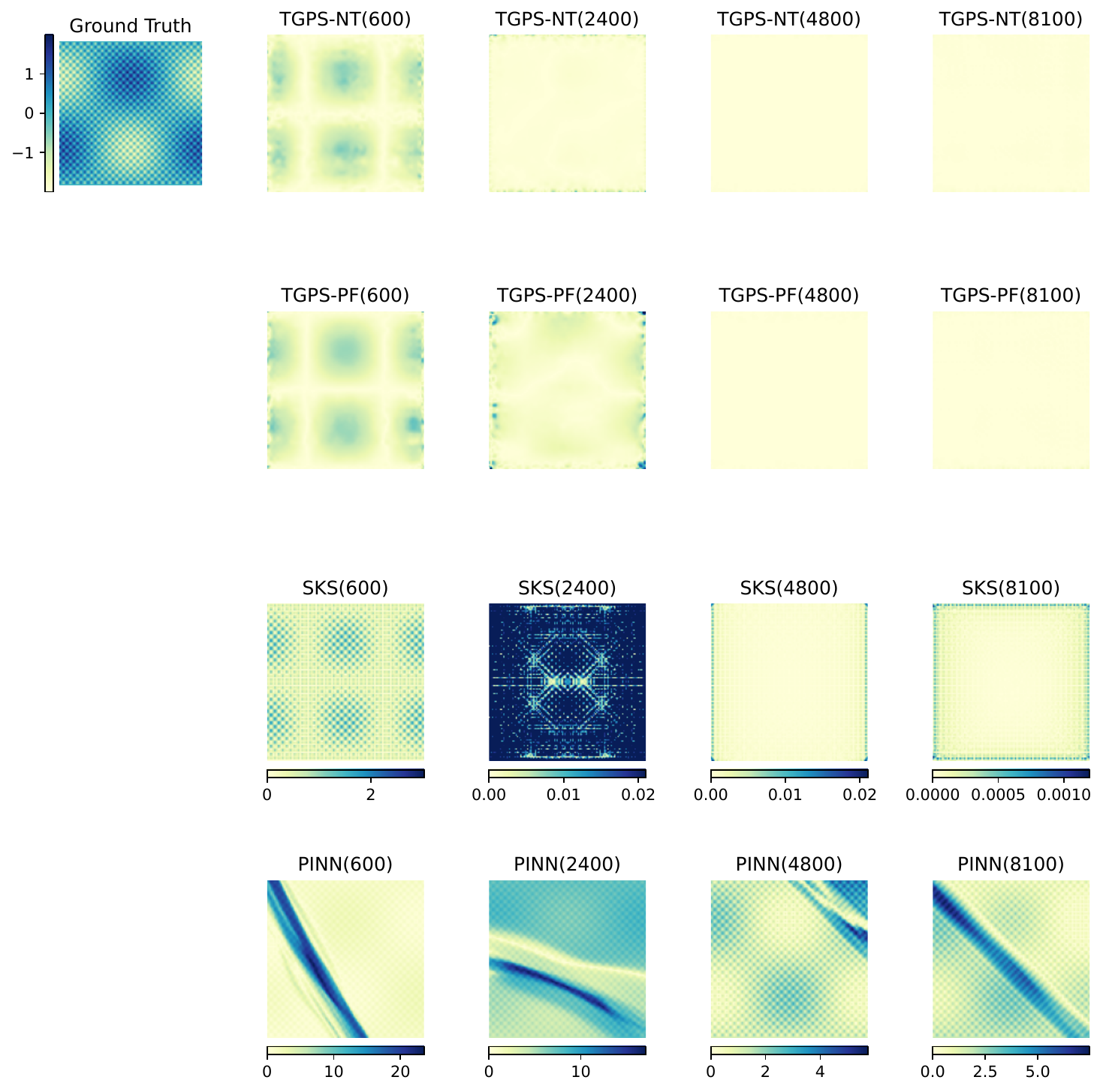}
		\caption{\small 2D Allen-Cahn~\eqref{eq:allen-cahn} ($a=20, d=2$). }
		\label{fig:point-wise-2dallen-cahn}
	\end{subfigure} 
	\caption{\small Point-wise error. Inside each parenthesis is the number of collocation points.  }\label{fig:point-wise-error}
\end{figure*}
\section{Irregular Domains}\label{sect:irregular}
We evaluated performance on the nonlinear elliptic PDE~\eqref{eq:Nonlinear Elliptic} and the 2D Allen-Cahn equation~\eqref{eq:allen-cahn} ($a=15, d=2$) over two irregular domains: (i) an inscribed circle within $[0,1]\times[0,1]$, and (ii) a triangle with vertices at $(0,0)$, $(1,0)$, and $(0.5,1)$. The reference solutions follow Section~\ref{sect:pde-benchmark}, with boundary conditions derived accordingly. Competing baselines included \sks and PINN, all tested with 10K collocation points. For fairness, each method also used the same 396 uniformly sampled boundary points.
Since \sks is restricted to regular domains, we embedded each irregular domain into a $100 \times 100$ regularly-spaced virtual grid over $[0,1]^2$. In contrast, \ours and PINN directly operated on (the same set of) 10K randomly sampled collocation points from the irregular domains (including the 396 boundary points). 

The relative $L^2$ errors are summarized in Table~\ref{tb:irr-sample}. As shown, PINN again failed on the Allen-Cahn equation, yielding a relative $L^2$ error larger than one, likely due to the spectral bias. While \sks remained reasonably accurate, its errors deteriorated by several orders of magnitude compared to regular domains. For example, when solving the nonlinear elliptic PDE, \sks achieved errors on the order of $10^{-6}$ with a $49 \times 49$ grid on $[0,1]^2$ (see Table~\ref{tb:elliptic-simple} in the main paper), but errors increased to $10^{-4}$ on the circle and triangle domains even with a denser $100 \times 100$ (virtual) grid. Similarly, for the 2D Allen-Cahn equation, \sks reached $10^{-6}$ error on the rectangular domain (see Table~\ref{tb:allen-cahn-15} in the main paper), but only $10^{-3}$ on the irregular domains.

By contrast, \ours consistently attained errors on the order of $10^{-8}$ (elliptic PDE) and $10^{-6}$ (Allen–Cahn) on both irregular domains --- matching its performance on regular domains. Notably, PINN also maintained its error level on the elliptic PDE. Overall, these results highlight the robustness of our mesh-free solver: its accuracy remains stable regardless of domain geometry. The pointwise error plots in Figure~\ref{fig:Irregular-Nonlinear elliptic} and ~\ref{fig:Irregular-2D-Allen-Cahen} further corroborate this conclusion.

\begin{figure*}
	\centering
	\setlength{\tabcolsep}{0pt}
	\begin{subfigure}[b]{1.0\linewidth}
		\includegraphics[width=\linewidth]{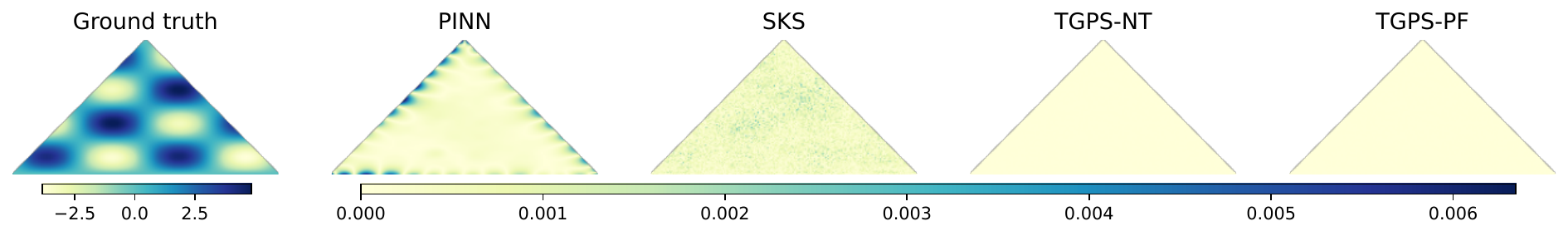}
		\caption{\small \textit{Triangle domain}.}
		\label{fig:triangle-non-elliptic}
	\end{subfigure} 
	\begin{subfigure}[b]{1.0\linewidth}
		\includegraphics[width=\linewidth]{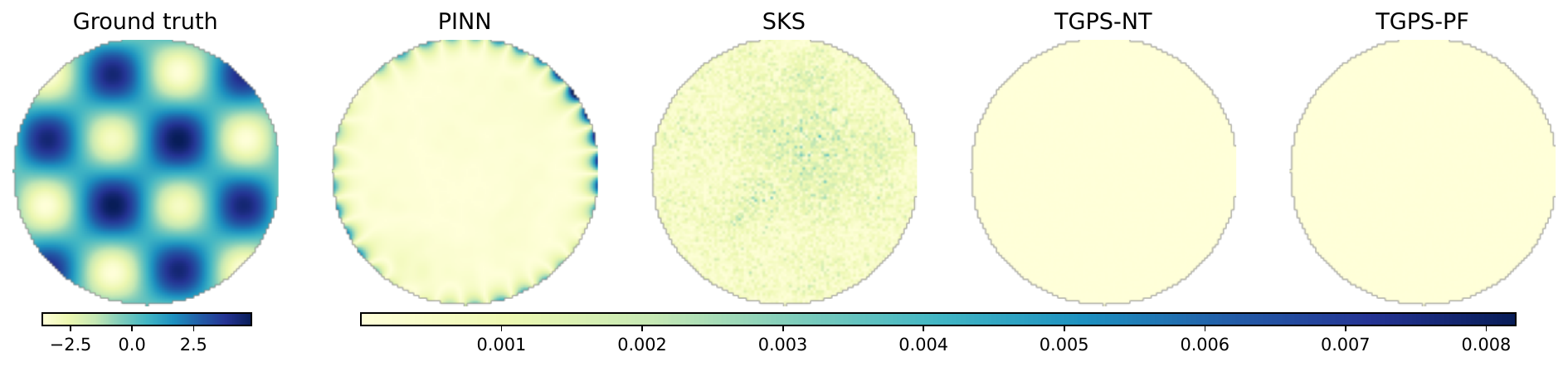}
		\caption{\small \textit{Circle domain}.}
		\label{fig:circle-non-elliptic}
	\end{subfigure} 	
	\caption{\small Solving the nonlinear elliptic PDE on irregular domains. The first column shows the ground-truth while the remaining columns the point-wise error of each method. }\label{fig:Irregular-Nonlinear elliptic}
\end{figure*}

\begin{figure*}
	\centering
	\setlength{\tabcolsep}{0pt}
		\begin{subfigure}[b]{1.0\linewidth}
			\includegraphics[width=\linewidth]{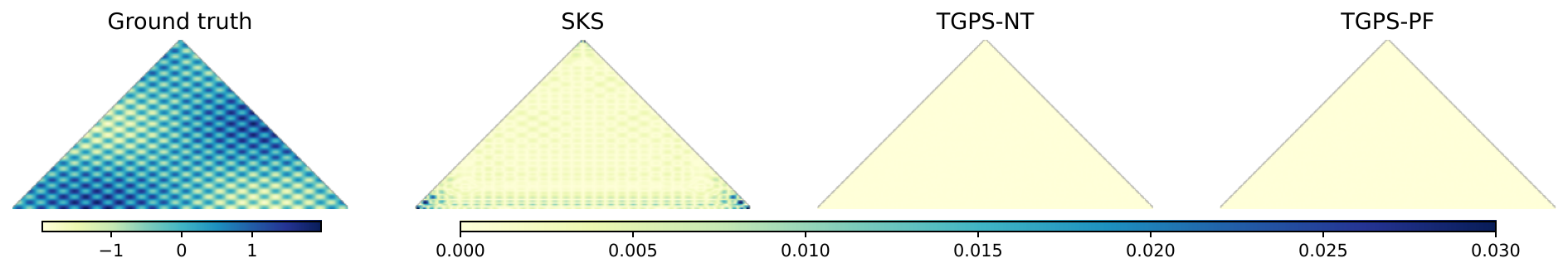}
			\caption{\small \textit{Triangle domain}.}
			\label{fig:triangle-2D-allen-cahen}
		\end{subfigure} 
		\begin{subfigure}[b]{1.0\linewidth}
			\includegraphics[width=\linewidth]{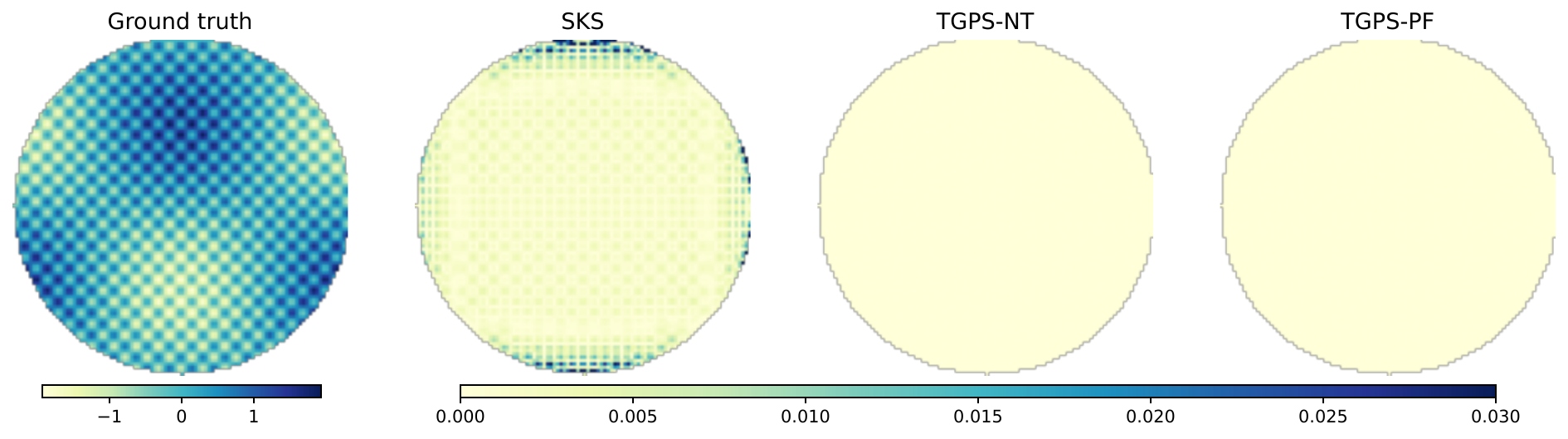}
			\caption{\small \textit{Circle domain}.}
			\label{fig:circle-2D-allen-cahen}
		\end{subfigure} 	
	\caption{\small Solving 2D Allen-Cahn equation ($a=15$) on irregular domains. The first column shows the ground-truth while the remaining columns the point-wise error of each method. }\label{fig:Irregular-2D-Allen-Cahen}
\end{figure*}

\section{Ablation Studies}\label{sect:ablation}
Furthermore, we conducted ablation studies to evaluate the influence of two important types of hyperparameters in our model: kernel parameters and the number of factor functions (i.e., rank). For this purpose, we employed Burgers' equation~\eqref{eq:burgers} with $\nu=0.02$ and the 2D Allen-Cahn equation~\eqref{eq:allen-cahn} with $a=15$. 

\noindent\textbf{Kernel Hyperparameters.}
We first examined the effect of kernel length-scale parameters. For Burgers' equation ($\nu=0.02$), we fixed the spatial length-scale to 0.04 and varied the temporal length-scale over $\{0.001, 0.01, 0.5, 1.0, 2.0\}$, keeping the number of factor functions consistent with our main experiments (Table~\ref{tb:easy-small}). As shown in Table~\ref{tb:burgers0.02-ls1}, both \ours-PF and \ours-NT are highly sensitive to the spatial length-scale, achieving the lowest relative $L^2$ error when it is set to 0.5. Deviations in either direction caused orders-of-magnitude error growth.

Next, we fixed the temporal length-scale at 0.2 and varied the spatial length-scale over $\{0.001, 0.01, 0.1, 0.5, 1.0\}$. The results (Table~\ref{tb:burgers0.02-ls2}) reveal the same pattern: optimal performance occurs for intermediate values, while smaller or larger scales lead to substantial degradation. Finally, we tested our method on the 2D Allen-Cahn equation with identical length-scales across both spatial dimensions, varying the parameter over $\{0.001, 0.01, 0.05, 0.1, 0.2\}$. As shown in Table~\ref{tb:allen-cahn-15-ls}, the smallest error arises at 0.05, with larger or smaller values again producing error increases by orders of magnitude. Collectively, these results underscore the critical role of length-scale parameters in determining model performance.

From a theoretical standpoint, the kernel and its hyperparameters determine the GP prior and thus the associated RKHS. The closer this RKHS matches the regularity class and characteristic scales of the true PDE solution, the better the approximation quality and stability. Hyperparameter tuning can therefore be viewed as selecting an RKHS whose inductive bias is well aligned with the target solution. While there is no universal rule for kernel and hyperparameter selection, we have found several useful heuristics that guide kernel and length-scale choices:

\begin{itemize}
	\item Smooth solutions (\eg moderate-viscosity Burgers', elliptic PDE): Gaussian kernels generally perform well.
	\item Solutions with sharp gradients or low regularity (\eg near-shocks in Burgers’ with small viscosity): less smooth kernels such as Mat\'ern-2/3 are often more appropriate. 
	\item Higher-frequency structure: Smaller length-scales help capture oscillatory behavior.
\end{itemize}
For instance, in Burgers' equation, viscosity $0.02$ was well modeled using a Gaussian kernel in space, whereas viscosity $0.001$ required switching to Mat\'ern-2/3. For the nonlinear elliptic PDE and Allen-Cahn equation, Gaussian kernels remained effective but required different length-scales (\eg $0.1$ for the elliptic PDE \textit{vs.} $0.04$ for Allen–Cahn with  $a=15$ ), consistent with their different effective frequency content.

\noindent\textbf{Number of Factor Functions.}
We next evaluated the effect of the number of factor functions (rank). With length-scale parameters fixed as in Table~\ref{tb:easy-small}, we varied the rank over $\{3, 5, 10, 20\}$. Experiments were conducted with 600 and 2400 collocation points for the Burgers' equation, and with 4800 and 22.5K points for the 2D Allen-Cahn equation. As reported in Tables~\ref{tb:burgers0.02-rank} and \ref{tb:allen-cahn-15-rank}, rank 10 consistently yielded the best performance. Smaller ranks (3 or 5) reduced expressivity and resulted in errors one to two orders of magnitude larger. Increasing the rank to 20 offered no further gain and, in some cases (\eg \ours-NT with 600 collocation points), worsened performance. A similar trend was observed for the Allen-Cahn equation, although its performance was somewhat more robust to rank variations.

Overall, these studies demonstrate that both kernel length-scales and rank are crucial hyperparameters. Too small a rank limits model expressivity, degrading accuracy, while excessively large ranks increase computational and optimization burdens without clear benefits.

\begin{table}
	\caption {\small Relative $L^2$ error of \ours with different length-scales.} \label{tb:length-scale-study}
	\small
	\centering
	\begin{subtable}{\textwidth}
		\centering
		\caption{\small Solving Burgers' equation \eqref{eq:burgers} with viscosity $\nu=0.02$. The number of collocation point is 2400 and the spatial length-scale is fixed to 0.04. } \label{tb:burgers0.02-ls1}
		\begin{tabular}[c]{cccccc}
			\toprule
			Temporal length-scale & 0.001 & 0.01  & 0.5 & 1.0 &  2.0 \\
			\hline
			\ours-PF & 9.98E-01 & 8.60E-01 & 3.93E-03 & 1.17E-02 & 2.17E-02 \\
			\ours-NT & 1.01E-0 & 9.54E-01 & 3.80E-03 & 1.25E-02 & 2.42E-02 \\
			\bottomrule
		\end{tabular}
	\vspace{0.1in}
	\end{subtable}

	\begin{subtable}{\textwidth}
		\centering
		\caption{\small Solving Burgers' equation with viscosity $\nu=0.02$. The number of collocation point is 2400 and the temporal length-scale is fixed to 0.2.} \label{tb:burgers0.02-ls2}
		\begin{tabular}[c]{cccccc}
			\toprule
			Spatial length-scale & 0.001 & 0.01  & 0.1 & 0.5 &  1.0 \\
			\hline
			\ours-PF & 1.00E0 & 9.66E-01 & 1.32E-02 & 1.86E-02 & 2.73E-01 \\
			\ours-NT & 1.44E0 & 9.23E-01 & 4.07E-02 & 2.56E-01 & 3.23E-01 \\
			\bottomrule
		\end{tabular}
		\vspace{0.1in}
	\end{subtable}
	\begin{subtable}{\textwidth}
		\centering
		\caption{\small Solving 2D Allen-Cahn equation~\eqref{eq:allen-cahn} with $a=15$. The number of collocation points is 4800. The length-scales are the same for both spatial dimensions. } \label{tb:allen-cahn-15-ls}
		\begin{tabular}[c]{cccccc}
			\toprule
			Length-scale & 0.001 & 0.01 & 0.05 & 0.1 & 0.2 \\
			\hline
			\ours-PF & 1.00E0 & 1.04E0 & 8.69E-04 & 9.99E-01 & 1.25E0 \\
			\ours-NT & 1.00E0 & 1.04E0 & 5.79E-04 & 4.01E0 & 7.70E0 \\
			\bottomrule
		\end{tabular}
	\end{subtable}
\end{table}
\begin{table}
	\caption {\small Relative $L^2$ error of \ours with different numbers of factor functions (rank).} \label{tb:cp-rank-study}
	\small
	\centering
	\begin{subtable}{\textwidth}
		\centering
		\caption{\small Solving Burgers' equation \eqref{eq:burgers} with viscosity $\nu=0.02$.  Inside parentheses are the number of collocation points.   }\label{tb:burgers0.02-rank}
		\begin{tabular}[c]{ccccc}
			\toprule
			Rank & 3 & 5  & 10 & 20 \\
			\hline
			\ours-PF (600) & 2.59E-01 & 1.47E-02 & 7.03E-03 & 5.54E-03  \\
			\ours-NT (600) & 3.03E-01 & 8.40E-01 & 8.42E-03 & 1.14E-02  \\
			\ours-PF (2400) & 1.64E-02 & 1.04E-03 & 1.85E-04 &  2.43E-04 \\
			\ours-NT (2400) & 1.97E-02  & 9.82E-04 & 2.37E-04 & 4.06E-04  \\
			\bottomrule
		\end{tabular}
		\vspace{0.1in}
	\end{subtable}
	\begin{subtable}{\textwidth}
		\centering
		\caption{\small Solving 2D Allen-Cahn equation \eqref{eq:allen-cahn}: $a=15, d=2$. } \label{tb:allen-cahn-15-rank}
		\begin{tabular}[c]{ccccc}
			\toprule
			Rank & 3 & 5 & 10 & 20 \\
			\hline
			\ours-PF (4800) & 1.61E-05 & 1.23E-05 & 6.52E-06 & 1.01E-05 \\
			\ours-NT(4800) & 8.92E-06 & 6.50E-06 & 5.87E-06 & 9.66E-06 \\
			\ours-PF (22500) & 1.63E-05 & 7.10E-06 & 5.51E-06 &  9.22E-06\\
			\ours-NT (22500) & 1.57E-05 & 5.27E-06 & 2.21E-06 & 7.34E-06 \\
			\bottomrule
		\end{tabular}
	\end{subtable}
\end{table}

\section{Limitation}
While the partial freezing strategy and Newton’s method allow us to derive closed-form ALS updates, they also make the update process effectively behave like a sequence of fixed-point iterations. A well-known limitation of such iterations is their sensitivity to initialization: if the starting point is poorly chosen, the iterations may diverge instead of converging. To mitigate this risk, in future work we plan to design additional regularization techniques that explicitly incorporate parameter estimates from earlier iterations into the update rules. This would provide a stabilizing effect, improving both the robustness and the reliability of our method.

\begin{table}[t]
	\caption {\small Relative $L^2$ error of solving \textit{higher dimensional} PDEs.}\label{tb:high-dim}
	\small
	\centering
	\begin{subtable}{\textwidth}
		\caption{\small 4D Allen-Cahn equation \eqref{eq:allen-cahn}: $a=15, d= 4$. } \label{tb:4dallen-cahn}
		\centering
		\begin{tabular}[c]{ccccc}
			\toprule
			\textit{Method} & 8000 \cmt{1600} & 16000 \cmt{3200} & 32000 \cmt{6400} & 48000 \cmt{9600}\\
			\hline
			PINN & 8.01E-01 & 7.87E-01 & 7.68E-01 & 7.62E-01 \\
			SKS & 9.85E-01 & 9.91E-01 & 9.93E-01 & 7.50E-01 \\
			\ours-CP-PF & \textbf{4.00E-04} & 1.10E-04 & \textbf{2.76E-05} & 1.65E-05 \\
			\ours-CP-NT & \textbf{3.93E-04} & \textbf{9.80E-05} & \textbf{1.87E-05} & 1.66E-05 \\
			\ours-TR-PF & 5.50E-01 & \textbf{5.44E-05} & \textbf{2.59E-05} & 1.09E-05 \\
			\ours-TR-NT & 5.65E-01 & \textbf{7.48E-05} & \textbf{1.85E-05} & \textbf{7.47E-06} \\
			\bottomrule
		\end{tabular}
	\end{subtable}
	\begin{subtable}{\textwidth}
		\caption{\small 6D Allen-Cahn equation \eqref{eq:allen-cahn}: $a=15, d= 6$.} \label{tb:6dallen-cahn}
		\centering
		\begin{tabular}[c]{ccccc}
			\toprule
			\textit{Method} & 16000 \cmt{(3200)} & 32000 \cmt{(6400)} & 48000 \cmt{(9600)} & 96000 \cmt{(19200)}\\
			\hline
			PINN & 6.79E-01 & 6.10E-01 & 8.66E-01 & 6.12E-01\\
			\ours-CP-PF & 6.62E-01 & 4.79E-04 & 3.50E-04 & 8.34E-05 \\
			\ours-CP-NT & 6.62E-01 & 4.98E-04 & 3.00E-04 & 7.75E-05 \\
			\ours-TR-PF & \textbf{1.50E-02} & \textbf{4.23E-05} & \textbf{3.39E-05} & \textbf{1.11E-05} \\
			\ours-TR-NT & 7.52E-01 & \textbf{4.62E-05} & \textbf{4.48E-05} & \textbf{1.12E-05} \\
			\bottomrule
		\end{tabular}
	\end{subtable}
	\begin{subtable}{\textwidth}
		\caption{\small 6D Nonlinear Darcy flow equation \eqref{eq:darcy-flow}. } \label{tb:6ddarcy}
		\centering
		\begin{tabular}[c]{cccccc}
			\toprule
			\textit{Method} & 1000 \cmt{(200)} & 2000 \cmt{(400)} & 4000 \cmt{(800)} & 8000 \cmt{(1600)} & 16000 \cmt{(3200)} \\
			\hline
			PINN & \textbf{1.04E-02} & \textbf{3.65E-02} & 7.34E-02 & 8.87E-02 & 1.22E-02 \\
			DAKS & 3.87E0 & 3.81E0 & NA & NA & NA\\
			\ours-CP-PF & 5.24E-01 & 3.02E-01 & 1.67E-01 & 2.96E-02 & \textbf{6.59E-03} \\
			\ours-CP-NT & 5.19E-01 & 3.70E-01 & 1.71E-01 & 2.42E-02 & 1.31E-02\\
			\ours-TR-PF & 5.97E-01 & \textbf{5.15E-02} & \textbf{2.67E-02} & \textbf{7.53E-03} & \textbf{5.48E-03} \\
			\ours-TR-NT & 5.75E-01 & \textbf{5.39E-02} & \textbf{1.48E-02} & \textbf{5.31E-03} & \textbf{4.02E-03} \\
			\bottomrule
		\end{tabular}
	\end{subtable}
\end{table}


\end{document}